\documentclass[11pt]{article}

\usepackage[margin=1in]{geometry}
\usepackage[T1]{fontenc}
\usepackage[utf8]{inputenc}
\usepackage{amsmath,amsthm,mathtools}
\usepackage{newtxtext,newtxmath}
\usepackage{bm}
\usepackage{microtype}
\usepackage{booktabs,tabularx,array,multirow}
\usepackage{graphicx}
\usepackage{tikz}
\usetikzlibrary{arrows.meta,positioning,fit,backgrounds}
\usepackage{algorithm}
\usepackage[noend]{algpseudocode}
\usepackage{placeins}

\usepackage{needspace}
\usepackage[round,authoryear]{natbib}
\usepackage{xcolor}
\usepackage{url}
\usepackage{hyperref}

\definecolor{wmnavy}{RGB}{24,57,84}
\definecolor{wmteal}{RGB}{24,122,110}
\definecolor{wmorange}{RGB}{184,91,35}
\definecolor{wmlight}{RGB}{215,230,238}
\tikzset{
  latentnode/.style={circle,draw=wmnavy,thick,minimum size=8mm,inner sep=1pt,fill=white},
  obsnode/.style={circle,draw=wmnavy,thick,minimum size=8mm,inner sep=1pt,fill=wmlight},
  detnode/.style={rectangle,draw=wmnavy,thick,minimum size=8mm,inner sep=1pt,fill=white},
  priornode/.style={rectangle,draw=wmteal,thick,rounded corners=1pt,minimum width=12mm,minimum height=6mm,fill=white},
  platenode/.style={draw=wmteal,thick,rounded corners=2pt,inner sep=4mm},
  gmarrow/.style={-{Latex[length=2mm]},thick,draw=wmnavy}
}
\hypersetup{
  colorlinks=true,
  linkcolor=wmnavy,
  citecolor=wmteal,
  urlcolor=wmorange,
  pdftitle={Predictive Likelihood Ratios for Language Model Watermark Detection},
  pdfauthor={Li Ma},
  pdfsubject={Technical Report}
}
\graphicspath{{figures/}}

\newtheorem{theorem}{Theorem}
\newtheorem{proposition}{Proposition}

\newtheorem{remark}{Remark}

\newcommand{\E}{\mathbb E}
\newcommand{\Prb}{\mathbb P}

\newcommand{\F}{\mathcal F}
\newcommand{\V}{\mathcal V}
\newcommand{\ind}{\mathbf 1}
\newcommand{\bP}{\bm p}

\newcommand{\LSE}{\operatorname{LSE}}
\newcommand{\logit}{\operatorname{logit}}
\newcommand{\Unif}{\operatorname{Unif}}
\newcommand{\mcse}[2]{\mbox{#1\,{\fontsize{7}{8}\selectfont(#2)}}}
\newcommand{\bestmcse}[2]{\mbox{\textbf{#1}\,{\fontsize{7}{8}\selectfont(#2)}}}

\title{\textbf{Predictive Likelihood Ratios\\for Language Model Watermark Detection}\\[0.5em]
  {\large Technical Report}}
\newcommand{\AuthorAffiliation}{Department of Statistics and Data Science Institute, University of Chicago}
\newcommand{\AuthorEmail}{li.ma@uchicago.edu}
\author{Li Ma\\
  {\normalsize\AuthorAffiliation}\\
  {\small\href{mailto:\AuthorEmail}{\texttt{\AuthorEmail}}}}
\date{September 12, 2026}

\begin{document}
\maketitle

\begin{abstract}
Keyed watermark detection tests dependence between observed tokens and
pseudorandom variables reconstructed from a secret key.  Building on the pivotal
framework of \citet{li2025framework}, we construct predictive likelihood ratios
that average over uncertain probability deficits and residual-tail distributions.
The aim is robust detection power across alternative specifications without
requiring a single signal-strength tuning.  A mixture prior combines tail shape
and effective width; hierarchical extensions allow within-document variation in
deficit or width.  The test maximizes prior-averaged power at a fixed size, but
is not generally uniformly most powerful or minimax.  Under the exact conditional
pivot null, normalized predictive alternatives selected before each observation
yield a Bayes factor that is also a test martingale: Type~I error control is unaffected by alternative
misspecification and remains valid under optional stopping.  This guarantee
does not cover violations of the conditional null, and the interpolated
implementation has no certified anytime guarantee.  Gumbel marginal likelihoods
are evaluated by fixed quadrature.  Across the evaluated tail-shape and
tail-width alternatives and three horizons, the union-tail mixture has maximum
observed Type~II error regret $.0080$, compared with $.0962$ for the equal-tail
mixture, relative to the best tested rule.  On temperature-matched outputs from
two open models, it improves AUC over the equal-tail baseline in all eight
non-saturated model--temperature cells, although the leading reference score
generally has higher AUC.  Supplementary experiments show retained power under
independent null-like replacement and smaller changes from hierarchical dependence
modeling.  The evidence supports robustness across the evaluated alternatives,
not uniform power guarantees or resistance to arbitrary text edits.
\end{abstract}

\noindent\textbf{Keywords:} language-model watermarking; Bayes factor; e-value; test martingale; optional stopping; pivotal statistic.

\section{Introduction}

Language-model watermarking induces dependence between generated tokens and pseudorandom variables determined by a secret key.  Detection tests this dependence against a null under which token generation is conditionally independent of the keyed variables.  The statistical problem differs from classification of machine-generated text: the null distribution must remain valid without knowledge of the generating model's next-token probabilities (NTPs).  Early watermarking schemes include green-list logit modulation \citep{kirchenbauer2023watermark} and distortion-free sampling based on randomized races \citep{aaronson2023watermarking} or inverse transforms \citep{kuditipudi2024robust}.  For the latter class, \citet{li2025framework} construct pivots with nuisance-free null distributions, derive asymptotic Type~II error exponents, and obtain a minimax saddle-point score for the Gumbel-max pivot studied here.

The pivotal formulation eliminates the NTP vector from the null distribution but leaves a composite alternative.  NTP distributions vary across prompts and token contexts and depend on sampling temperature and decoding rules.  We represent this variation by a prespecified prior over a low-dimensional alternative family.  For Gumbel-max and a fixed $\Delta$, \citet{li2025framework} construct a score that maximizes a detection-efficiency rate against the least favorable member of the $\Delta$-regular class.  Its size is unchanged under the exact pivot null, but its power depends on the alternative distribution.  We distinguish misspecification of $\Delta_0$, examined in Supplementary Section~\ref{sec:regimes}, from misspecification of the residual-tail distribution, examined at a fixed deficit law in Supplementary Section~\ref{sec:tails}.  \citet{li2025framework} discuss adapting the distribution class to empirical NTP distributions.  Their least favorable vector concentrates residual mass on as few coordinates as the constraint permits, whereas the equal-tail and Dirichlet working alternatives distribute it more broadly.

Robustness to alternative specification is a central motivation: a detector should retain power when watermark strength or residual-tail structure differs from a single working specification.  We test the null against a marginal density that averages over a prespecified prior on these quantities.  By the Neyman--Pearson lemma, the test maximizes prior-averaged power at a fixed size.  A prior on $\Delta$ replaces the single regularity bound whose selection \citet{li2025framework} identify as an open problem.  Robustness of power is assessed across generating regimes without retuning the priors; it is not guaranteed by prior averaging alone.

Write $Y_t$ for the pivot at token $t$ and $\F_{t-1}$ for the information available before it is observed.  We call the one-step ratio $E_t=\widetilde f_{1,t}(Y_t\mid\F_{t-1})/f_{0,t}(Y_t\mid\F_{t-1})$ the \emph{predictive likelihood ratio}; its cumulative product $B_n=\prod_{t=1}^n E_t$ is the Bayes factor for the specified alternative.  Because the exact pivot null is free of the unknown NTP vector, prior averaging occurs only in the alternative predictive density.  If that density is normalized and selected before each observation, $(B_n)$ is also a test martingale, so Ville's inequality controls Type~I error under optional stopping as well as at fixed horizons.  Theorem~\ref{thm:martingale} formalizes this construction.  This validity is robust to misspecification of the alternative, but not of the conditional null.  The guarantee is exact for the analytic densities; Supplementary Section~\ref{sec:numerical-checks} reports numerical validation, not a certified anytime guarantee, for the interpolated implementation.

Likelihood-based watermark detectors differ in their treatment of the unknown alternative.  The Bayesian scoring rule for SynthID-Text estimates likelihoods for keyed features and returns a posterior probability of watermark presence; the same study's supplementary experiments include a fitted Bayesian scorer for a Gumbel watermark \citep{dathathri2024synthid}.  The Bayes' Rule derived Watermark Detector (BRWD) computes model-based tokenwise likelihood-ratio contributions from unperturbed and watermark-perturbed logits and aggregates them into a score monotone in the model-implied posterior watermark probability for fixed prior odds; the logits may be supplied by a smaller surrogate language model \citep[Section~4.2]{huang2025brwd}.  A complementary likelihood-based detector uses the key together with known, estimated, or shrunken NTPs \citep{li2025likelihood}.  These methods either estimate the alternative from labeled data or require NTP information.  The present construction uses the analytic pivots of \citet{li2025framework}, with prespecified priors on $\Delta$ and the Gumbel residual tail.

For online decisions, test martingales and e-processes provide optional-stopping-safe evidence \citep{ville1939etude,shafer2011test,ramdas2023game,grunwald2024safe}.  This sequential property is not implied by fixed-horizon e-validity \citep{vovk2021evalues,ramdas2023game}.  Watermark-specific e-processes support continuous monitoring under conditional pivotal-null assumptions \citep{su2026eprocess,luo2026rao}.  Anchored E-Watermarking instead uses a shared anchor as the seed distribution and constructs token--seed e-values that are uniformly valid when the target distribution lies in a specified $\ell_1$ neighborhood of that anchor \citep{huang2026anytime}, while robust e-processes seek Type~I control over contamination neighborhoods of the null \citep{saha2026huber}.

\Needspace{6\baselineskip}
The analysis has three components:
\begin{enumerate}
  \item {\em Predictive likelihood ratio construction:} The exact Gumbel pivot likelihood is integrated over prespecified deficit and residual-tail priors.  The spike and tail-width densities are closed form; the finite-concentration tail-shape model requires one univariate transform.  The deficit-prior integral uses fixed Gauss--Legendre quadrature, with sensitivity analyses for the number of nodes.  We distinguish posterior expected-loss thresholds, calibrated fixed-horizon tests, and anytime-valid tests under explicit conditional-null assumptions.

  \item {\em Alternative specification and hierarchical models:} A mixture prior on the union of two families separates residual-tail shape from effective width.  The exchangeable Dirichlet component reduces a $K$-dimensional integral to one univariate transform per concentration parameter; the width component includes concentrated residual tails.  Both families are normalized and evaluated under specified tail-shape and width alternatives.  Shared and tokenwise specifications determine which latent parameters are common within a document.  Further hierarchical models in Supplementary Section~\ref{sec:hierarchical-extensions} place priors on document-level hyperparameters governing the conditional distributions of token-specific deficits or widths, thereby inducing partial pooling.  A document-level tail state is inferred from tokens within each document.  Supplementary Section~\ref{sec:contamination-supplement} adds independent null-like contamination.

  \item {\em Empirical robustness assessment:} Synthetic comparisons with the minimax and baseline scores of \citet{li2025framework}, at a common nominal 5\% size, assess sensitivity to deficit, residual-tail, and within-document dependence specifications.  Maximum regret is evaluated over finite collections of alternatives; supplementary contamination experiments use independent null-like replacement.  A supplementary analysis reexamines the \emph{archived benchmark data of \citet{li2025framework}}: watermarked and unwatermarked continuations from OPT-1.3B and Sheared-LLaMA-2.7B on C4 prompts, stored in their \emph{WatermarkFramework} GitHub repository \citep{li2024watermarkframework}.  These pre-existing benchmark samples reveal temperature confounding and lack of fit of the full-width tail model.  Using temperature-matched data generated for this study, we assess the change in performance associated with narrower tail support through paired prompt-cluster inference and an extension-prompt holdout.  The reanalysis of the archived benchmark data uses independent exact-null calibration and reports every prefix through 200; neither the archived benchmark sequences nor our newly generated sequences support an anytime-valid claim.
\end{enumerate}

\section{Method}

\subsection{Keyed pivots and the conditional null}
\label{sec:pivots}

Let $W_t\in\V$, $|\V|=M$, denote token $t$, let $\xi_t$ denote the keyed pseudorandom object reconstructed at that position, and let $\F_t=\sigma(W_{1:t},\xi_{1:t})$.  The idealized null condition of \citet{li2025framework} is
\begin{equation*}
  W_t\mathrel{\perp\!\!\!\perp}\xi_t\mid\F_{t-1},
  \qquad
  \xi_t\mid\F_{t-1}\sim P_\xi,
  \tag{A1}
\end{equation*}
with fresh key randomness across positions.  A pivot $Y_t=Y(W_t,\xi_t)$ is selected so that
\begin{equation}
  Y_t\mid\F_{t-1},H_0\sim f_{0,t},
  \label{eq:pivot-null}
\end{equation}
where $f_{0,t}$ may depend on the vocabulary size $M$.  The right-hand side does not depend on the unobserved human NTP vector.  The sequential results below require the conditional statement in \eqref{eq:pivot-null}; literal independence and identical distribution are sufficient but not necessary.  Repeated contexts and hash reuse can violate the fresh-randomness idealization, as discussed by \citet[Section~III-C]{fernandez2023three} and \citet[Appendix~D.1, Remark~D.1]{li2025framework}.  Masking rules, key collisions, or a nonideal pseudorandom function can also invalidate the conditional null in \eqref{eq:pivot-null}.

\subsection{Gumbel-max likelihood}

For a watermarked decoder with NTP vector $\bP=(p_w)_{w\in\V}$, generate $U_{t,w}\stackrel{\mathrm{iid}}{\sim}\Unif(0,1)$ and set
\begin{equation}
  W_t=\arg\max_{w:p_w>0}\frac{\log U_{t,w}}{p_w},
  \qquad R_t=U_{t,W_t}.
\end{equation}

\begin{proposition}[Exact Gumbel pivot law]\label{prop:gumbel}
Assume \textup{(A1)} with the Gumbel-max key law, that is $P_\xi$ the law of $\xi_t=(\xi_{t,1},\ldots,\xi_{t,M})$ with the $\xi_{t,w}$ independent $\Unif(0,1)$; \textup{(A1)} alone leaves $P_\xi$ unspecified and does not give the pivot law.  Then under $H_0$, $R_t\mid\F_{t-1}\sim\Unif(0,1)$, so $f_0^{\mathrm{gum}}(r)=1$.  Under the watermarked decoder with fixed $\bP$,
\begin{align}
  F_{\bP}^{\mathrm{gum}}(r)
    &=\sum_{w:p_w>0}p_w r^{1/p_w}, \\
  f_{\bP}^{\mathrm{gum}}(r)
    &=\sum_{w:p_w>0}r^{1/p_w-1},
  \qquad 0<r<1.
  \label{eq:gumbel-density}
\end{align}
\end{proposition}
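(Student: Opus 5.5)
The proof has two parts: the null law, which uses only (A1) and the uniform key law, and the alternative law, which is a direct race computation for independent exponentials.

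\emph{Null.} Under $H_0$ the token $W_t$ is produced without reference to $\xi_t$. By (A1), $W_t\perp\!\!\!\perp\xi_t\mid\F_{t-1}$ and $\xi_t\mid\F_{t-1}\sim P_\xi$, the product of uniforms. Condition additionally on $W_t=w$. The coordinate $R_t=\xi_{t,w}$ is then still $\Unif(0,1)$, because conditioning on $W_t$ does not change the law of $\xi_t$. Averaging over $w$ gives $R_t\mid\F_{t-1}\sim\Unif(0,1)$, so $f_0^{\mathrm{gum}}=1$. The only care needed is the measure-theoretic step of conditioning on the discrete variable $W_t$ inside $\F_{t-1}$, and this is routine.

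\emph{Alternative.} Fix $\bP$ and write $S=\{w:p_w>0\}$. Set $E_w=-\log U_{t,w}\sim\mathrm{Exp}(1)$, independent across $w$. Then
\[
\frac{\log U_{t,w}}{p_w}=-\frac{E_w}{p_w},
\]
so $W_t=\arg\min_{w\in S}E_w/p_w$. Here $T_w:=E_w/p_w\sim\mathrm{Exp}(p_w)$ independently, and ties occur with probability zero. Since $R_t=U_{t,W_t}=e^{-p_{W_t}T_{W_t}}$, I would compute the joint law of $(W_t,T_{W_t})$ by the standard exponential race. It is
\[
\Prb(W_t=w,\,T_{W_t}\in ds)=p_w e^{-p_w s}\prod_{v\ne w}e^{-p_v s}\,ds=p_w e^{-s}\,ds,
\]
using $\sum_{v\in S}p_v=1$. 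This shows that the winning time is $\mathrm{Exp}(1)$ independent of the identity of the winner, and that $\Prb(W_t=w)=p_w$.

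Conditional on $W_t=w$, we have $R_t=e^{-p_w T}$ with $T\sim\mathrm{Exp}(1)$. Hence
\[
\Prb(R_t\le r\mid W_t=w)=\Prb\bigl(T\ge -\log r/p_w\bigr)=r^{1/p_w}.
\]
Mixing over $w$ with weights $p_w$ gives $F_{\bP}^{\mathrm{gum}}(r)=\sum_{w\in S}p_w r^{1/p_w}$. Differentiating termwise, which is legitimate because the sum is finite, gives
\[
f_{\bP}^{\mathrm{gum}}(r)=\sum_{w\in S}p_w\cdot\frac{1}{p_w}\,r^{1/p_w-1}=\sum_{w\in S}r^{1/p_w-1}\qquad\text{on }(0,1).
\]
As a sanity check, $F_{\bP}^{\mathrm{gum}}(1)=1$.

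The main step is the race identity showing that the winning time is independent of the winner; everything else is substitution. I would verify it by integrating $\prod_{v\ne w}\Prb(T_v>s)$ against the density of $T_w$.
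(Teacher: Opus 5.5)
Your proof is correct and is essentially the paper's argument. The null step is identical, and your race identity $\Prb(W_t=w,\,T_{W_t}\in ds)=p_we^{-s}\,ds$ is the paper's computation $\Prb(W=w,R\in dr)=\prod_{v\ne w}r^{p_v/p_w}\,dr=r^{1/p_w-1}\,dr$ rewritten under the substitution $r=e^{-p_ws}$.

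The only difference is the order of the steps. You derive the conditional law $\Prb(R_t\le r\mid W_t=w)=r^{1/p_w}$, mix it into the CDF, and then differentiate; the paper writes down the joint density directly and integrates. Your route also shows $\Prb(W_t=w)=p_w$ along the way.
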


\begin{proof}
Under $H_0$, $W_t$ is conditionally independent of the current uniform vector, so selecting coordinate $W_t$ preserves uniformity.  Under $H_1$, conditional on $U_w=r$, token $w$ is selected exactly when $U_v\le r^{p_v/p_w}$ for every $v\ne w$.  Therefore
\[
  \Prb(W=w,R\in dr)
  =\prod_{v\ne w}r^{p_v/p_w}\,dr
  =r^{1/p_w-1}\,dr.
\]
Summing over $w$ gives the density, and integrating gives the CDF.
\end{proof}

The conditional beta law given the selected token appears in \citet{fernandez2023three}.  The marginal beta-mixture CDF in Proposition~\ref{prop:gumbel} is given explicitly by \citet{piet2025mark} and restated as Lemma~3.1 of \citet{li2025framework}.  Differentiating that CDF gives the density used for mixture integration.

For the $\Delta$-regular class $\{\bP:\max_w p_w\le 1-\Delta\}$, the least-favorable representative of \citet{li2025framework} is
\begin{align}
  m_\Delta&=\left\lfloor\frac{1}{1-\Delta}\right\rfloor,
  &q_\Delta&=1-m_\Delta(1-\Delta),\\
  \bP_\Delta^\star
  &=\big(\underbrace{1-\Delta,\ldots,1-\Delta}_{m_\Delta},q_\Delta,0,\ldots,0\big),
  \label{eq:least-fav}
\end{align}
where the residual coordinate is omitted when $q_\Delta=0$.  Its density is
\begin{equation}
  f_\Delta^{\mathrm{gum},\star}(r)
  =m_\Delta r^{\Delta/(1-\Delta)}
  +\ind\{q_\Delta>0\}r^{1/q_\Delta-1}.
  \label{eq:gumbel-lf}
\end{equation}
The log of \eqref{eq:gumbel-lf} is the optimal least-favorable Gumbel score derived by \citet{li2025framework} for a point value of $\Delta$.

The clean Gumbel benchmark uses the equal-tail spike family described by \citet{li2025framework},
\begin{equation}
  \bP_\Delta^{\mathrm{sp}}
  =\left(1-\Delta,\frac{\Delta}{M-1},\ldots,\frac{\Delta}{M-1}\right),
  \label{eq:spike-ntp}
\end{equation}
whose exact density, for $\Delta>0$, is
\begin{equation}
  f_\Delta^{\mathrm{gum},\mathrm{sp}}(r)
  =r^{\Delta/(1-\Delta)}
   +(M-1)r^{(M-1)/\Delta-1}.
  \label{eq:gumbel-spike}
\end{equation}
Supplementary Section~\ref{sec:family} evaluates the effect of the Gumbel component family.  Both constructions require $0\le\Delta\le1-M^{-1}$.  A top-one probability alone does not determine \eqref{eq:gumbel-density}: a general empirical alternative must specify a full normalized NTP vector or a prior over such vectors.

\subsection{Averaging over the NTP family}
\label{sec:tail-prior}

Both \eqref{eq:gumbel-lf} and \eqref{eq:gumbel-spike} determine the entire NTP vector from $\Delta$, although the constraint fixes only its largest coordinate.  A conditionally symmetric Dirichlet prior allocates the residual mass across the remaining $K=M-1$ tokens.  This permits variation within an explicit tail family.

For $K=M-1\ge2$, $0<\Delta<1$, and $0<\alpha<\infty$, write the tail at one token as $\Delta\bm q_t$ with $\bm q_t\mid\alpha\sim\operatorname{Dirichlet}(\alpha,\ldots,\alpha)$ on the $(K-1)$-simplex, where $\alpha$ governs how evenly the residual mass is spread.  When $K=1$, the sole tail coordinate equals one and this concentration layer is degenerate.  Conditional on a shared $\alpha$, the working model redraws $\bm q_t$ independently at every token; the tokenwise-$\alpha$ hierarchy instead redraws both $\alpha_t$ and then $\bm q_t\mid\alpha_t$.  The tokenwise placement permits multiplication of one-token marginal likelihoods; a document-shared $\bm q$ would require integrating their product over $\bm q$.  Because \eqref{eq:gumbel-density} is a \emph{sum} over coordinates, linearity and tail exchangeability leave only the univariate marginal $q\sim\operatorname{Beta}(\alpha,(K-1)\alpha)$:
\begin{equation}
  f^{\mathrm{gum}}_{\Delta,\alpha}(r)
  =r^{\Delta/(1-\Delta)}
   +K\,\E_q\big[r^{1/(\Delta q)-1}\big],
  \qquad 0<r<1.
  \label{eq:gumbel-dirichlet}
\end{equation}
At the deficit endpoint $\Delta=0$, define $f^{\mathrm{gum}}_{0,\alpha}(r)=1$ by continuity; at the concentration endpoint $\alpha=\infty$, define $f^{\mathrm{gum}}_{\Delta,\infty}$ as the equal-tail spike density \eqref{eq:gumbel-spike}.  Values of a density at $r\in\{0,1\}$ may be assigned arbitrarily.  Supplementary Section~\ref{sec:stable-computation} gives the transform representation and its numerical evaluation.

For the designated coordinate $1-\Delta$ to remain the largest coordinate for every Dirichlet draw, this parameterization requires $\Delta\le1/2$; otherwise a residual coordinate $\Delta q_{t,k}$ can exceed $1-\Delta$, and $\Delta$ is no longer the top-probability deficit even though the resulting vector is still normalized.  Thus the working tail family has $0\le\Delta\le1/2$ and $0<\alpha\le\infty$, with the endpoints interpreted above.  Every Dirichlet-layer experiment below lies in this domain except the widened-prior sensitivity analysis of Supplementary Section~\ref{sec:deficit-support}, which extends the union's finite-$\alpha$ branch to $\Delta=.999$.  In that analysis, $\Delta$ indexes one minus the mass of a designated coordinate rather than the top-probability deficit, as stated in Supplementary Section~\ref{sec:deficit-support}.  An extension beyond one half that preserved the top-probability interpretation would require truncating or reordering the tail prior and accounting for the resulting normalization.

The density is normalized:
\begin{equation}
  \int_0^1 f^{\mathrm{gum}}_{\Delta,\alpha}(r)\,dr
  =(1-\Delta)+\Delta K\,\E[q]=1,
\end{equation}
because $\E[q]=1/K$.  The analytic component therefore satisfies the numerator-normalization condition of Theorem~\ref{thm:martingale}.  The theorem applies when this component, or a normalized mixture of such components, is selected $\F_{t-1}$-measurably and divided by the true conditional null density.

The parameter $\alpha$ controls tail concentration.  As $\alpha\to\infty$ the tail becomes exactly equal and \eqref{eq:gumbel-dirichlet} recovers the spike density \eqref{eq:gumbel-spike}, the equal-tail specification stated by \citet{li2025framework}; small $\alpha$ concentrates the residual mass on a few tokens.  Supplementary Section~\ref{sec:dirichlet-details} gives convergence calculations, the comparison with normalized-uniform tails, and a size-biased simulation construction.

The layer is specific to the Gumbel pivot, the only scheme treated here.  Figure~\ref{fig:hierarchical-model} displays the uncollapsed Gumbel hierarchies, including the tail-width extension introduced next.  The token-indexed $\bm q_t$ remains inside the plate even when $(S,\Delta,\alpha,J)$ are document-shared.  It is redrawn for finite $\alpha$ in the tail-shape branch; in the tail-width branch it is deterministic given $J_{[t]}$, and thus identical across tokens when $J$ is shared.  It combines with the applicable deficit to form $\bP_t=(1-\Delta_{[t]},\Delta_{[t]}\bm q_t)$, which indexes \eqref{eq:gumbel-density}; integrating $\bm q_t$ in the tail-shape branch gives \eqref{eq:gumbel-dirichlet}.  A shared random $\bm q$ would instead induce a joint integral over all tokens.

\subsection{Averaging over the tail width}
\label{sec:tail-width}

The Dirichlet layer varies how evenly the residual mass is distributed but assigns
positive mass to all $K=M-1$ non-leading coordinates.  This is a second point
assumption, and unlike tail shape it depends on the vocabulary size.  Its
consequence is visible in the equal-tail density \eqref{eq:gumbel-spike}, whose
second term has exponent $(M-1)/\Delta-1$ and concentrates its probability mass near
$r=1$.  At $M=1000$ that term is nonnegligible over a visible range of $r$; at the
OPT-1.3B and Sheared-LLaMA-2.7B vocabulary sizes $M=50{,}272$ and $M=32{,}000$,
respectively, it is numerically concentrated
at $r=1$.  Its integrated mass remains $\Delta$ for every $M$, since
$\int_0^1(M-1)r^{(M-1)/\Delta-1}\,dr=\Delta$.  For $r$ bounded away from one, the
component is approximately $r^{\Delta/(1-\Delta)}$, and its log likelihood ratio is
approximately proportional to $\log r$ at each fixed $\Delta$.  The implementation
evaluates the exact normalized density \eqref{eq:gumbel-spike}, not this
away-from-boundary approximation.  Integration over $\Delta$ does not alter the
boundary concentration because it occurs for every component.

The number of nonzero tail coordinates is therefore modeled as an additional latent
quantity.  Writing $J$ for that number and assigning equal mass within it, the
working NTP vector is $(1-\Delta,\Delta/J,\ldots,\Delta/J)$ and the exact Gumbel
pivot density is
\begin{equation}
  f^{\mathrm{gum}}_{\Delta,J}(r)
  =r^{\Delta/(1-\Delta)}+J\,r^{J/\Delta-1},
  \qquad 1\le J\le K.
  \label{eq:gumbel-tailwidth}
\end{equation}
This is closed form, so it introduces no transform table and no interpolation:
each component integrates to one identically and Theorem~\ref{thm:martingale}
applies to the density that is evaluated.  At $J=K$ it is exactly
\eqref{eq:gumbel-spike}, and at $J=1$ it is the two-coordinate profile.

The two extensions parameterize tail concentration through different coordinates,
so the prior is specified as a two-component mixture rather than a product measure.  With
$\pi_J$ uniform on the scale-free ladder $\{1,4,16,\ldots\}\cap[1,K)$ and a
mixing weight $w$, the tail prior is
\begin{equation}
  \pi_{\mathrm{tail}}
  =w\,\pi_\alpha\otimes\delta_{J=K}
  +(1-w)\,\delta_{\alpha=\infty}\otimes\pi_J .
  \label{eq:union-tail}
\end{equation}
Both component priors are exact submodels, obtained at $w=1$ and $w=0$, and the
equal-tail spike is the atom $(\alpha=\infty,J=K)$ in the first block.  The
full-width atom is excluded from $\pi_J$ to avoid duplicate prior mass.  Unless varied in a sensitivity analysis, the
experiments fix $w=.5$, assigning equal probability to the two tail specifications.
A product grid over $(\Delta,\alpha,J)$ is not
evaluated.  Because the two tail coordinates represent overlapping forms of
concentration, a six-point product over $\alpha$ would impose $\log 6$ nats of
component-weight dilution on configurations already represented by the union,
whereas the two-block mixture imposes $\log 2$.  This is a prior-design rationale;
no direct empirical comparison with the product grid is reported.  The archived
samples from \citet{li2025framework}, described in Section~\ref{sec:released-output},
have profile estimates of one or two effective tail coordinates and
substantial full-width misfit, with nominal $p\approx10^{-25}$ for OPT-1.3B and
$p\approx3\times10^{-20}$ for Sheared-LLaMA-2.7B
(Supplementary Section~\ref{sec:released-width}; dependence makes these descriptive
misfit summaries rather than calibrated significance levels).  The equal-tail
synthetic designs and these exploratory width estimates motivate retaining both
components.  The equal mixture is a fixed working choice, not an empirically
optimized weight; Supplementary Section~\ref{sec:union-weight-sweep} evaluates
weight sensitivity on the temperature-matched data.

Because the tail branch is document-shared, integrating over a prior on $w$ is equivalent to fixing $w$ at its prior mean (Supplementary Section~\ref{sec:union-weight-sweep}).

\subsection{Hierarchical generative model}
\label{sec:hierarchy}

Let $f_0$ be the exact Gumbel null density at the relevant vocabulary size and write $f_{\Delta,\alpha}$ for the clean component after any tail vector has been marginalized.  For Gumbel there are three component families: the rules labelled \emph{spike} use \eqref{eq:gumbel-spike}, represented by $\alpha=\infty$; the rules labelled \emph{Dirichlet} use \eqref{eq:gumbel-dirichlet}; and the tail-width rules use \eqref{eq:gumbel-tailwidth}, which the spike family is the $J=K$ member of.  The union tail of \eqref{eq:union-tail} mixes the last two.  The Gumbel component likelihood is exact for a spike NTP.  Introduce a watermark-presence indicator
\begin{equation}
  H\sim\operatorname{Bernoulli}(q_H),
  \qquad
  Y_t\mid H=0,\F_{t-1}\sim f_0.
  \label{eq:hier-null}
\end{equation}
Under $H=1$, the document-shared hierarchy draws one latent pair for the whole sequence and, for the Gumbel Dirichlet component, a fresh residual vector inside the token plate,
\begin{equation}
  \begin{aligned}
    \Delta&\sim\pi_\Delta, &
    \alpha&\sim\pi_\alpha, &
    \bm q_t\mid\alpha&\stackrel{\mathrm{iid}}{\sim}
      \operatorname{Dirichlet}_K(\alpha),
    &t=1,\ldots,n.
  \end{aligned}
  \label{eq:hier-shared}
\end{equation}
whereas the tokenwise hierarchy redraws the pair, followed by its residual vector, at every position,
\begin{equation}
  \begin{aligned}
    (\Delta_t,\alpha_t)&\stackrel{\mathrm{iid}}{\sim}
      \pi_\Delta\otimes\pi_\alpha,
    & \bm q_t\mid\alpha_t&\sim\operatorname{Dirichlet}_K(\alpha_t),
    \qquad\text{independently over }t.
  \end{aligned}
  \label{eq:hier-tokenwise}
\end{equation}
The $\bm q_t$ draw is present only for a Gumbel Dirichlet component; at $\alpha=\infty$ it is the equal vector.  Conditional on a finite $\alpha_{[t]}$, the full-vector Gumbel law is \eqref{eq:gumbel-density} with $\bP_t=(1-\Delta_{[t]},\Delta_{[t]}\bm q_t)$; integrating the independent $\bm q_t$ gives \eqref{eq:gumbel-dirichlet}.

The union tail prior of Section~\ref{sec:tail-width} adds one further latent coordinate.  Let $S\in\{\mathrm{shape},\mathrm{width}\}$ be a document-level tail state with $\Prb(S=\mathrm{shape})=w$, drawn once and independently of $\Delta$, and let
\begin{equation}
  \bm q_t\mid S=\mathrm{shape},\alpha\sim\operatorname{Dirichlet}_K(\alpha),
  \qquad
  \bm q_t=\tfrac1J\ind_J\quad(S=\mathrm{width}),
  \qquad J\sim\pi_J,
  \label{eq:hier-tail-state}
\end{equation}
where $\ind_J$ is the $0/1$ indicator vector of the first $J$ of the $K$ residual coordinates, so $J^{-1}\ind_J$ spreads the tail equally over those $J$ and puts zero on the rest, and $\pi_J$ is the base-four ladder prior of Section~\ref{sec:tail-width}.  This fixed choice of coordinates entails no restriction on the pivot law, which is invariant under permutation of the residual coordinates.  The shape branch fixes $J=K$ and the width branch fixes $\alpha=\infty$, so the two branches are disjoint sub-models of one family and \eqref{eq:union-tail} is the marginal of \eqref{eq:hier-tail-state}.  Write $\vartheta_{[t]}=(S,\eta_{[t]})$ with $\eta_{[t]}=(\Delta_{[t]},\alpha_{[t]},J_{[t]})$, the state $S$ determining which of the last two coordinates is active; the tuple $\eta_{[t]}$ equals the document-level tuple $(\Delta,\alpha,J)$ in the shared hierarchy and the position-specific tuple in the tokenwise hierarchy.  The tail state is document level in every experiment reported here: a rule that redrew $S$ per token is not evaluated.  After marginalizing $\bm q_t$, the clean component density is whichever branch the state selects,
\begin{equation}
  f_{\vartheta}
  =\begin{cases}
    f_{\Delta,\alpha}, & S=\mathrm{shape}\text{, where }J=K,\\
    f_{\Delta,J},      & S=\mathrm{width}\text{, where }\alpha=\infty,
  \end{cases}
  \label{eq:branch-density}
\end{equation}
which for Gumbel are \eqref{eq:gumbel-dirichlet} and \eqref{eq:gumbel-tailwidth}.  The two cases agree at $(\alpha=\infty,J=K)$, where both reduce to the equal-tail spike \eqref{eq:gumbel-spike}, so $f_\vartheta$ is well defined on the whole of $\{\mathrm{shape},\mathrm{width}\}\times\{\eta\}$ as a function; whether the prior charges that point in both branches is a separate question, and under \eqref{eq:union-tail} it does not.  The working alternative treats the observations as conditionally independent with layer
\begin{equation}
  Y_t\mid H=1,\vartheta_{[t]}\sim f_{\vartheta_{[t]}},
  \qquad
  \lambda_{\vartheta_{[t]}}(y)=\frac{f_{\vartheta_{[t]}}(y)}{f_0(y)},
  \label{eq:hier-observation}
\end{equation}
the component density and its likelihood ratio to the exact null.  A deployed
sequence need not have every position coupled to its key;
Supplementary Section~\ref{sec:contamination-supplement} adds a contamination layer that
replaces a position by the null with probability $\rho$, and recovers
\eqref{eq:hier-observation} at $\rho=0$.

\begin{figure}[t]
\centering
\begin{minipage}[t]{0.48\textwidth}
\centering
\textbf{(a) Clean shared parameters}\par\smallskip
\begin{tikzpicture}[font=\small]
  \node[priornode] (pis) at (-1.75,4.6) {$\pi_S$};
  \node[priornode] (pia) at (0,4.6) {$\pi_\alpha$};
  \node[priornode] (pij) at (1.4,4.6) {$\pi_J$};
  \node[priornode] (pid) at (2.8,4.6) {$\pi_\Delta$};
  \node[latentnode] (b) at (-1.75,3.1) {$S$};
  \node[latentnode] (alpha) at (0,3.1) {$\alpha$};
  \node[latentnode] (j) at (1.4,3.1) {$J$};
  \node[latentnode] (delta) at (2.8,3.1) {$\Delta$};
  \node[latentnode] (q) at (0,1.5) {$\bm q_t$};
  \node[detnode] (p) at (1.4,0) {$\bP_t$};
  \node[obsnode] (y) at (2.8,0) {$Y_t$};
  \draw[gmarrow] (pis) -- (b);
  \draw[gmarrow] (pia) -- (alpha);
  \draw[gmarrow] (pij) -- (j);
  \draw[gmarrow] (pid) -- (delta);
  \draw[gmarrow] (b) -- (q);
  \draw[gmarrow] (alpha) -- (q);
  \draw[gmarrow] (j) -- (q);
  \node[font=\scriptsize,anchor=north,fill=white,inner sep=1pt]
    at (alpha.south) {shape};
  \node[font=\scriptsize,anchor=north,fill=white,inner sep=1pt]
    at (j.south) {width};
  \draw[gmarrow] (delta) -- (p);
  \draw[gmarrow] (q) -- (p);
  \draw[gmarrow] (p) -- (y);
  \begin{scope}[on background layer]
    \node[platenode,inner xsep=5mm,inner ysep=4mm,fit=(q)(p)(y)] (plateU) {};
  \end{scope}
  \node[font=\scriptsize,anchor=north east,yshift=-1pt]
    at (plateU.south east) {$t=1{:}n$};
\end{tikzpicture}
\end{minipage}\hfill
\begin{minipage}[t]{0.48\textwidth}
\centering
\textbf{(b) Clean tokenwise parameters}\par\smallskip
\begin{tikzpicture}[font=\small]
  \node[priornode] (pis) at (-1.75,4.6) {$\pi_S$};
  \node[priornode] (pia) at (0,4.6) {$\pi_\alpha$};
  \node[priornode] (pij) at (1.4,4.6) {$\pi_J$};
  \node[priornode] (pid) at (2.8,4.6) {$\pi_\Delta$};
  \node[latentnode] (b) at (-1.75,3.1) {$S$};
  \node[latentnode] (alpha) at (0,3.1) {$\alpha_t$};
  \node[latentnode] (j) at (1.4,3.1) {$J_t$};
  \node[latentnode] (delta) at (2.8,3.1) {$\Delta_t$};
  \node[latentnode] (q) at (0,1.5) {$\bm q_t$};
  \node[detnode] (p) at (1.4,0) {$\bP_t$};
  \node[obsnode] (y) at (2.8,0) {$Y_t$};
  \draw[gmarrow] (pis) -- (b);
  \draw[gmarrow] (pia) -- (alpha);
  \draw[gmarrow] (pij) -- (j);
  \draw[gmarrow] (pid) -- (delta);
  \draw[gmarrow] (b) -- (q);
  \draw[gmarrow] (alpha) -- (q);
  \draw[gmarrow] (j) -- (q);
  \node[font=\scriptsize,anchor=north,fill=white,inner sep=1pt]
    at (alpha.south) {shape};
  \node[font=\scriptsize,anchor=north,fill=white,inner sep=1pt]
    at (j.south) {width};
  \draw[gmarrow] (delta) -- (p);
  \draw[gmarrow] (q) -- (p);
  \draw[gmarrow] (p) -- (y);
  \begin{scope}[on background layer]
    \node[platenode,inner xsep=5mm,inner ysep=5mm,
      fit=(j)(alpha)(delta)(q)(p)(y)] (plateUT) {};
  \end{scope}
  \node[font=\scriptsize,anchor=north east,yshift=-1pt]
    at (plateUT.south east) {$t=1{:}n$};
\end{tikzpicture}
\end{minipage}

\caption{Uncollapsed Gumbel union-tail hierarchies under $H=1$, with every watermarked position coupled to its key.  Rounded rectangles denote fixed prior laws, unshaded circles latent variables, squares deterministic quantities, and shading the observed pivot.  The tail state $S$ of \eqref{eq:hier-tail-state}, with $\Prb(S=\mathrm{shape})=w$, selects the tail-shape or tail-width branch.  The priors $\pi_\alpha$ and $\pi_J$ apply to the active tail coordinate; the shape branch fixes $J=K$ and the width branch fixes $\alpha=\infty$.  These branch restrictions are implicit in the diagrams.  In both panels, $\bP_t=(1-\Delta_{[t]},\Delta_{[t]}\bm q_t)$.  (a) The shared model draws one $(\Delta,\alpha,J)$ per document, so the plate contains only $(\bm q_t,\bP_t,Y_t)$.  (b) Conditional on $S$, the tokenwise model redraws $(\Delta_t,\alpha_t,J_t)$ by position, as specified in \eqref{eq:token-bf-state}.  In both panels, $S$ is document level; no evaluated rule redraws the tail state by token.  These model families are displayed in Figure~\ref{fig:shared}, with selected endpoints in Supplementary Table~\ref{tab:shared}.  Supplementary Section~\ref{sec:contamination-supplement} adds the contamination layer.}
\label{fig:hierarchical-model}
\end{figure}

The plate placement in Figure~\ref{fig:hierarchical-model} determines the integration order: the shared model of panel (a) multiplies token likelihoods before integrating its document-level variables, whereas the tokenwise model of panel (b) integrates the position-specific variables before multiplication.  The two hierarchies differ structurally only in the placement of $(\Delta,\alpha,J)$ relative to the token plate.  The spike family has $\pi_\alpha=\delta_\infty$.  A state-space or hidden Markov hierarchy could represent intermediate persistence in $(\Delta_t,\alpha_t,J_t)$.

\FloatBarrier
\subsection{Bayes factors}
\label{sec:bayes-factors}

Let the parameter be $\vartheta=(\Delta,S,\alpha,J)$, where $S\in\{\mathrm{shape},\mathrm{width}\}$ is the tail state of \eqref{eq:hier-tail-state} and the pair $(\alpha,J)$ is the tail coordinate that $S$ makes active.  The prospective prior is
\begin{equation}
  \Pi(d\vartheta)=\pi_\Delta(d\Delta)\,\pi_{\mathrm{tail}}(dS,d\alpha,dJ),
  \qquad
  \pi_{\mathrm{tail}}=w\,\delta_{S=\mathrm{shape}}\otimes\pi_\alpha\otimes\delta_{J=K}
  +(1-w)\,\delta_{S=\mathrm{width}}\otimes\delta_{\alpha=\infty}\otimes\pi_J,
  \label{eq:prospective-prior}
\end{equation}
so $\pi_S(\mathrm{shape})=w$ and the $(\alpha,J)$ marginal of $\pi_{\mathrm{tail}}$ is exactly the union tail of \eqref{eq:union-tail}.  The two branches have disjoint prior support because $\pi_J$ excludes the full width: the equal-tail atom $(\alpha=\infty,J=K)$ belongs only to the shape branch.  The point masses on $S$ can therefore be suppressed in the $(\alpha,J)$ marginal.  The branch densities coincide at $(\alpha=\infty,J=K)$, so $f_\vartheta$ in \eqref{eq:branch-density} is well defined even though the prior assigns that atom only to the shape branch.  The other detector priors can be represented in the same parameter space but are not all endpoints of this fixed union prior.  In particular, the equal-tail spike is obtained by setting $w=1$ and $\pi_\alpha=\delta_\infty$; the inactive width prior $\pi_J$ is then irrelevant.  Setting $w=0$ instead gives the width ladder with $J<K$, while $w=1$ with the full concentration prior gives the Dirichlet layer.  Thus varying $w$ in the fixed union prior interpolates between the width-ladder and shape-layer priors; obtaining the spike alone additionally requires concentrating the shape prior at its equal-tail atom.  For the synthetic experiments, the deficit factor is parameterized as
\begin{equation}
  \Delta=\Delta_{\min}+(\Delta_{\max}-\Delta_{\min})X_\Delta,
  \qquad X_\Delta\sim\operatorname{Beta}(a_\Delta,b_\Delta).
\end{equation}
The synthetic experiments and reanalysis of the archived benchmark data use $\Delta\sim\Unif(0.001,0.5)$.  The contamination study of Supplementary Section~\ref{sec:contamination-supplement} retains that prior and adds one on the contamination rate.  The Dirichlet rules take $\pi_\alpha$ uniform on $\{0.1,1,10,100,1000,\infty\}$, fixed before simulation, so their component index is $\vartheta_g=(S_g,\eta_g)$ with $\eta_g=(\Delta_g,\alpha_g,J_g)$ and the clean grid has $G=96\times6=576$ components.  The support contains five finite concentration levels and the equal-tail limit.  Because the $\alpha$ prior is uniform on its six atoms, the choice of support also determines the weights: four atoms lie at $\alpha\ge10$, so the grid resolves near-equal tails more finely than sparse ones.  A midpoint or Gauss--Legendre grid $\{(\vartheta_g,w_g)\}_{g=1}^G$ makes the prior finite and computation deterministic; $G$ counts grid components throughout and $J$ is reserved for the tail width.  The grid is an exact discrete prior for the computed model and an approximation to a continuous prior.

The hierarchy determines whether the latent component is shared across the document or redrawn by token.  With one latent component shared by the document,
\begin{equation}
  B_n^{\mathrm{sh}}
  =\int\prod_{t=1}^n\lambda_\vartheta(Y_t)\,\Pi(d\vartheta).
  \label{eq:shared-bf}
\end{equation}
With a fresh component $\vartheta_t\stackrel{\mathrm{iid}}{\sim}\Pi$ at every token,
\begin{equation}
  B_n^{\mathrm{tw}}
  =\prod_{t=1}^n\int\lambda_\vartheta(Y_t)\,\Pi(d\vartheta).
  \label{eq:token-bf}
\end{equation}
Equation~\eqref{eq:token-bf} redraws every coordinate of $\vartheta$, including
the tail state $S$.  The evaluated tokenwise hierarchy instead holds $S$ at the
document level.  Writing $\vartheta=(S,\eta)$, where
$\eta=(\Delta,\alpha,J)$ contains the coordinates redrawn by token, its union-tail
Bayes factor is
\begin{equation}
  B_n^{\mathrm{tw},S}
  =\sum_{s}\pi_S(s)\prod_{t=1}^n\int\lambda_{(s,\eta)}(Y_t)\,\Pi(d\eta\mid s),
  \label{eq:token-bf-state}
\end{equation}
a mixture of the two branch products rather than a product of per-token mixtures.
This distinction is numerically material: on 200-token paths the two formulations
differ by up to 26 nats because \eqref{eq:token-bf-state} updates the posterior
probability of the document-level branch.  Both are normalized
nonnegative mixtures of products of $\lambda$, so Theorem~\ref{thm:martingale}
covers either; the implementation computes \eqref{eq:token-bf-state}.

\begin{proposition}[Shared and tokenwise alternatives]\label{prop:hierarchy}
Equation~\eqref{eq:shared-bf} is the Bayes factor for one document-shared $\vartheta\sim\Pi$.  When $B_n^{\mathrm{sh}}>0$, its component posterior is
\begin{equation}
  \Pi_n(d\vartheta)
  =\frac{\prod_{t=1}^n\lambda_\vartheta(Y_t)\Pi(d\vartheta)}{B_n^{\mathrm{sh}}},
\end{equation}
and, when $B_{n-1}^{\mathrm{sh}}>0$, its next multiplier is
\begin{equation}
  \frac{B_n^{\mathrm{sh}}}{B_{n-1}^{\mathrm{sh}}}
  =\int\lambda_\vartheta(Y_n)\Pi_{n-1}(d\vartheta).
\end{equation}
If $B_{n-1}^{\mathrm{sh}}=0$, the Bayes factor remains zero and the within-alternative component posterior is undefined, as handled by Algorithm~\ref{alg:bf} in the Supplementary Materials.  Equation~\eqref{eq:token-bf} is the Bayes factor when $\vartheta_t$ is redrawn independently at every token, and \eqref{eq:token-bf-state} the corresponding factor when the tail state is held at the document level, as it is in every rule reported here.  It does not yield a posterior for one common document-level $\Delta$.  The two marginal likelihoods are generally unequal.
\end{proposition}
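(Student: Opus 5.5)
The plan is to write each Bayes factor as a ratio of joint predictive densities, starting from the sequential factorization of the observations under the null and under each working alternative. Under $H=0$, \eqref{eq:pivot-null} and the chain rule give the joint density $\prod_{t=1}^n f_0(Y_t)$. Under $H=1$ with one document-shared $\vartheta\sim\Pi$, the working layer \eqref{eq:hier-observation} makes the $Y_t$ conditionally independent given $\vartheta$. The joint marginal is therefore $\int\prod_t f_\vartheta(Y_t)\,\Pi(d\vartheta)$. Dividing by the null product, which is positive and free of $\vartheta$, moves the null factor inside the integral as $\prod_t\lambda_\vartheta(Y_t)$ and yields \eqref{eq:shared-bf}.

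For the posterior and multiplier statements I will apply Bayes' rule to the joint law of $(\vartheta,Y_{1:n})$, with $\Pi$ as prior and likelihood $\prod_t f_\vartheta(Y_t)$. Replacing $f_\vartheta$ by $\lambda_\vartheta$ multiplies numerator and denominator by the same null product, so $\Pi_n$ has the stated form whenever $B_n^{\mathrm{sh}}>0$. For the multiplier I split $\prod_{t\le n}=\lambda_\vartheta(Y_n)\prod_{t\le n-1}$. I then substitute $\prod_{t\le n-1}\lambda_\vartheta(Y_t)\,\Pi(d\vartheta)=B_{n-1}^{\mathrm{sh}}\,\Pi_{n-1}(d\vartheta)$ and divide by $B_{n-1}^{\mathrm{sh}}$. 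For the zero case, nonnegativity of $\lambda_\vartheta$ shows that $B_{n-1}^{\mathrm{sh}}=0$ forces $\prod_{t\le n-1}\lambda_\vartheta(Y_t)=0$ for $\Pi$-a.e.\ $\vartheta$. Hence $B_n^{\mathrm{sh}}=0$, while the normalizing constant of $\Pi_{n-1}$ vanishes.

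For the tokenwise statements I take $\vartheta_t$ i.i.d.\ $\Pi$, independent across $t$, which is the unrolled plate in panel (b) of Figure~\ref{fig:hierarchical-model}. The joint marginal is then an integral over the product measure $\Pi^{\otimes n}$ of a product of functions each depending on one coordinate. By Tonelli, justified by nonnegativity, it factorizes into $\prod_t\int f_\vartheta(Y_t)\,\Pi(d\vartheta)$, and dividing by the null product gives \eqref{eq:token-bf}. For \eqref{eq:token-bf-state} I first condition on the document-level state $S=s$, under which the $\eta_t$ are i.i.d.\ $\Pi(\cdot\mid s)$. Applying the same Tonelli factorization, and then averaging over $\pi_S$, gives a mixture of branch products. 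Because no single $\Delta$ appears in this model, there is no posterior for a common document-level $\Delta$; the posterior is over $(\vartheta_1,\dots,\vartheta_n)$.

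The only step needing care is the inequality claim. Here I would give an explicit instance rather than a general argument. Take $n=2$ and $\Pi$ a two-point prior with distinct $\lambda_{\vartheta_1}\ne\lambda_{\vartheta_2}$. Then
\[
\int\lambda_\vartheta(y_1)\lambda_\vartheta(y_2)\,\Pi(d\vartheta)-\prod_{t=1}^2\int\lambda_\vartheta(y_t)\,\Pi(d\vartheta)
\]
is a covariance under $\Pi$. Taking $y_1=y_2$, it equals the variance of $\lambda_\vartheta(y_1)$, which is positive at any point where the two ratios differ. Continuity of the Gumbel densities in $r$ then gives a set of positive Lebesgue measure on which the two quantities differ, so ``generally unequal'' holds. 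The main obstacle is keeping the measure-theoretic conventions consistent. The densities may be assigned arbitrarily at $r\in\{0,1\}$, so the zero case and this inequality should be stated almost everywhere, and the exchange of the integral and product should rely only on nonnegativity.
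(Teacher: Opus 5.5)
Your proposal is correct and follows the paper's route: integrate the joint likelihood under each hierarchy, apply Bayes' rule, and factor the tokenwise integrals by Fubini--Tonelli, conditioning on $S$ first for \eqref{eq:token-bf-state}. Your zero-case argument and variance witness for the inequality fill in details the paper leaves implicit; the variance argument also covers the paper's own $\Pi$ whenever $\vartheta\mapsto\lambda_\vartheta(y)$ is non-degenerate. The paper's proof additionally records the one-step multiplier $\sum_s\pi_S(s\mid Y_{1:n-1})m_s(Y_n)$ for the state-mixture factor, which shows how the branch posterior is updated.
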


\begin{proof}
Integrate the joint likelihood under each hierarchy, then apply Bayes' rule and Fubini's theorem.  The shared update uses the component posterior conditional on the preceding observations.  The fully tokenwise update in \eqref{eq:token-bf} uses the initial prior independently at every position.  For the document-level-state factor \eqref{eq:token-bf-state}, write $m_s(y)=\int\lambda_{(s,\eta)}(y)\,\Pi(d\eta\mid s)$.  Then $B_n^{\mathrm{tw},S}/B_{n-1}^{\mathrm{tw},S}=\sum_s\pi_S(s\mid Y_{1:n-1})m_s(Y_n)$ weights the branches by $\pi_S(s\mid Y_{1:n-1})\propto\pi_S(s)\prod_{t<n}m_s(Y_t)$.  Thus the within-branch coordinates are independently integrated under their initial conditional priors at each position, whereas the posterior distribution of $S$ is updated across positions.
\end{proof}

Bayes factors compare the marginal likelihoods of specified alternative and null models \citep{kass1995bayes}.  The class prior $q_H$ does not enter either Bayes factor; it enters only the posterior odds and the loss-sensitive deployment action.  For watermark-presence prior $q_H=\Prb(H_1)\in(0,1)$, either Bayes factor gives
\begin{equation}
  \Prb(H_1\mid Y_{1:n})
  =\frac{q_HB_n}{1-q_H+q_HB_n},
  \qquad
  \logit\Prb(H_1\mid Y_{1:n})=\logit(q_H)+\log B_n.
  \label{eq:posterior}
\end{equation}
If correct decisions have zero loss, a false positive costs $C_{\mathrm{FP}}$, and a false negative costs $C_{\mathrm{FN}}$, posterior-risk minimization declares a watermark exactly when
\begin{equation}
  B_n>\frac{C_{\mathrm{FP}}}{C_{\mathrm{FN}}}\frac{1-q_H}{q_H}.
  \label{eq:loss-threshold}
\end{equation}
This is a deployment decision rather than an $\alpha$-level test.  The threshold differs from an empirically calibrated fixed-horizon threshold and from the anytime threshold $1/\alpha_{\mathrm{test}}$ in \eqref{eq:ville}.

\subsection{Martingale Bayes factors and e-values}

\begin{theorem}[Sequential pivot Bayes factors are test martingales]\label{thm:martingale}
Assume that under $H_0$, $Y_t\mid\F_{t-1}$ has density $f_{0,t}(\cdot\mid\F_{t-1})$.  Before observing $Y_t$, let $\widetilde f_{1,t}(\cdot\mid\F_{t-1})$ be an $\F_{t-1}$-measurable probability density satisfying $\widetilde f_{1,t}\ll f_{0,t}$.  Define
\begin{equation}
  E_t=\frac{\widetilde f_{1,t}(Y_t\mid\F_{t-1})}
  {f_{0,t}(Y_t\mid\F_{t-1})},
  \qquad B_t=\prod_{s=1}^tE_s,\quad B_0=1.
  \label{eq:predictive-bf}
\end{equation}
Then $(B_t,\F_t)$ is a nonnegative $H_0$-martingale with $\E_0B_t=1$.  Consequently, for every test level $\alpha_{\mathrm{test}}\in(0,1)$,
\begin{equation}
  \Prb_0\left(\sup_{t\ge1}B_t\ge\frac1{\alpha_{\mathrm{test}}}\right)\le\alpha_{\mathrm{test}}.
  \label{eq:ville}
\end{equation}
For every bounded stopping time $\tau$, $\E_0B_\tau=1$; for an almost surely finite unbounded stopping time, $\E_0B_\tau\le1$.
\end{theorem}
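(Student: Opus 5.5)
The plan is to verify the one-step identity $\E_0[E_t\mid\F_{t-1}]=1$ and then obtain the remaining claims from standard martingale results. Fix $t$ and condition on $\F_{t-1}$. Under $H_0$, $Y_t$ has conditional density $f_{0,t}(\cdot\mid\F_{t-1})$, and $\widetilde f_{1,t}$ is $\F_{t-1}$-measurable. Hence
\[
  \E_0[E_t\mid\F_{t-1}]
  =\int_{\{f_{0,t}>0\}}\frac{\widetilde f_{1,t}(y\mid\F_{t-1})}{f_{0,t}(y\mid\F_{t-1})}\,f_{0,t}(y\mid\F_{t-1})\,dy
  =\int_{\{f_{0,t}>0\}}\widetilde f_{1,t}(y\mid\F_{t-1})\,dy .
\]
The absolute continuity assumption $\widetilde f_{1,t}\ll f_{0,t}$ means $\widetilde f_{1,t}$ puts no mass where $f_{0,t}=0$. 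Combined with normalization of $\widetilde f_{1,t}$, the last integral equals $1$. Values of the ratio on the $f_{0,t}$-null set $\{f_{0,t}=0\}$ are irrelevant, since that set has $\Prb_0$-probability zero; I will adopt the convention $0/0=0$ there. To make this conditioning step rigorous, I would treat $f_{0,t}$ and $\widetilde f_{1,t}$ as jointly measurable functions of $(y,\omega)$, i.e.\ as a regular conditional density and an $\F_{t-1}\otimes\mathcal B$-measurable kernel. The conditional expectation of a jointly measurable nonnegative function then equals the integral against the conditional law, by the disintegration / Fubini for kernels. This technical point is the main obstacle, but the hypotheses are phrased exactly to permit it.

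Next, $B_t$ is nonnegative and $\F_t$-adapted, because $Y_t$ is $\F_t$-measurable and $B_{t-1}$ is $\F_{t-1}$-measurable. Taking out what is known gives
\[
  \E_0[B_t\mid\F_{t-1}]=B_{t-1}\,\E_0[E_t\mid\F_{t-1}]=B_{t-1}.
\]
This is valid for nonnegative variables even before integrability is known. Induction from $B_0=1$ then yields $\E_0B_t=1$, so each $B_t$ is integrable and $(B_t,\F_t)$ is a nonnegative martingale.

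For \eqref{eq:ville}, I apply Ville's inequality (Doob's maximal inequality for nonnegative supermartingales). For each $N$ it gives $\Prb_0(\max_{t\le N}B_t\ge c)\le \E_0B_0/c=1/c$. Letting $N\to\infty$ by monotone convergence and setting $c=1/\alpha_{\mathrm{test}}$ gives the bound. For a bounded stopping time $\tau\le N$, optional stopping gives $\E_0B_\tau=\E_0B_0=1$, or directly $B_\tau=\sum_{t\le N}B_t\ind\{\tau=t\}$ with the martingale property. For an almost surely finite $\tau$, we have $B_{\tau\wedge N}\to B_\tau$ almost surely. Fatou's lemma then gives $\E_0B_\tau\le\liminf_N\E_0B_{\tau\wedge N}=1$. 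Equality may fail without uniform integrability, which is why the statement gives only an inequality in the unbounded case.
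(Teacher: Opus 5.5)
Your proof is correct and follows the paper's argument. Conditional normalization, with $\widetilde f_{1,t}\ll f_{0,t}$ disposing of the set $\{f_{0,t}=0\}$, gives $\E_0[E_t\mid\F_{t-1}]=1$; the martingale property, Ville's inequality, optional stopping for bounded $\tau$, and Fatou's lemma for almost surely finite $\tau$ then follow as in the paper, and your remarks on joint measurability and on conditioning nonnegative variables before integrability is known fill in detail the paper leaves implicit. One small precision point: the increasing union of $\{\max_{t\le N}B_t\ge c\}$ is $\{\exists\,t:\ B_t\ge c\}$, which can miss paths whose supremum equals $c$ without being attained, so apply the finite-$N$ bound at $c'<c$ and let $c'\uparrow c$ to obtain $\Prb_0(\sup_{t\ge1}B_t\ge c)\le 1/c$.
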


\begin{proof}
Conditional normalization gives
\[
  \E_0[E_t\mid\F_{t-1}]
  =\int\frac{\widetilde f_{1,t}(y\mid\F_{t-1})}
  {f_{0,t}(y\mid\F_{t-1})}f_{0,t}(y\mid\F_{t-1})\,dy=1.
\]
Thus $\E_0[B_t\mid\F_{t-1}]=B_{t-1}$.  Ville's inequality yields \eqref{eq:ville}; optional stopping gives the bounded-time identity, while nonnegativity and Fatou's lemma give the unbounded-time inequality.
\end{proof}

For the shared model, $\widetilde f_{1,t}$ is the posterior predictive induced by $\Pi_{t-1}$.  For the fully tokenwise model \eqref{eq:token-bf} it is the fixed prior predictive; for the document-level-state rule \eqref{eq:token-bf-state} it is the two-branch mixture of prior predictives taken at the accumulated state posterior, which is $\F_{t-1}$-measurable and so still predictably selected.  Each $E_t$ is a conditional e-value, and $(B_t)$ is an e-process.  Thus $(B_t)$ is both a Bayes factor under the specified alternative and an e-process under the null \citep{shafer2011test,grunwald2024safe}.  Alternative misspecification can affect posterior calibration and power but not null e-validity, provided the numerator remains a normalized, predictably selected density and the denominator is the true conditional null.  Supplementary Sections~\ref{sec:stable-computation} and~\ref{sec:numerical-checks} describe the interpolated implementation and its numerical validation.

Analytic Gumbel monitoring is exact under Assumption~\textup{(A1)} together with the Gumbel-max key law in Proposition~\ref{prop:gumbel}, when the numerator is normalized and chosen $\F_{t-1}$-measurably.  Tuning the prior or likelihood using the current or future monitored pivots invalidates the proof unless the resulting construction is separately shown to be e-valid; adaptation based only on past pivots remains permissible when it preserves predictable normalization.

\citet[Sections~3.3--3.4]{luo2026rao} transform a Gumbel pivot to $Z_t\sim\operatorname{Exp}(1)$ under $H_0$ and use one-step factors $(1-\lambda)e^{\lambda Z_t}$.  Their surrogate family has density $q_\kappa(z)=(1-\kappa)e^{-(1-\kappa)z}$; setting $\lambda=\kappa$ identifies the one-step factor with $q_\kappa/q_0$.  Their mixture averages fixed-$\lambda$ components with $\lambda$ shared across tokens and can therefore be interpreted as a shared-parameter Bayes factor for that surrogate family.  Here $\Delta$ indexes the $\Delta$-regular NTP family of \citet{li2025framework}; the analysis compares shared and tokenwise hierarchies for the Gumbel pivot.

\subsection{Relationship to the optimal scores of Li et al. (2025)}

Within a fixed one-parameter component family $f_\Delta$, take $\pi_\Delta=\delta_{\Delta_0}$.  Then
\begin{equation}
  \log B_n=\sum_{t=1}^n
  \log\frac{f_{\Delta_0}(Y_t)}{f_0(Y_t)}.
\end{equation}
For Gumbel with the least-favorable component $f_{\Delta_0}=f_{\Delta_0}^{\mathrm{gum},\star}$ of \eqref{eq:gumbel-lf}, this is exactly the likelihood-ratio statistic underlying the optimal score of \citet{li2025framework}.  The equivalence is specific to that family.  The equal-tail baseline evaluated in Section~\ref{sec:experiments} instead uses the spike component \eqref{eq:gumbel-spike}, so its point-prior limit is a different statistic; the empirical consequences are evaluated in Supplementary Section~\ref{sec:family}.  A nondegenerate prior maximizes prior-averaged power for its predictive alternative but is not generally minimax over the $\Delta$-regular class.

Stable log-space evaluation, Algorithm~\ref{alg:bf}, and implementation details are given in Supplementary Section~\ref{sec:stable-computation}.

\section{Numerical Experiments}
\label{sec:experiments}

\subsection{Design and competitors}

Two clean synthetic designs are adapted from \citet{li2025framework}.  The vocabulary has $M=1000$ tokens and, conditional on a deficit $\Delta$, the alternative uses the equal-tail spike NTP $\bP_\Delta^{\mathrm{sp}}$ defined in \eqref{eq:spike-ntp}.  Supplementary Table~\ref{tab:design} gives the common design.  Each fixed-horizon score is calibrated separately using 10,000 exact-null sequences and evaluated on independent samples of 5,000 null and 5,000 alternative sequences.

Score-distribution atoms are randomized to attain 5\% size in the calibration sample.  Reported error rates average over this boundary randomization, and paired comparisons use a common auxiliary uniform for each document.

Parentheses report evaluation-sample Monte Carlo standard errors conditional on the calibrated cutoffs.  Synthetic error rates use plug-in binomial standard errors, except the clean-benchmark indicator rows (Supplementary Tables~\ref{tab:shared} and~\ref{tab:family}), which use the standard deviation of conditional miss contributions divided by the square root of the sample size.  Maximum-regret standard errors use 2,000 paired document-bootstrap replicates.  Supplementary Section~\ref{sec:numerical-checks} specifies the bootstrap design, the fractional-rate variance approximation, and excluded sources of uncertainty.

The Gumbel reference scores evaluated by \citet{li2025framework} are
\begin{equation}
  h_{\mathrm{ars}}(r)=-\log(1-r),\qquad
  h_{\log}(r)=\log r,\qquad
  h_{\mathrm{ind}}(r)=\ind\{r\ge e^{-1}\},
\end{equation}
and the least-favorable log likelihood ratios $h_{\mathrm{gum},\Delta}^\star$ at $\Delta\in\{0.1,0.01,0.005\}$, the tunings displayed across Figures~6--7 of \citet{li2025framework}.  Every fixed-horizon method sums its token scores and uses its own null cutoff.  The Bayesian score is $\log B_n$, also fixed-horizon calibrated for the power comparison.  The uncalibrated anytime rule $\max_{t\le n}B_t\ge20$ is evaluated separately in Supplementary Section~\ref{sec:anytime-monitoring}.  Hereafter, ``reference score'' denotes one of these baseline or minimax scores evaluated by \citet{li2025framework}.

The Gumbel comparison differs in component family.  The reference scores $h_{\mathrm{gum},\Delta}^\star$ are built from the least-favorable family \eqref{eq:gumbel-lf}, whereas the Bayes detector uses the spike family \eqref{eq:gumbel-spike}, which generates the data.  In the shared-$\Delta$ Gumbel scenario, the Bayes model is matched in family, deficit range, and hierarchy, although it still averages over an unobserved deficit.  Supplementary Section~\ref{sec:family} separates the component-family difference from prior averaging.  The numerical specification in \citet{li2025framework} draws $\Delta_t$ independently at each token and uses an equal tail.  By contrast, the accompanying simulation snapshot samples one deficit per simulated document and, at every token, redraws tail probabilities by normalizing independent uniform variates \citep{li2024watermarkframework}.  To isolate deficit persistence from tail-shape variation, both clean designs use the equal tail:
\begin{enumerate}
  \setlength{\itemsep}{0pt}
  \setlength{\parsep}{0pt}
  \item \emph{Shared-deficit equal-tail design:} draw one $\Delta\sim\Unif(0.001,0.5)$ per document and use it at every token.
  \item \emph{Tokenwise-deficit equal-tail design:} draw $\Delta_t\stackrel{\mathrm{iid}}{\sim}\Unif(0.001,0.5)$ at every token.
\end{enumerate}
The shared-deficit design isolates the document-level persistence in the simulation snapshot; the tokenwise-deficit design implements the published latent structure.  Conditional on the deficits, pivots are generated from the exact finite-$M$ watermark and null laws of Section~\ref{sec:pivots}.

\FloatBarrier
\subsection{Shared-deficit equal-tail experiment}
\label{sec:shared-sensitivity}

Figure~\ref{fig:shared} reports fixed-horizon Type~II error for the shared-deficit experiment.  The panel includes equal-tail, Dirichlet tail-shape, and union-tail versions of both hierarchies.  The shared hierarchy integrates the persistent component after multiplying token likelihoods, whereas the tokenwise hierarchy integrates the position-specific parameters before multiplication.  For the union-tail rules, the tail state $S$ remains document-shared in both hierarchies.

\begin{figure}[tbp]
\centering
\includegraphics[width=\textwidth]{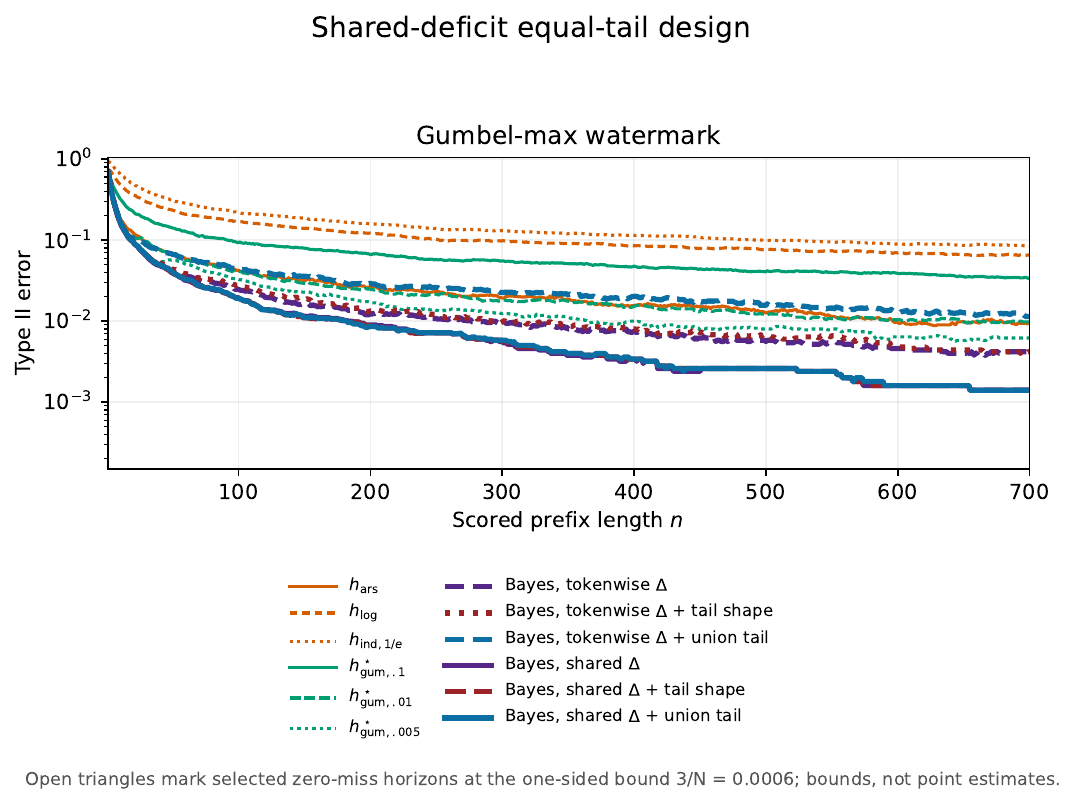}
\caption{Fixed-horizon Type~II error in the shared-deficit equal-tail experiment.  Each rule uses its independently calibrated 5\% null cutoff.  The shared-$\Delta$ and tokenwise-$\Delta$ Bayes labels denote the two hierarchies in Figure~\ref{fig:hierarchical-model}; unqualified labels use equal tails, ``tail shape'' denotes the Dirichlet tail-shape prior, and ``union tail'' uses $w=.5$.  Colour identifies the score or tail family, and line style distinguishes members and hierarchies.  Curves show positive error estimates; open triangles, when present, mark selected zero-miss horizons at the one-sided $3/N$ upper bound and are not connected to the estimate curves.}
\label{fig:shared}
\end{figure}

Supplementary Table~\ref{tab:shared} reports selected horizons.  For Gumbel at $n=700$, the shared Bayes rule misses 7 of 5,000 documents, $0.0014$ (Wilson 95\% interval $[0.0007,0.0029]$), compared with 31 misses, $0.0062$ $[0.0044,0.0088]$, for the best tested Gumbel reference score $h^\star_{\mathrm{gum},.005}$.  The relative reduction is 77.4\%, and all 24 discordant errors favor the shared rule (exact two-sided McNemar $p=1.19\times10^{-7}$).  The respective Type~I estimates are $0.0452$ and $0.0548$.  The tokenwise Bayes error is $0.0042$.

Because the best reference score is selected from the displayed score grid, the paired p-values are descriptive.  Holm correction over all 18 Bayes-versus-reference-score comparisons (six scores at three horizons) leaves the three Gumbel comparisons significant, with adjusted $p=1.4\times10^{-15}$, $2.2\times10^{-9}$, and $1.2\times10^{-7}$ at $n=100,300,700$.  The comparison family, discordance counts, intervals, and adjusted p-values are provided in the companion numerical summaries indexed in Supplementary Section~\ref{sec:computational-materials}.  Supplementary Section~\ref{sec:family} attributes the Gumbel difference to the component family rather than to prior averaging.

The Dirichlet-layer comparison estimates the effect of averaging over tail shape under an equal-tail generator.  At $n=700$, the shared layer and spike rules reject the same 4{,}993 documents.  At $n=100$ and $n=300$, their discordances are 3 versus 3 and 1 versus 0.  Their errors are $.0194/.0058/.0014$ and $.0194/.0056/.0014$, respectively; these counts do not resolve a difference for the shared rules.  At $n=100$, the tokenwise layer rule misses 16 documents caught by the tokenwise spike rule, while the reverse discordance is 2; their errors are $.0274$ and $.0246$ (exact two-sided McNemar $p=.00131$, Holm-adjusted $p=.00787$ across the six layer-versus-spike comparisons).

When $\Delta_t$ is instead redrawn independently at every token, all three tokenwise Bayesian mixtures have no observed misses among 5,000 alternatives by $n=100$; the one-sided 95\% upper bound is $0.000599$.  The complete tokenwise experiment is reported in Supplementary Section~\ref{sec:tokenwise-supplement}.  The comparison depends on the generating hierarchy, and the equal-tail component is matched to both clean generators.

\FloatBarrier
\subsection{Robustness to deficit and tail-shape specifications}
\label{sec:synthetic-sensitivity}

Two further sweeps separate sensitivity to the deficit distribution from
sensitivity to the residual-tail shape.  The first applies the same calibrated
rules to six equal-tail deficit regimes; the second holds the deficit law fixed
and varies six full-width tail laws, including a normalized-uniform law outside
the Dirichlet family.  Priors and scores are not retuned within either sweep, so these comparisons assess robustness of a fixed detector across generating distributions.
Supplementary Sections~\ref{sec:regimes} and~\ref{sec:tails} give the designs,
complete error tables, and paired-bootstrap maximum-regret estimates.

In the deficit sweep at $n=100$, the three published least-favorable tunings
have maximum regrets $.2322$, $.3162$, and $.6756$, compared with $.0036$ for
the shared equal-tail mixture.  The union-tail mixture does not attain the smallest observed maximum regret
at any of the three horizons under these equal-tail generators.  In the tail-shape sweep at
$n=100$, the concentration mixture has maximum regret $.0014$, compared with
$.0046$ for the equal-tail rule, but many individual differences involve only
one to three documents and are not statistically resolved.  These comparisons
are conditional on the generating families and the finite menu of scored rules;
they do not establish minimax guarantees or an ordering of population regrets
among close entries.  Both sweeps hold tail width fixed, motivating the
separate width experiment below.

\subsection{Robustness to tail-width misspecification}
\label{sec:tail-width-sweep}

The deficit and tail-shape sweeps summarized in
Section~\ref{sec:synthetic-sensitivity} hold the tail width at $M-1$.
In particular, Supplementary Table~\ref{tab:tails} varies tail shape without
varying its support, so it does not evaluate the width component of
\eqref{eq:union-tail} under width misspecification.  This distinction matters
because Section~\ref{sec:temperature-matched} attributes essentially the entire
matched-data improvement within the Bayesian family to that component.

Table~\ref{tab:tail-widths} varies this additional axis.  The deficit is placed
equally on $J$ coordinates and zero on the rest, the width branch of
\eqref{eq:hier-tail-state}; $J=M-1$ recovers the equal tail exactly.
The width experiment therefore complements the full-width tail-shape sweep.

For narrow generating widths, the union tail has lower Type~II error than the
equal-tail layer that it generalizes.  At $J=1$ and $n=700$, the respective
errors are $.0266$ and $.0726$; at $J=4$, they are $.0090$ and $.0652$.  Under
the same width misspecification, the hierarchical tail-shape model has error
$.4934$ at $J=1$ and $n=700$, nearly seven times the equal-tail layer's error,
and the single-shape layer at $\alpha_0=0.1$ has error $.7472$, the largest entry
in the table.  Thus, averaging over dispersed tail shapes does not accommodate a
tail whose support is narrow rather than merely uneven.

Across the four widths, the six shape laws of Supplementary Table~\ref{tab:tails}, and all
three horizons, the union tail has maximum regret $.0080$, compared with $.0962$
for the equal-tail layer, $.4668$ for the shape mixture, and $.5486$, $.3716$,
and $.0792$ for the three published tunings of $h^\star_{\mathrm{gum}}$.  The
variation across these three tunings exceeds the difference between the
best-performing tuning and $h_{\mathrm{ars}}$, whose maximum regret is $.0288$.
The corresponding values are $.1518$ for $h_{\log}$ and $.2082$ for
$h_{\mathrm{ind},1/e}$.  The maximum regret of $h_{\mathrm{ars}}$ is about 3.5
times that of the union tail.  This finite-grid regret reduction is evidence
of robustness to residual-tail misspecification.  The archived benchmark data
yield estimates of one or two effective tail coordinates, bracketed by the W1
and W2 columns.

\begin{table}[p]
\centering
\scriptsize
\renewcommand{\arraystretch}{0.9}
\setlength{\tabcolsep}{3pt}
\setlength{\aboverulesep}{0pt}
\setlength{\belowrulesep}{0pt}
\setlength{\extrarowheight}{0pt}

\caption{Tail-\emph{width} sweep for the shared-$\Delta$ Gumbel generator.  The deficit is spread equally over $J$ tail coordinates and is zero on the rest: W1--W4 have $J=1,4,16,64$.  W1 and W2 bracket the effective width of one or two coordinates that Supplementary Section~\ref{sec:released-width} fits to the archived benchmark data of \citet{li2025framework}.  At $J=M-1$ the law is exactly the equal tail of Supplementary Table~\ref{tab:tails}, so these regimes extend that sweep along the one axis it holds fixed.  Type~II entries carry binomial Monte Carlo standard errors; maximum regret is the largest excess over the best rule scored in the sweep, across the four widths, and its standard error is a paired document bootstrap over 2000 replicates, since a maximum of differences is not a binomial proportion.  Bold marks observed column minima, ties included.  Tail shape is the Dirichlet layer \eqref{eq:gumbel-dirichlet} at concentration $\alpha$; the union tail \eqref{eq:union-tail} mixes it with the width ladder \eqref{eq:gumbel-tailwidth}.  Fixed-deficit equal-tail scores are treated as diagnostics and excluded from the regret benchmark; they need not match the generating deficit or tail family.  Table~\ref{tab:family} gives a clean-benchmark illustration; the computational-materials index in Supplementary Section~\ref{sec:computational-materials} identifies the full diagnostics for this sweep.  The $h^\star_{\mathrm{gum},\Delta_0}$ rows are the least-favorable score of \citet{li2025framework} at its three published tunings.}
\label{tab:tail-widths}
\begin{tabular}{@{}lrrrrcr@{}}
\toprule
& \multicolumn{4}{c}{Type~II error by generating tail width} & & Max \\
\cmidrule(lr){2-5}\cmidrule(lr){7-7}
Rule & W1 & W2 & W3 & W4 & & regret \\
\midrule
\multicolumn{7}{@{}l}{\textit{Horizon $n=100$}}\\
Tail shape, $\alpha=\infty$ (Table~\ref{tab:shared} rule) & \mcse{.1954}{.0056} & \mcse{.1638}{.0052} & \mcse{.1136}{.0045} & \mcse{.0392}{.0027} & & \mcse{.0962}{.0043} \\
Tail shape, $\alpha_0=1000$ & \mcse{.1954}{.0056} & \mcse{.1640}{.0052} & \mcse{.1136}{.0045} & \mcse{.0392}{.0027} & & \mcse{.0964}{.0043} \\
Tail shape, $\alpha_0=100$ & \mcse{.1956}{.0056} & \mcse{.1640}{.0052} & \mcse{.1132}{.0045} & \mcse{.0378}{.0027} & & \mcse{.0964}{.0043} \\
Tail shape, $\alpha_0=10$ & \mcse{.1980}{.0056} & \mcse{.1630}{.0052} & \mcse{.1068}{.0044} & \mcse{.0326}{.0025} & & \mcse{.0954}{.0042} \\
Tail shape, $\alpha_0=1$ & \mcse{.2316}{.0060} & \mcse{.1762}{.0054} & \mcse{.0776}{.0038} & \mcse{.0228}{.0021} & & \mcse{.1086}{.0043} \\
Tail shape, $\alpha_0=0.1$ & \mcse{.7444}{.0062} & \mcse{.3188}{.0066} & \mcse{.0454}{.0029} & \bestmcse{.0202}{.0020} & & \mcse{.6126}{.0074} \\
Bayes, shared $\Delta$ $+$ tail shape & \mcse{.3786}{.0069} & \mcse{.2072}{.0057} & \mcse{.0440}{.0029} & \mcse{.0204}{.0020} & & \mcse{.2468}{.0066} \\
Bayes, shared $\Delta$ $+$ union tail & \mcse{.1398}{.0049} & \bestmcse{.0676}{.0036} & \mcse{.0368}{.0027} & \mcse{.0262}{.0023} & & \bestmcse{.0080}{.0022} \\
\addlinespace[1pt]
$h_{\mathrm{ars}}$ & \mcse{.1372}{.0049} & \mcse{.0774}{.0038} & \mcse{.0582}{.0033} & \mcse{.0490}{.0031} & & \mcse{.0288}{.0024} \\
$h_{\log}$ & \mcse{.1774}{.0054} & \mcse{.1676}{.0053} & \mcse{.1626}{.0052} & \mcse{.1548}{.0051} & & \mcse{.1346}{.0045} \\
$h_{\mathrm{ind},1/e}$ & \mcse{.2182}{.0058} & \mcse{.2242}{.0059} & \mcse{.2137}{.0058} & \mcse{.2112}{.0058} & & \mcse{.1910}{.0052} \\
$h^\star_{\mathrm{gum},.1}$ & \bestmcse{.1318}{.0048} & \mcse{.0916}{.0041} & \mcse{.0886}{.0040} & \mcse{.0830}{.0039} & & \mcse{.0628}{.0034} \\
$h^\star_{\mathrm{gum},.01}$ & \mcse{.5034}{.0071} & \mcse{.0762}{.0038} & \mcse{.0384}{.0027} & \mcse{.0380}{.0027} & & \mcse{.3716}{.0075} \\
$h^\star_{\mathrm{gum},.005}$ & \mcse{.6804}{.0066} & \mcse{.1656}{.0053} & \bestmcse{.0346}{.0026} & \mcse{.0308}{.0024} & & \mcse{.5486}{.0077} \\
\addlinespace[1pt]
\multicolumn{7}{@{}l}{\textit{Horizon $n=300$}}\\
Tail shape, $\alpha=\infty$ (Table~\ref{tab:shared} rule) & \mcse{.1136}{.0045} & \mcse{.0994}{.0042} & \mcse{.0646}{.0035} & \mcse{.0130}{.0016} & & \mcse{.0766}{.0038} \\
Tail shape, $\alpha_0=1000$ & \mcse{.1136}{.0045} & \mcse{.0994}{.0042} & \mcse{.0644}{.0035} & \mcse{.0130}{.0016} & & \mcse{.0766}{.0038} \\
Tail shape, $\alpha_0=100$ & \mcse{.1134}{.0045} & \mcse{.0994}{.0042} & \mcse{.0644}{.0035} & \mcse{.0124}{.0016} & & \mcse{.0766}{.0038} \\
Tail shape, $\alpha_0=10$ & \mcse{.1170}{.0045} & \mcse{.0994}{.0042} & \mcse{.0632}{.0034} & \mcse{.0110}{.0015} & & \mcse{.0766}{.0038} \\
Tail shape, $\alpha_0=1$ & \mcse{.1642}{.0052} & \mcse{.1238}{.0047} & \mcse{.0538}{.0032} & \mcse{.0060}{.0011} & & \mcse{.1070}{.0042} \\
Tail shape, $\alpha_0=0.1$ & \mcse{.7280}{.0063} & \mcse{.2362}{.0060} & \mcse{.0186}{.0019} & \bestmcse{.0044}{.0009} & & \mcse{.6708}{.0068} \\
Bayes, shared $\Delta$ $+$ tail shape & \mcse{.3596}{.0068} & \mcse{.1670}{.0053} & \mcse{.0190}{.0019} & \bestmcse{.0044}{.0009} & & \mcse{.3024}{.0066} \\
Bayes, shared $\Delta$ $+$ union tail & \mcse{.0588}{.0033} & \mcse{.0258}{.0022} & \bestmcse{.0118}{.0015} & \mcse{.0076}{.0012} & & \bestmcse{.0032}{.0008} \\
\addlinespace[1pt]
$h_{\mathrm{ars}}$ & \bestmcse{.0572}{.0033} & \mcse{.0348}{.0026} & \mcse{.0272}{.0023} & \mcse{.0276}{.0023} & & \mcse{.0232}{.0021} \\
$h_{\log}$ & \mcse{.0974}{.0042} & \mcse{.0990}{.0042} & \mcse{.0954}{.0042} & \mcse{.0950}{.0041} & & \mcse{.0906}{.0037} \\
$h_{\mathrm{ind},1/e}$ & \mcse{.1238}{.0047} & \mcse{.1274}{.0047} & \mcse{.1290}{.0047} & \mcse{.1230}{.0046} & & \mcse{.1186}{.0038} \\
$h^\star_{\mathrm{gum},.1}$ & \mcse{.0590}{.0033} & \mcse{.0516}{.0031} & \mcse{.0522}{.0031} & \mcse{.0508}{.0031} & & \mcse{.0464}{.0028} \\
$h^\star_{\mathrm{gum},.01}$ & \mcse{.1438}{.0050} & \bestmcse{.0228}{.0021} & \mcse{.0176}{.0019} & \mcse{.0178}{.0019} & & \mcse{.0866}{.0045} \\
$h^\star_{\mathrm{gum},.005}$ & \mcse{.3292}{.0066} & \mcse{.0242}{.0022} & \mcse{.0124}{.0016} & \mcse{.0146}{.0017} & & \mcse{.2720}{.0065} \\
\addlinespace[1pt]
\multicolumn{7}{@{}l}{\textit{Horizon $n=700$}}\\
Tail shape, $\alpha=\infty$ (Table~\ref{tab:shared} rule) & \mcse{.0726}{.0037} & \mcse{.0652}{.0035} & \mcse{.0368}{.0027} & \mcse{.0082}{.0013} & & \mcse{.0580}{.0033} \\
Tail shape, $\alpha_0=1000$ & \mcse{.0726}{.0037} & \mcse{.0652}{.0035} & \mcse{.0368}{.0027} & \mcse{.0082}{.0013} & & \mcse{.0580}{.0033} \\
Tail shape, $\alpha_0=100$ & \mcse{.0730}{.0037} & \mcse{.0648}{.0035} & \mcse{.0362}{.0026} & \mcse{.0082}{.0013} & & \mcse{.0576}{.0033} \\
Tail shape, $\alpha_0=10$ & \mcse{.0764}{.0038} & \mcse{.0666}{.0035} & \mcse{.0350}{.0026} & \mcse{.0068}{.0012} & & \mcse{.0594}{.0034} \\
Tail shape, $\alpha_0=1$ & \mcse{.2478}{.0061} & \mcse{.1850}{.0055} & \mcse{.0566}{.0033} & \mcse{.0032}{.0008} & & \mcse{.2212}{.0059} \\
Tail shape, $\alpha_0=0.1$ & \mcse{.7472}{.0061} & \mcse{.2114}{.0058} & \mcse{.0082}{.0013} & \bestmcse{.0024}{.0007} & & \mcse{.7206}{.0064} \\
Bayes, shared $\Delta$ $+$ tail shape & \mcse{.4934}{.0071} & \mcse{.1850}{.0055} & \mcse{.0082}{.0013} & \bestmcse{.0024}{.0007} & & \mcse{.4668}{.0072} \\
Bayes, shared $\Delta$ $+$ union tail & \bestmcse{.0266}{.0023} & \mcse{.0090}{.0013} & \bestmcse{.0042}{.0009} & \mcse{.0038}{.0009} & & \bestmcse{.0018}{.0006} \\
\addlinespace[1pt]
$h_{\mathrm{ars}}$ & \mcse{.0272}{.0023} & \mcse{.0208}{.0020} & \mcse{.0142}{.0017} & \mcse{.0148}{.0017} & & \mcse{.0136}{.0014} \\
$h_{\log}$ & \mcse{.0608}{.0034} & \mcse{.0582}{.0033} & \mcse{.0618}{.0034} & \mcse{.0594}{.0033} & & \mcse{.0576}{.0027} \\
$h_{\mathrm{ind},1/e}$ & \mcse{.0817}{.0039} & \mcse{.0779}{.0038} & \mcse{.0804}{.0038} & \mcse{.0794}{.0038} & & \mcse{.0770}{.0031} \\
$h^\star_{\mathrm{gum},.1}$ & \mcse{.0314}{.0025} & \mcse{.0304}{.0024} & \mcse{.0290}{.0024} & \mcse{.0320}{.0025} & & \mcse{.0296}{.0022} \\
$h^\star_{\mathrm{gum},.01}$ & \mcse{.0312}{.0025} & \mcse{.0094}{.0014} & \mcse{.0092}{.0014} & \mcse{.0102}{.0014} & & \mcse{.0078}{.0012} \\
$h^\star_{\mathrm{gum},.005}$ & \mcse{.0906}{.0041} & \bestmcse{.0072}{.0012} & \mcse{.0070}{.0012} & \mcse{.0076}{.0012} & & \mcse{.0640}{.0036} \\
\bottomrule
\end{tabular}
\end{table}

\FloatBarrier
\subsection{Hierarchical extensions}
\label{sec:hierarchical-summary}

Supplementary Section~\ref{sec:hierarchical-extensions} develops hierarchical
models in which token-specific deficits or tail widths are conditionally
independent given document-level hyperparameters.  Integrating over those
hyperparameters induces marginal dependence and partial pooling.  These models
include the corresponding shared and independent specifications as special
cases; the width extension retains a document-shared deficit.

In the fixed-marginal persistence experiments, hierarchical and shared models
differ in estimated Type~II error by at most $.0012$ for deficits and $.0038$
for widths.  After the reported Holm adjustments, no hierarchical-versus-shared
deficit contrast is significant; two width contrasts are significant, both
under independent token-specific widths.  The nonsignificant contrasts do not
establish equivalence.  On the temperature-matched data of
Section~\ref{sec:temperature-matched}, the hierarchical deficit model increases
estimated AUC over the shared-deficit model by $.0004$--$.0031$, with mean
$.0015$.  These AUC comparisons are descriptive and outside the Holm family.
The observed changes from additional hierarchical dependence are smaller than
those from extending residual-tail support in the evaluated designs.

\FloatBarrier
\subsection{Archived benchmark data of Li et al.}
\label{sec:released-output}

We first reanalyze the benchmark data accompanying \citet{li2025framework},
\emph{A Statistical Framework of Watermarks for Large Language Models:
Pivot, Detection Efficiency and Optimal Rules}.  Their \emph{WatermarkFramework}
GitHub repository \citep{li2024watermarkframework}, at the commit specified in
that reference, archives the data and reference implementation used here.
For each of OPT-1.3B \citep{zhang2022opt} and Sheared-LLaMA-2.7B
\citep{xia2023sheared}, the data contain 500 watermarked and 500 unwatermarked
continuations sharing the same C4 prompts \citep{raffel2020exploring}.
These are pre-existing benchmark data, not the temperature-matched data
generated for this study in Section~\ref{sec:temperature-matched}.
The watermarked arm uses temperature $.1$, whereas
the unwatermarked generator samples from unscaled logits, corresponding to
temperature one.  A statistic based only on the number of distinct pivot values
has AUC $.997$--$.998$ and Type~II error $.004$--$.012$ on these samples.
Thus, the comparison on these archived data conflates watermark sensitivity with sensitivity
to low-temperature repetition and does not identify a matched-temperature
ranking of detection power.

These benchmark data also provide exploratory evidence against the full-width
tail specification.  Profiling the tail-width likelihood on first-use seed
positions gives effective widths $J=2$ for OPT-1.3B and $J=1$ for
Sheared-LLaMA-2.7B.  These estimates motivate allowing narrow tails, but they
neither establish the adequacy of the equal-tail width family nor imply that
only one or two residual coordinates have nonzero probability.  The reference
keyed sampler of Li et al. can reuse seeds, so the conditional-null assumption required for
anytime validity is not verified.  Supplementary
Section~\ref{sec:released-analysis} reports our comparisons on these archived data,
repetition diagnostics, width estimates, and replay checks; the analysis is
restricted to fixed-horizon empirical errors.  Section~\ref{sec:temperature-matched}
regenerates both arms at a common temperature to remove the temperature confound.

\FloatBarrier

\subsection{Temperature-matched regeneration}
\label{sec:temperature-matched}

Both arms are regenerated at the same temperature for each model, using the
key, sampler, and pivot functions from the reference implementation of Li et al. \citep{li2024watermarkframework}.
The unwatermarked generator is modified to divide its logits by the same
temperature as the watermarked generator.  Temperatures are
$\{.1,.2,.3,.4,.5,.6,.7,1\}$, with 200 scored tokens at $.3$ and below and 100
above.  Each arm contains 2500 documents per model--temperature cell in
$[.2,.5]$ and 500 elsewhere.  The larger sample combines the 500 original benchmark
prompts with 2000 additional prompts from the same C4
\texttt{realnewslike} configuration under the same selection rule.
Supplementary Section~\ref{sec:matched-design-details} gives the prompt
construction and null-pivot replay details.

At each prefix, the cutoff is estimated from the full unwatermarked arm without using watermarked scores.  The empirical quantile rule equalizes realized size when the relevant null order statistics are untied; continuity of a score formula alone does not exclude ties in finite-precision pivots.  Type~I error is estimated by five-fold cross-fitting, and area under the curve is also reported as a cutoff-free estimand.  Cross-fitted Type~I error averages $.0505$ across the 152 continuous-rule-by-cell estimates (19 rules in eight cells), ranging $.0476$ to $.0528$, and averages $.0410$ for $h_{\mathrm{ind},1/e}$, whose count statistic is conservative under the strict inequality used to reject.

Table~\ref{tab:temperature-matched} gives the endpoint at 100 tokens; the hierarchical deficit comparator is specified in Supplementary Section~\ref{sec:deficit-pooling}.  Temperatures $.2$ to $.5$ define the non-saturated comparison window: at $.1$ no rule reaches power $.20$ at 100 tokens, and by $.7$ every rule detects nearly all alternatives.  Averaging the Type~II errors over the four temperatures within each model yields the lowest mean for $h_{\mathrm{ars}}$, followed by $h^\star_{\mathrm{gum},.1}$ and the union-tail mixture.  This mean ordering does not hold in every individual cell.

For cutoff-free inference, we test the corresponding AUC contrasts rather than differences in Type~II error at a single cutoff.  Within each model, the four temperature cells share a model-specific prompt set, and within each cell the null and watermarked arms are paired by prompt.  The two models need not share identical prompts, but the within-model overlap already makes the eight signs dependent, so no independent-cell sign test is used.  The analytic variance of \citet{delong1988comparing} accounts for the correlation between rules scored on the same documents, but not for that pairing between the two class samples, so inference is taken from a joint prompt-cluster bootstrap: each of $2{,}000$ replicates resamples prompt indices once and applies the same draw to both arms and every rule, preserving the pairing and the between-rule correlation together.  Reported $p$-values are normal-tail values computed from the bootstrap standard error, not empirical bootstrap-tail probabilities; the percentile probabilities are recorded alongside them.  Significance is Holm-adjusted across the forty-eight reported contrasts; the full numerical summaries are included in the companion repository indexed in Supplementary Section~\ref{sec:computational-materials}.  Supplementary Section~\ref{sec:matched-inference-details} gives further bootstrap reporting details and the secondary 500-document comparison.

The bootstrap conditions on the specified rules and captures prompt-sampling variability, but not the uncertainty associated with selecting a rule.  The tail-width family and the equal mixing weight were motivated in Section~\ref{sec:tail-width} partly by the archived benchmark data of Li et al., whose 500 prompts are the first 500 of the 2500 used here.  Thus, one fifth of the analysis sample also contributed to model development.  The analysis is repeated on the 2000 extension prompts alone, excluding all prompts used in that development; the bootstrap procedure and Holm family are unchanged apart from the reduction from 2500 to 2000 documents per cell.   On the holdout, the union tail has higher AUC than the shared-$\Delta$ rule in all eight cells, with differences from $.0096$ to $.0568$, and each contrast is significant.  The width-only prior has higher AUC in all eight cells, with differences from $.0096$ to $.0560$, and $w=0$ has higher AUC than $w=1$ in all eight cells, with differences from $.0098$ to $.0477$; all sixteen contrasts are significant.  Every one of these twenty-four contrasts has a Holm-adjusted $p$ below $1.2\times10^{-7}$.  The effect estimates remain similar, and all twenty-four contrasts remain significant despite wider intervals; twenty point estimates increase and four decrease slightly when the reused prompts are removed.  The two comparisons that are marginal in the full sample remain marginal: the shape-only prior has a positive contrast in six rather than seven cells and a significant contrast in two rather than three.  The $h_{\mathrm{ars}}$ versus union-tail contrast is positive in all eight holdout cells rather than seven, remains significant in five, and ranges from $.0003$ to $.0113$.  Because no rule is re-estimated on the holdout, this analysis addresses overlap in prompts but not reuse of the model-development rationale.  The estimated tail-width improvements nevertheless persist without overlapping prompts.

Sweeping the mixing weight separates the two model components of \eqref{eq:union-tail}.  The width-only prior $w=0$ has higher AUC than the shared-$\Delta$ rule in all eight cells, with significant differences ranging from $.0091$ at the most saturated cell to $.0550$ at the least.  The shape-only prior $w=1$ has higher AUC in seven of eight cells, but only three differences are significant and the largest is $.0076$.  Comparing the endpoints, $w=0$ has higher AUC than $w=1$ in all eight cells, with significant differences from $.0091$ to $.0475$.  Thus, the tail-width component accounts for essentially the entire estimated improvement within the Bayesian family on these data, consistent with the fitted width of one on Sheared-LLaMA-2.7B and two on OPT-1.3B reported in Supplementary Section~\ref{sec:released-width}.

Table~\ref{tab:union-weight} evaluates an eleven-point grid in $w$, and Supplementary Section~\ref{sec:union-weight-sweep} reports the broad region over which the estimated AUC is nearly constant.  The evidence supports assigning positive prior mass to the tail-width component but does not identify a precise interior value of $w$.

The AUC of $h_{\mathrm{ars}}$ exceeds that of $h^\star_{\mathrm{gum},.1}$ in all eight cells, but the contrast is significant in only three, all on Sheared-LLaMA, and its magnitude never exceeds $.006$.  For $h_{\mathrm{ars}}$ versus the union-tail mixture, the contrast favors $h_{\mathrm{ars}}$ in seven of eight cells and is significant in five.  In the remaining cell, OPT-1.3B at temperature $.5$, the union-tail AUC is higher, although both AUCs round to $.9963$.

The variation within the $h^\star_{\mathrm{gum}}$ family demonstrates sensitivity to the tuning constant.  On OPT-1.3B at temperature $.3$, the three tuning constants of \citet{li2025framework} give Type~II errors of $.2268$, $.3932$, and $.4904$.  Selection among these constants would require an external criterion.  Averaging over $\Delta$ alone does not reproduce the lowest error: the plain shared-$\Delta$ rule has error $.4396$ in this cell, between the middle and largest errors among the three tunings.  The union-tail error is $.2512$, exceeding the lowest tuned error by $.0244$ and below the middle and largest tuned errors by $.1420$ and $.2392$.  This rule averages over both $\Delta$ and the enlarged tail family.

\begin{table}[htbp]
\centering
\scriptsize
\setlength{\tabcolsep}{2.2pt}
\caption{Temperature-matched regeneration: Type~II error at 100 tokens, with both arms generated at the temperature shown.  Each cell uses 2500 documents per arm.  Cutoffs are taken at each prefix from the whole unwatermarked arm, which equalizes realized size when the relevant null order statistics are untied; cross-fitted Type~I error averages $.0505$, ranging $.0476$--$.0528$ across the 152 continuous-rule-by-cell estimates (19 rules in eight cells), against a mean of $.0410$ and a range of $.0360$--$.0464$ across the eight cells for $h_{\mathrm{ind},1/e}$, whose statistic is a count and is left conservative by the strict inequality used to reject.  Columns outside $[.2,.5]$ are omitted: at temperature $.1$ every rule has Type~II error above $.80$ at that horizon, so the thresholded errors are weakly separated, and by $.7$ all have saturated.  The union-tail model sweeps the mixing weight $w$ of \eqref{eq:union-tail}: $w=1$ is the Dirichlet shape prior at full width, $w=0$ the tail-width prior at $\alpha=\infty$, and both endpoints are exact sub-models of the union.  Boldface marks the lowest Type~II error in each column.}
\label{tab:temperature-matched}
\begin{tabular}{@{}lrrrrrrrr@{}}
\toprule
& \multicolumn{4}{c}{OPT-1.3B} & \multicolumn{4}{c}{Sheared-LLaMA-2.7B} \\
\cmidrule(lr){2-5}\cmidrule(lr){6-9}
Rule & $.2$ & $.3$ & $.4$ & $.5$ & $.2$ & $.3$ & $.4$ & $.5$ \\
\midrule
$h_{\mathrm{ars}}$ & \bestmcse{.5420}{.0100} & \bestmcse{.2128}{.0082} & \bestmcse{.0460}{.0042} & \mcse{.0108}{.0021} & \bestmcse{.5340}{.0100} & \bestmcse{.2508}{.0087} & \mcse{.0676}{.0050} & \mcse{.0196}{.0028} \\
$h^\star_{\mathrm{gum},.1}$ & \mcse{.5616}{.0099} & \mcse{.2268}{.0084} & \mcse{.0564}{.0046} & \mcse{.0124}{.0022} & \mcse{.5568}{.0099} & \mcse{.2684}{.0089} & \mcse{.0672}{.0050} & \mcse{.0232}{.0030} \\
$h^\star_{\mathrm{gum},.01}$ & \mcse{.6948}{.0092} & \mcse{.3932}{.0098} & \mcse{.1332}{.0068} & \mcse{.0440}{.0041} & \mcse{.7244}{.0089} & \mcse{.4428}{.0099} & \mcse{.1952}{.0079} & \mcse{.0644}{.0049} \\
$h^\star_{\mathrm{gum},.005}$ & \mcse{.7260}{.0089} & \mcse{.4904}{.0100} & \mcse{.2228}{.0083} & \mcse{.0812}{.0055} & \mcse{.7628}{.0085} & \mcse{.5160}{.0100} & \mcse{.2740}{.0089} & \mcse{.1048}{.0061} \\
$h_{\log}$ & \mcse{.7340}{.0088} & \mcse{.4420}{.0099} & \mcse{.1568}{.0073} & \mcse{.0440}{.0041} & \mcse{.6880}{.0093} & \mcse{.4204}{.0099} & \mcse{.1700}{.0075} & \mcse{.0628}{.0049} \\
$h_{\mathrm{ind},1/e}$ & \mcse{.8112}{.0078} & \mcse{.5704}{.0099} & \mcse{.3452}{.0095} & \mcse{.0996}{.0060} & \mcse{.7692}{.0084} & \mcse{.5332}{.0100} & \mcse{.3264}{.0094} & \mcse{.1420}{.0070} \\
\addlinespace
Bayes, shared $\Delta$ (equal tail) & \mcse{.7320}{.0089} & \mcse{.4396}{.0099} & \mcse{.1528}{.0072} & \mcse{.0428}{.0040} & \mcse{.6836}{.0093} & \mcse{.4176}{.0099} & \mcse{.1664}{.0074} & \mcse{.0616}{.0048} \\
\quad hierarchical $\Delta$ (equal tail) & \mcse{.7336}{.0088} & \mcse{.4336}{.0099} & \mcse{.1456}{.0071} & \mcse{.0408}{.0040} & \mcse{.6816}{.0093} & \mcse{.4132}{.0098} & \mcse{.1564}{.0073} & \mcse{.0584}{.0047} \\
\quad $+$ tail shape only ($w=1$) & \mcse{.7252}{.0089} & \mcse{.4352}{.0099} & \mcse{.1424}{.0070} & \mcse{.0384}{.0038} & \mcse{.6808}{.0093} & \mcse{.4024}{.0098} & \mcse{.1656}{.0074} & \mcse{.0524}{.0045} \\
\quad $+$ union tail, $w=.9$ & \mcse{.6256}{.0097} & \mcse{.2508}{.0087} & \mcse{.0536}{.0045} & \mcse{.0092}{.0019} & \mcse{.6336}{.0096} & \mcse{.2760}{.0089} & \mcse{.0668}{.0050} & \bestmcse{.0160}{.0025} \\
\quad $+$ union tail, $w=.5$ & \mcse{.6176}{.0097} & \mcse{.2512}{.0087} & \mcse{.0544}{.0045} & \bestmcse{.0088}{.0019} & \mcse{.6268}{.0097} & \mcse{.2736}{.0089} & \mcse{.0664}{.0050} & \mcse{.0168}{.0026} \\
\quad $+$ union tail, $w=.1$ & \mcse{.6176}{.0097} & \mcse{.2512}{.0087} & \mcse{.0544}{.0045} & \bestmcse{.0088}{.0019} & \mcse{.6268}{.0097} & \mcse{.2740}{.0089} & \bestmcse{.0656}{.0050} & \mcse{.0168}{.0026} \\
\quad $+$ tail width only ($w=0$) & \mcse{.6176}{.0097} & \mcse{.2512}{.0087} & \mcse{.0544}{.0045} & \bestmcse{.0088}{.0019} & \mcse{.6272}{.0097} & \mcse{.2740}{.0089} & \bestmcse{.0656}{.0050} & \mcse{.0168}{.0026} \\
\bottomrule
\end{tabular}
\end{table}

The complete temperature-by-prefix curves for both models and the accompanying
repetition diagnostics are given in Supplementary
Section~\ref{sec:temperature-prefix-details}.

Sampling temperature changes the empirical deficit distribution and the fraction
of tokens outside the prior support $[.001,.5]$; Supplementary
Section~\ref{sec:deficit-regime} reports these summaries and
Supplementary Section~\ref{sec:deficit-support} examines a wider prior.
These observations concern the two evaluated models under the specified decoder,
not typical deployment conditions or other models and decoding rules.

\FloatBarrier
\section{Conclusion}

This manuscript develops the pivotal watermark framework of \citet{li2025framework}
toward robust detection under uncertain signal strength and residual-tail
structure, by integrating the alternative likelihood over these quantities.
A further extension averages over a contamination rate in
Supplementary Section~\ref{sec:contamination-supplement}.  The hierarchical extensions
in Supplementary Section~\ref{sec:hierarchical-extensions} specify token-level variation conditional on document-level hyperparameters, inducing
partial pooling through their marginal likelihoods.  The resulting Bayes
factor quantifies evidence under the specified models; posterior probabilities
additionally require prior odds, and optimal decisions depend on the loss
function.  Under the exact conditional null, normalized predictive densities
selected before each observation yield a test martingale.  Ville's inequality
then gives an anytime-valid threshold of $1/\alpha_{\mathrm{test}}$.  An
interpolated numerator certified to have mass at most one instead yields a
test supermartingale with the same bound; the implemented interpolation has
not been certified in this manner.

In the evaluated designs, residual-tail specification has a larger effect on
power than the additional hierarchical dependence structure.  On the
matched-temperature data, the union-tail model has higher AUC than the
shared-deficit equal-tail model in all eight cells of the non-saturated window,
with each difference significant after multiplicity adjustment.  In the
synthetic persistence experiments of Supplementary
Section~\ref{sec:hierarchical-extensions}, hierarchical extensions of the deficit or
width model change estimated Type~II error by at most $.0038$.  None of the
eighteen paired hierarchical-versus-shared deficit comparisons is significant;
two of the fifteen width comparisons are significant at the $5\%$ level, both
under independent token-specific widths.  Averaging over the deficit alone
does not account for the improvement from the union-tail specification.  In
the cell where the three published tunings of $h^\star_{\mathrm{gum}}$ give
Type~II errors from $.2268$ to $.4904$, the equal-tail mixture gives $.4396$,
whereas the union-tail model gives $.2512$.

\Needspace{3\baselineskip}
The principal robustness result concerns performance across alternative
specifications, rather than superiority at every alternative.  Across the six
full-width tail laws, four narrow-width regimes, and three horizons, the
union-tail model has maximum regret $.0080$, compared with $.0962$ for the
equal-tail Bayesian model.  The three published tunings of
$h^\star_{\mathrm{gum}}$ also give substantially different errors in the example
above.  These findings support using a fixed mixture across uncertain
alternatives, but constitute neither a minimax guarantee over a larger class nor
uniform dominance.  On the matched-temperature data, $h_{\mathrm{ars}}$ has
higher AUC than the union-tail model in seven of eight cells, with five
differences significant after adjustment.

Interpretation depends on the sampling design.  The archived watermarked and
unwatermarked benchmark samples of Li et al. have different temperatures and can be distinguished by
repetition statistics alone.  They provide evidence of residual-tail
misspecification but do not identify detector rankings under matched
generation conditions.  Regenerating both samples at the same temperature
removes this confounding; temperatures $.2$--$.5$ provide the informative
comparison range, with limited power below this range and near-perfect
discrimination above it.  Similarly, Supplementary Section~\ref{sec:deficit-pooling}
shows that the hierarchical deficit model's lower
estimated error under a Beta generative design is not reproduced under the
fixed-marginal copula design, which separates changes in persistence from
changes in the marginal deficit distribution.

The synthetic contamination experiment provides a separate robustness check: at 700
tokens with $60\%$ independent null-like replacement, the clean and
contamination-aware shared equal-tail rules have Type~II errors $.0050$ and
$.0056$, respectively.  This shows retained power under the evaluated dilution
mechanism, not an advantage from modeling contamination or protection against
arbitrary edits.  All empirical comparisons are conditional on the specified
grids and Monte Carlo samples.  Several leading rules differ by only one to
three misclassified documents; nonsignificant differences do not establish
equivalence, and no equivalence margin was prespecified.

\Needspace{6\baselineskip}
\section*{Computational materials}

Machine-readable numerical summaries, prefix-curve tables, document-level
rejection indicators, analysis code, and the stored temperature-matched
generation arrays and prompt tables accompany this technical report in the
companion repository:
\url{https://github.com/MaStatLab/PLRWatermark/}.
Supplementary Section~\ref{sec:computational-materials} and the accompanying
README identify their contents, sources, and reproducibility limitations.

\Needspace{6\baselineskip}
\section*{Acknowledgment}

The AI systems GPT-5.6 Sol, GPT-6 Astra, and Claude Opus 5 were used under the author's direction to implement the numerical experiments and draft substantial portions of the manuscript.  The author reviewed and edited the generated material.

\bibliographystyle{plainnat}
\bibliography{references}

\clearpage
\appendix
\section{Supplementary Materials}
\label{sec:supplementary}

\setcounter{equation}{0}
\renewcommand{\theequation}{S\arabic{equation}}
\renewcommand{\theHequation}{S\arabic{equation}}
\setcounter{algorithm}{0}
\renewcommand{\thealgorithm}{S\arabic{algorithm}}

\FloatBarrier
\subsection{Contamination}
\label{sec:contamination-supplement}

The working alternative of Section~\ref{sec:hierarchy} assumes that every
watermarked position is coupled to its key.  A deployed sequence may instead
include positions generated under an independently chosen key or replaced by
unwatermarked text.  This section specifies a contamination model for these
departures, gives its graphical representation in
Figure~\ref{fig:contamination-model}, and evaluates its effect on detection error.
The main-text analyses use the clean model of Section~\ref{sec:hierarchy}; it is
recovered here when the contamination probability is zero.

\subsubsection{The contamination layer}
\label{sec:contamination-model}

Let $C_t\mid\rho_{[t]}\sim\operatorname{Bernoulli}(\rho_{[t]})$ independently
over $t$, with $\rho$ document level in the shared hierarchy and $\rho_t$
position specific in the tokenwise one.  Then
\begin{equation}
  Y_t\mid H=1,\vartheta_{[t]},C_t
  \sim
  \begin{cases}
    f_0, & C_t=1,\\
        f_{\vartheta_{[t]}}, & C_t=0.
  \end{cases}
  \label{eq:contam-observation}
\end{equation}
Marginalizing $C_t$ gives the component density and likelihood ratio
\begin{align}
  g_{\vartheta_{[t]}}(y)
  &=\rho_{[t]}f_0(y)
    +(1-\rho_{[t]})f_{\vartheta_{[t]}}(y),\\
  \lambda_{\vartheta_{[t]}}(y)
  &=\frac{g_{\vartheta_{[t]}}(y)}{f_0(y)}
  =\rho_{[t]}+(1-\rho_{[t]})
    \frac{f_{\vartheta_{[t]}}(y)}{f_0(y)}.
  \label{eq:contam-lr}
\end{align}
The contamination model applies to positions generated with an independently chosen nonmatching key or replaced by null-like pivots.  It excludes burst edits, context-dependent paraphrases, and insertion/deletion alignment, and $\rho$ need not equal the realized edit fraction.

Equation~\eqref{eq:contam-lr} specializes Huber's gross-error model by fixing the contaminating law to the exact pivot null \citep{huber1964robust}; the indicators $C_t$ use the Bayesian latent good/bad allocation idea of \citet{box1968bayesian}.  With $\rho=1-\varepsilon$, the marginal law equals the watermark-edit mixture of \citet{li2024robust}.  That work uses a frequentist goodness-of-fit test; the present model specifies a prior on $\rho$ and integrates it jointly with $\Delta$.

For fixed $\vartheta=(S,\eta)$, write $L_t=f_{\vartheta}(Y_t)/f_0(Y_t)$.  The contaminated one-step factor is
\begin{equation}
  e_t(\vartheta)=\rho+(1-\rho)L_t,
  \qquad
  \E_0\!\left[e_t(\vartheta)\mid\F_{t-1}\right]=1.
  \label{eq:contamination-evalue}
\end{equation}
Products of these factors, and nonnegative prior mixtures of the products, are therefore test martingales under the exact null.  This is exact-null e-validity, not Huber robustness of the null.  Huber's clipped likelihood-ratio test \citep{huber1965robust} and the contamination-robust e-processes of \citet{saha2026huber} seek Type~I control over neighborhoods of $H_0$.  The present mixture instead specifies an $H_1$ model for independent dilution toward $f_0$; any power gain depends on agreement between this mixture and the generating mechanism.

\begin{figure}[t]
\centering
\begin{minipage}[t]{0.48\textwidth}
\centering
\textbf{(a) Shared contamination model}\par\smallskip
\begin{tikzpicture}[font=\small]
  \node[priornode] (pis) at (0,4.6) {$\pi_S$};
  \node[priornode] (pia) at (1.5,4.6) {$\pi_\alpha$};
  \node[priornode] (pij) at (3.0,4.6) {$\pi_J$};
  \node[priornode] (pid) at (4.5,4.6) {$\pi_\Delta$};
  \node[priornode] (pir) at (6.0,4.6) {$\pi_\rho$};
  \node[latentnode] (b) at (0,3.1) {$S$};
  \node[latentnode] (alpha) at (1.5,3.1) {$\alpha$};
  \node[latentnode] (j) at (3.0,3.1) {$J$};
  \node[latentnode] (delta) at (4.5,3.1) {$\Delta$};
  \node[latentnode] (rho) at (6.0,3.1) {$\rho$};
  \node[latentnode] (q) at (1.5,1.5) {$\bm q_t$};
  \node[detnode] (p) at (3.0,0) {$\bP_t$};
  \node[obsnode] (y) at (4.4,0) {$Y_t$};
  \node[latentnode] (c) at (6.0,0) {$C_t$};
  \draw[gmarrow] (pis) -- (b);
  \draw[gmarrow] (pia) -- (alpha);
  \draw[gmarrow] (pij) -- (j);
  \draw[gmarrow] (pid) -- (delta);
  \draw[gmarrow] (pir) -- (rho);
  \draw[gmarrow] (b) -- (q);
  \draw[gmarrow] (alpha) -- (q);
  \draw[gmarrow] (j) -- (q);
  \node[font=\scriptsize,anchor=north,fill=white,inner sep=1pt]
    at (alpha.south) {shape};
  \node[font=\scriptsize,anchor=north,fill=white,inner sep=1pt]
    at (j.south) {width};
  \draw[gmarrow] (delta) -- (p);
  \draw[gmarrow] (q) -- (p);
  \draw[gmarrow] (p) -- (y);
  \draw[gmarrow] (rho) -- (c);
  \draw[gmarrow] (c) -- (y);
  \begin{scope}[on background layer]
    \node[platenode,inner xsep=5mm,inner ysep=4mm,fit=(q)(p)(y)(c)] (plateS) {};
  \end{scope}
  \node[font=\scriptsize,anchor=north east,yshift=-1pt]
    at (plateS.south east) {$t=1{:}n$};
\end{tikzpicture}
\end{minipage}\hfill
\begin{minipage}[t]{0.48\textwidth}
\centering
\textbf{(b) Tokenwise contamination model}\par\smallskip
\begin{tikzpicture}[font=\small]
  \node[priornode] (pis) at (-1.75,4.6) {$\pi_S$};
  \node[priornode] (pia) at (0,4.6) {$\pi_\alpha$};
  \node[priornode] (pij) at (1.4,4.6) {$\pi_J$};
  \node[priornode] (pid) at (2.8,4.6) {$\pi_\Delta$};
  \node[priornode] (pir) at (4.2,4.6) {$\pi_\rho$};
  \node[latentnode] (b) at (-1.75,3.1) {$S$};
  \node[latentnode] (alpha) at (0,3.1) {$\alpha_t$};
  \node[latentnode] (j) at (1.4,3.1) {$J_t$};
  \node[latentnode] (delta) at (2.8,3.1) {$\Delta_t$};
  \node[latentnode] (rho) at (4.2,3.1) {$\rho_t$};
  \node[latentnode] (q) at (0,1.5) {$\bm q_t$};
  \node[detnode] (p) at (1.4,0) {$\bP_t$};
  \node[obsnode] (y) at (2.8,0) {$Y_t$};
  \node[latentnode] (c) at (4.2,0) {$C_t$};
  \draw[gmarrow] (pis) -- (b);
  \draw[gmarrow] (pia) -- (alpha);
  \draw[gmarrow] (pij) -- (j);
  \draw[gmarrow] (pid) -- (delta);
  \draw[gmarrow] (pir) -- (rho);
  \draw[gmarrow] (b) -- (q);
  \draw[gmarrow] (alpha) -- (q);
  \draw[gmarrow] (j) -- (q);
  \node[font=\scriptsize,anchor=north,fill=white,inner sep=1pt]
    at (alpha.south) {shape};
  \node[font=\scriptsize,anchor=north,fill=white,inner sep=1pt]
    at (j.south) {width};
  \draw[gmarrow] (delta) -- (p);
  \draw[gmarrow] (q) -- (p);
  \draw[gmarrow] (p) -- (y);
  \draw[gmarrow] (rho) -- (c);
  \draw[gmarrow] (c) -- (y);
  \begin{scope}[on background layer]
    \node[platenode,inner xsep=5mm,inner ysep=5mm,
      fit=(j)(delta)(alpha)(rho)(q)(p)(y)(c)] (plateT) {};
  \end{scope}
  \node[font=\scriptsize,anchor=north east,yshift=-1pt]
    at (plateT.south east) {$t=1{:}n$};
\end{tikzpicture}
\end{minipage}
\caption{Contamination hierarchies extending Figure~\ref{fig:hierarchical-model} by the probabilities $\rho_{[t]}$ and allocation indicators $C_t$.  The tail priors and implicit branch restrictions have the same meaning as in that figure, and $S$ remains document-shared in both panels.  (a) The shared model draws one $(\Delta,\alpha,J,\rho)$ per document.  (b) Conditional on $S$, the tokenwise model redraws $(\Delta_t,\alpha_t,J_t,\rho_t)$ by position.  In both, $C_t=1$ selects the exact-null component, with probability $\rho$ in (a) and $\rho_t$ in (b).  Setting every $\rho_{[t]}=0$ makes $C_t=0$ almost surely and recovers the corresponding clean hierarchy.}
\label{fig:contamination-model}
\end{figure}

\begin{remark}[Tokenwise contamination prior]
\label{rem:tokenwise-rho}
Within the shape branch, consider the tokenwise product prior $\Pi=\pi_\Delta\otimes\pi_\rho\otimes\pi_\alpha$ and let
$\bar\rho=\int\rho\,\pi_\rho(d\rho)$.  Linearity of the contamination mixture gives
\begin{equation}
  \iiint g_{\Delta,\rho,\alpha}(y)\,
  \pi_\Delta(d\Delta)\pi_\rho(d\rho)\pi_\alpha(d\alpha)
  =\bar\rho f_0(y)
   +(1-\bar\rho)\iint f_{\Delta,\alpha}(y)\,
   \pi_\Delta(d\Delta)\pi_\alpha(d\alpha).
  \label{eq:token-rho-collapse}
\end{equation}
The same calculation holds within the width branch, replacing the concentration integral by an integral over $\pi_J$.  With the same independent contamination prior in both branches, their tokenwise products and the subsequent average over the document-level state $S$ depend on $\pi_\rho$ only through its mean.  Under this hierarchy there is no document-level contamination rate to estimate.  A model with shared $\rho$ and tokenwise $\Delta_t$ defines a separate hybrid hierarchy.

This reduction differs from that in Remark~\ref{rem:union-weight}, where $w$
mixes whole-document branch priors.  Here the position-specific contamination
probability is integrated within each one-token density before the densities are
multiplied.  Both reductions follow from linearity, but at different levels of
the hierarchy.
\end{remark}

\subsubsection{Independent null-like contamination}
\label{sec:contamination}

The main-text comparisons use the clean model, corresponding here to a data-generating contamination rate of $\rho_\star=0$.  The following experiment evaluates the independent replacement mechanism in \eqref{eq:contam-observation}.  It applies a prespecified prior on $\rho$ to the marginal mixture studied by \citet{li2024robust}, integrating over $\rho$ in the likelihood.  Within each contamination stratum, $\rho_\star$ is fixed across positions.  Each shared-$\Delta$ document has a clean watermarked path $Y^{(1)}_{1:n}$, an independent exact-null replacement path $Y^{(0)}_{1:n}$, and uniforms $U_{1:n}$.  The observed path is
\begin{equation}
  C_t(\rho_\star)=\ind\{U_t<\rho_\star\},
  \qquad
  Y_t=(1-C_t)Y_t^{(1)}+C_tY_t^{(0)}.
  \label{eq:contamination-experiment}
\end{equation}
The generating rate is $\rho_\star\in\{0,.1,.25,.4,.6,1\}$.  Shared clean paths, replacements, and nested masks pair methods and contamination levels.  The $\rho_\star=1$ cell is a diagnostic: its alternative is exactly the null, so power must agree with the evaluation Type~I rate up to Monte Carlo error.

The contamination-aware shared Bayes rule combines $\Delta\sim\Unif(.001,.5)$ with the prespecified prior
\begin{equation}
  \pi_\rho=.5\delta_0+.125(\delta_{.1}+\delta_{.25}+\delta_{.4}+\delta_{.6}).
  \label{eq:rho-prior-experiment}
\end{equation}
Half the prior mass is allocated to the exact clean model, paralleling the equal probability assigned to the two tail specifications in \eqref{eq:union-tail}.  Comparators include shared Bayes fixed at $\rho=0$, the reference scores evaluated by \citet{li2025framework}, tokenwise variants, and the Dirichlet-layer versions of both $\rho$ treatments.  Under a tokenwise prior, the marginal likelihood depends on $\rho$ only through $\E\rho$ and does not estimate a persistent document-level rate.  One independent exact-null sample supplies method-specific cutoffs reused throughout the $\rho_\star$ sweep.  \begin{figure}[H]
\centering
\includegraphics[width=\textwidth]{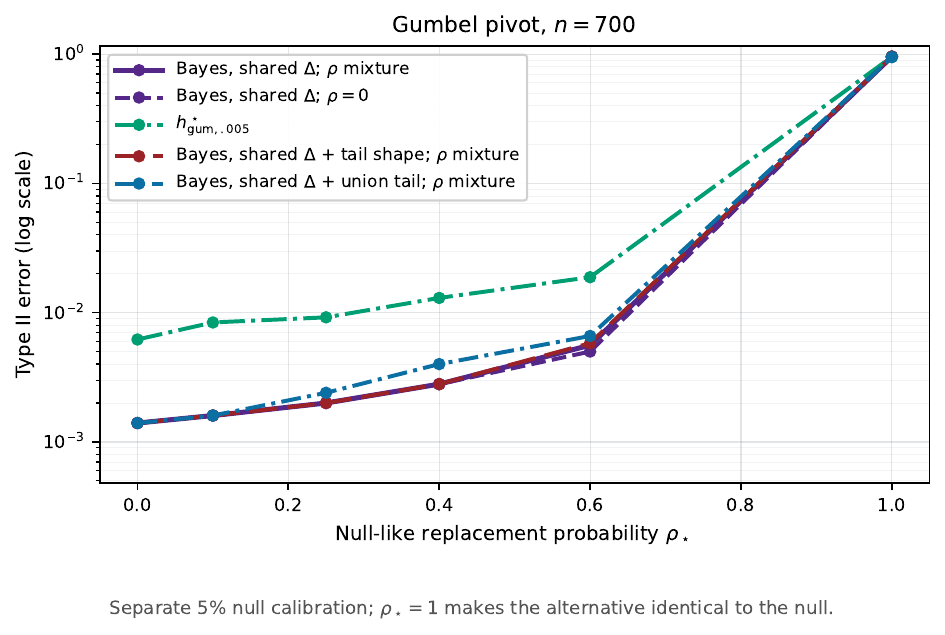}
\caption{Fixed-horizon Type~II error at $n=700$ under independent null-like pivot replacement with probability $\rho_\star$.  The contamination-aware shared-$\Delta$ Bayes rules, labelled ``$\rho$ mixture,'' average over \eqref{eq:rho-prior-experiment}; the clean equal-tail rule fixes $\rho=0$.  The additional tail-shape and union-tail curves use the Dirichlet tail-shape prior and the union-tail prior with $w=.5$, respectively.  Colours identify the same families as in Figure~\ref{fig:shared}; line styles distinguish the overlapping Bayesian curves.  The reference score $h^\star_{\mathrm{gum},.005}$ of \citet{li2025framework} is selected once using the independent clean benchmark and held fixed across the sweep.  One common exact-null sample supplies method-specific nominal 5\% cutoffs.}
\label{fig:contamination}
\end{figure}

Figure~\ref{fig:contamination} displays the full sweep, and Table~\ref{tab:contamination} reports selected cells.  At $\rho_\star=.4$, contamination-aware and clean shared Bayes both have error $.0028$, compared with $.0130$ for $h^\star_{\mathrm{gum},.005}$, the lowest reference-score error in the displayed grid at that cell.  At $\rho_\star=.6$ the corresponding errors are $.0056$, $.0050$, and $.0188$.  Adding the Dirichlet tail prior leaves the Gumbel Bayes rows identical to four decimals for $\rho_\star\le.4$ and produces a spread of at most $.0008$ at $\rho_\star=.6$.  The union-tail error exceeds that of the rule it generalizes by $.0004$ at $\rho_\star=.25$ and $.0012$ at $\rho_\star=.4$ because the generating paths are equal tailed and the tail-width block is misspecified for this experiment.

\begin{table}[htbp]
\centering
\small
\caption{Type~II error at $n=700$ under iid null-like replacement.  Parentheses give Monte Carlo standard errors.  All six prespecified reference scores of \citet{li2025framework} are reported; $h^\star_{\mathrm{gum},.005}$ is the comparator selected before the sweep for Figure~\ref{fig:contamination}.  The contamination-aware tail-shape row uses a $96\times6\times5$ $(\Delta,\alpha,\rho)$ grid; the $\rho=0$ Dirichlet row uses a $96\times6$ $(\Delta,\alpha)$ grid.  }
\label{tab:contamination}
\begin{tabular}{@{}llrrrr@{}}
\toprule
Scheme & Method & $\rho_\star=0$ & $.25$ & $.40$ & $.60$ \\
\midrule
Gumbel & Bayes, shared $\Delta$ ($\rho$ mixture) & \mcse{.0014}{.0005} & \mcse{.0020}{.0006} & \mcse{.0028}{.0007} & \mcse{.0056}{.0011} \\
 & Bayes, shared $\Delta$ $+$ tail shape ($\rho$ mixture) & \mcse{.0014}{.0005} & \mcse{.0020}{.0006} & \mcse{.0028}{.0007} & \mcse{.0058}{.0011} \\
 & Bayes, shared $\Delta$ $+$ union tail ($\rho$ mixture) & \mcse{.0014}{.0005} & \mcse{.0024}{.0007} & \mcse{.0040}{.0009} & \mcse{.0066}{.0011} \\
 & Bayes, shared $\Delta$ ($\rho=0$) & \mcse{.0014}{.0005} & \mcse{.0020}{.0006} & \mcse{.0028}{.0007} & \mcse{.0050}{.0010} \\
 & Bayes, shared $\Delta$ $+$ tail shape ($\rho=0$) & \mcse{.0014}{.0005} & \mcse{.0020}{.0006} & \mcse{.0028}{.0007} & \mcse{.0058}{.0011} \\
 & Bayes, shared $\Delta$ $+$ union tail ($\rho=0$) & \mcse{.0014}{.0005} & \mcse{.0024}{.0007} & \mcse{.0038}{.0009} & \mcse{.0064}{.0011} \\
 & $h^\star_{\mathrm{gum},.1}$ & \mcse{.0304}{.0024} & \mcse{.0418}{.0028} & \mcse{.0518}{.0031} & \mcse{.0808}{.0039} \\
 & $h^\star_{\mathrm{gum},.01}$ & \mcse{.0086}{.0013} & \mcse{.0130}{.0016} & \mcse{.0172}{.0018} & \mcse{.0258}{.0022} \\
 & $h^\star_{\mathrm{gum},.005}$ & \mcse{.0062}{.0011} & \mcse{.0092}{.0014} & \mcse{.0130}{.0016} & \mcse{.0188}{.0019} \\
 & $h_{\mathrm{ars}}$ & \mcse{.0094}{.0014} & \mcse{.0134}{.0016} & \mcse{.0184}{.0019} & \mcse{.0288}{.0024} \\
 & $h_{\log}$ & \mcse{.0622}{.0034} & \mcse{.0840}{.0039} & \mcse{.1112}{.0044} & \mcse{.1596}{.0052} \\
 & $h_{\mathrm{ind},1/e}$ & \mcse{.0814}{.0039} & \mcse{.1135}{.0045} & \mcse{.1403}{.0049} & \mcse{.2116}{.0058} \\
\addlinespace
\bottomrule
\end{tabular}
\end{table}

The clean shared Gumbel rule's Type~II error increases from $.0014$ at $\rho_\star=0$ to $.0050$ at $\rho_\star=.6$, but the clean and contamination-aware rules do not differ detectably.  The Gumbel component \eqref{eq:gumbel-spike} is strictly positive on $(0,1)$, so a null-like observation cannot set the Bayes factor to zero.  The per-token loss is unbounded --- $f^{\mathrm{sp}}_\Delta(r)\sim r^{\Delta/(1-\Delta)}$ as $r\downarrow0$, so $\log f^{\mathrm{sp}}_\Delta(r)\to-\infty$ --- but its expectation under the null, $D_0(\Delta)$, is finite, satisfying the integrability condition for the drift calculation.  Writing $D_1(\Delta)$ for the component-versus-null Kullback--Leibler divergence and $D_0(\Delta)$ for its reverse, a fixed-$\Delta$ clean rule has positive expected drift while $\rho_\star<D_1/(D_1+D_0)$.  At $M=1000$, this threshold decreases from $.83$ at $\Delta=.005$ to $.73$ at $\Delta=.5$, remaining above the largest nondegenerate evaluated contamination rate, $\rho_\star=.6$.  Under contamination, the mixture posterior shifts toward smaller deficits, whose componentwise thresholds are higher.  The shared clean-$\rho$ mixture's mean terminal log Bayes factor at $n=700$ decreases from $898$ at $\rho_\star=0$ to $270$ at $\rho_\star=.6$.  In an auxiliary drift diagnostic at $\rho_\star=.8$, the mean terminal log Bayes factor is $85$; it is negative at the null-equivalent value $\rho_\star=1$.  At $\rho_\star=.6$, its error is $.0050$, compared with $.0060$ for the same rule applied to 280 clean tokens in the independent benchmark.  The clean and contamination-aware Gumbel rules produce identical decisions on all 5,000 documents for $\rho_\star\le.4$ and differ on five at $\rho_\star=.6$.

A finite-grid prior-sensitivity check compares $\pi_\rho$ with point-mass priors.  Replacing $\pi_\rho$ by a point mass at $\rho_0\in\{.1,.15,.25,.4,.5,.6\}$, with the $\Delta$ prior, hierarchy, paths, and calibration unchanged, gives Gumbel mean Type~II errors between $.0026$ and $.0027$ over the contaminated levels $\rho_\star<1$.  Across those five cells, the mixture and $\rho_0=.5$ agree exactly, both with mean $.00268$ and maximum $.0056$.  A $\rho_0$ selected retrospectively at each contamination level has lower observed error than the mixture by $0$ to $.0004$, but $\rho_\star$ is unknown when the rule is selected.  Among the seven tested $\rho$ rules, the mixture's empirical maximum regret over $\rho_\star<1$ is $.0004$, fourth lowest; the six point choices range from $0$ to $.0004$.  The fully contaminated cell is excluded from these comparisons because every rule is then applied to the null; the resulting variation reflects calibration and Monte Carlo error rather than contamination power and is analyzed separately below.  These selected maxima include calibration, Monte Carlo, and best-in-cell uncertainty and do not optimize over continuous $\rho_0$.  The computational-materials index in Section~\ref{sec:computational-materials} identifies the complete contamination-prior diagnostics.

At $\rho_\star=1$, the largest absolute difference between power and the corresponding evaluation Type~I rate among the methods in the mixture-prior contamination experiment is $.0066$, or about 1.5 estimated Monte Carlo standard errors.  Including the point-mass-$\rho$ diagnostic rules, the largest difference is $.0084$.  Both comparisons are consistent with null equivalence at the resolution of these simulations.  The experiment evaluates only independent null-like replacement; it does not cover burst edits, context-dependent paraphrase, key desynchronization, or insertion/deletion alignment.

\FloatBarrier
\subsection{Attribution of the Gumbel difference}
\label{sec:family}

At $\Delta=0.2$ and $M=1000$, the alternative-to-null divergence $D(f_1\Vert f_0)$, which is the mean log likelihood-ratio growth under the alternative, is $1.028$ nats for \eqref{eq:gumbel-spike} and $0.077$ nats for \eqref{eq:gumbel-lf}.  The null-to-alternative divergence relevant to fixed-size Type~II error exponents is $0.245$ and $0.081$ nats, respectively.  At these parameter values, both divergences are larger for the spike member, although neither determines finite-horizon power.

The comparison in Table~\ref{tab:shared} confounds prior averaging with the component family: the Bayes rule averages over $\Pi$ instead of fixing $\Delta$, and it evaluates the spike density \eqref{eq:gumbel-spike} instead of the least-favorable density \eqref{eq:gumbel-lf} that defines $h^\star_{\mathrm{gum},\Delta}$.  To separate these changes, define the point-$\Delta_0$ log likelihood ratio under the generating family by
\begin{equation}
  h^{\mathrm{sp}}_{\Delta_0}(r)
  =\log f_{\Delta_0}^{\mathrm{gum},\mathrm{sp}}(r).
  \label{eq:spike-point}
\end{equation}
This diagnostic uses no prior, posterior updating, or hierarchy.  It is not a score from \citet{li2025framework} and is excluded when identifying the best reference score.

\begin{table}[htbp]
\centering
\small
\caption{Separating the likelihood family from prior averaging, shared-$\Delta$ sensitivity experiment, Type~II error at nominal 5\% size.  Parentheses give Monte Carlo standard errors.  The diagnostic scores fix a single $\Delta_0$ but use the generating spike family.  The final block evaluates Dirichlet tail-shape and union-tail mixtures under the equal-tail generator, separating tail-family averaging from the equal-tail baseline.  Tail shape is the Dirichlet layer \eqref{eq:gumbel-dirichlet} at concentration $\alpha$; the union tail \eqref{eq:union-tail} mixes it with the width ladder \eqref{eq:gumbel-tailwidth}.  The $h^\star_{\mathrm{gum},\Delta_0}$ rows are the least-favorable score of \citet{li2025framework} at its three published tunings.}
\label{tab:family}
\begin{tabular}{@{}lrrr@{}}
\toprule
Gumbel rule & $n=100$ & $n=300$ & $n=700$ \\
\midrule
$h_{\mathrm{ars}}$ (reference score) & \mcse{.0426}{.0029} & \mcse{.0192}{.0019} & \mcse{.0094}{.0014} \\
$h_{\log}$ (reference score) & \mcse{.1698}{.0053} & \mcse{.0984}{.0042} & \mcse{.0654}{.0035} \\
$h_{\mathrm{ind},1/e}$ (reference score) & \mcse{.2192}{.0057} & \mcse{.1309}{.0048} & \mcse{.0847}{.0039} \\
\addlinespace
$h^\star_{\mathrm{gum},.1}$ (least-favorable family) & \mcse{.0934}{.0041} & \mcse{.0556}{.0032} & \mcse{.0342}{.0026} \\
$h^\star_{\mathrm{gum},.01}$ & \mcse{.0420}{.0028} & \mcse{.0178}{.0019} & \mcse{.0098}{.0014} \\
$h^\star_{\mathrm{gum},.005}$ & \mcse{.0328}{.0025} & \mcse{.0124}{.0016} & \mcse{.0062}{.0011} \\
\addlinespace
$h^{\mathrm{sp}}_{.01}$ (diagnostic; equal tail, fixed $\Delta_0$) & \mcse{.0198}{.0020} & \mcse{.0056}{.0011} & \mcse{.0014}{.0005} \\
$h^{\mathrm{sp}}_{.05}$ (diagnostic; equal tail, fixed $\Delta_0$) & \mcse{.0188}{.0019} & \mcse{.0058}{.0011} & \mcse{.0020}{.0006} \\
\addlinespace
Bayes, tokenwise $\Delta$ (equal tail) & \mcse{.0246}{.0022} & \mcse{.0096}{.0014} & \mcse{.0042}{.0009} \\
Bayes, shared $\Delta$ (equal tail) & \mcse{.0194}{.0020} & \mcse{.0056}{.0011} & \mcse{.0014}{.0005} \\
\addlinespace
Bayes, tokenwise $\Delta$ (tail shape) & \mcse{.0274}{.0023} & \mcse{.0098}{.0014} & \mcse{.0042}{.0009} \\
Bayes, shared $\Delta$ $+$ tail shape & \mcse{.0194}{.0020} & \mcse{.0058}{.0011} & \mcse{.0014}{.0005} \\
Bayes, tokenwise $\Delta$ (union tail) & \mcse{.0438}{.0029} & \mcse{.0226}{.0021} & \mcse{.0116}{.0015} \\
Bayes, shared $\Delta$ $+$ union tail & \mcse{.0192}{.0019} & \mcse{.0058}{.0011} & \mcse{.0014}{.0005} \\
\bottomrule
\end{tabular}
\end{table}

Across the displayed horizons in Table~\ref{tab:family}, $h^{\mathrm{sp}}_{.01}$ and the shared spike Bayes rule have similar observed errors.  At $n=300$ and $n=700$, they produce identical decisions, so $b=c=0$ and no nontrivial conditional McNemar test is defined.  The tables assign $p=1$ only to retain these comparisons in the prespecified Holm adjustment family; this value is not a test result.  At $n=100$, the discordance is 2 versus 4 (exact two-sided McNemar $p=0.69$).  No incremental difference attributable to prior averaging is detected in this configuration; evaluating the generating density reproduces the observed difference from $h^\star_{\mathrm{gum},.005}$.  The tested choices $\Delta_0=.01$ and $.05$ also have similar errors.  The diagnostic fixes the deficit and assumes an equal tail.  Sections~\ref{sec:regimes} and~\ref{sec:tails} vary these assumptions over finite grids.
Across the selected horizons, the shared Dirichlet-layer and spike rules differ in observed error by at most $.0002$.  Section~\ref{sec:tails} evaluates the same rules under other generating tail shapes.

Within each component family, the shared and tokenwise rules differ only in hierarchy.  Under the persistent-deficit generator at $n=700$, their observed errors are $0.0014$ and $0.0042$, respectively, for both the spike and Dirichlet specifications.  The tokenwise mixture replaces each token's evidence by its prior average before multiplying.

Component-posterior summaries show concentration around the generating deficit.  At $n=700$, posterior means are $0.006$, $0.051$, $0.201$, and $0.401$ when the generating values are $0.005$, $0.05$, $0.2$, and $0.4$, with posterior standard deviations between $0.002$ and $0.014$ compared with a prior standard deviation of $0.144$.  Realized coverage of the $90\%$ credible intervals ranges from $0.843$ to $0.925$ across twelve deficit-by-horizon cells.  With 400 documents per cell, the binomial standard error is $0.015$; the $\Delta=0.05$, $n=700$ cell is $3.8$ standard errors below nominal coverage.  At $\Delta=0.005$, the posterior mean is $0.0059$.  The component posterior is a summary of the modeled deficit rather than a separate fit.

In this configuration, the point-deficit diagnostic and shared mixture have similar observed errors despite posterior uncertainty in $\Delta$.  Section~\ref{sec:regimes} evaluates point-deficit mismatch, and Section~\ref{sec:contamination} gives the corresponding finite-grid comparison for $\rho$.

\FloatBarrier
\subsection{Sensitivity to the mixing weight}
\label{sec:union-weight-sweep}

\begin{remark}[Integration over the mixing weight]
\label{rem:union-weight}
For a single document, integrating over a hyperprior on $w$ is equivalent to fixing
$w$ at its hyperprior mean.  The marginal likelihood, and hence the Bayes factor,
is linear in the prior measure.  Since $w$ enters \eqref{eq:union-tail} only as a
mixing weight between two fixed priors, $B_n$ is linear in $w$ and
\begin{equation}
  \int B_n(w)\,\pi_w(dw)=B_n\!\left(\int w\,\pi_w(dw)\right)
  \label{eq:weight-collapse}
\end{equation}
for every prior $\pi_w$, provided the tail branch is drawn once per document, as in
every rule reported here.  Thus the marginal likelihood under $\pi_w$ equals the
marginal likelihood obtained by substituting $\E_{\pi_w}[w]$.  If the tail branch
were instead redrawn at every position, the document likelihood would be a product
of one-token mixture densities.  Its Bayes factor would generally be a degree-$n$
polynomial in $w$, so \eqref{eq:weight-collapse} would not generally hold.
The reduction depends on the level at which the mixture is linear.  The
parameters $\Delta$, $\alpha$, and $J$ enter the
component densities nonlinearly, so their prior averages are not obtained by
substituting a single parameter value.

The value of $w$ specifies the prior odds of the two blocks, after which their
posterior probabilities update with the observations.  Under an equal-tail
generator, for which the Dirichlet block is correctly specified, its mean posterior
probability increases from $.5$ to $.92$ within fifty tokens at $\Delta=.1$ and to
$.99$ by two hundred.  At $\Delta=.005$, it is $.44$ at fifty tokens and $.65$ by
two hundred.  These posterior summaries show slower discrimination
between the two branches at the smaller deficit.  They describe branch learning
at $w=.5$, rather than a regret comparison across mixing weights.
Hierarchical estimation of $w$ across documents would define a different model and
is outside the prespecified-prior analysis.
\end{remark}

Table~\ref{tab:union-weight} evaluates $w$ on an eleven-point grid.  Because the
tail state is document level,
$B_n(w)=wB_n^{\mathrm{shape}}+(1-w)B_n^{\mathrm{width}}$ is linear and therefore
continuous in $w$, including at $w=1$.  The reported performance measures need
not be continuous: area under the curve is a rank functional of $B_n$, and
thresholded error is a step functional.  On the temperature-matched data, area under the curve varies
by at most $.0015$ within any cell across the evaluated weights $w=0,.1,\ldots,.9$.  The unrounded maximizing weights are
$w=.9,.7,.9,.9$ in the four OPT-1.3B cells and $w=.8,.9,.9,.9$ in the four
Sheared-LLaMA-2.7B cells.  At $w=1$, which assigns zero mass to the width block,
area under the curve decreases by $.0094$ to $.0489$ across cells and is within
$.0076$ of the shared-deficit equal-tail baseline.  Because no value strictly
between $.9$ and $1$ is evaluated, the location of the change within that interval
is not identified.  The small AUC range indicates robustness to the mixing weight on this grid,
but not to excluding the width branch.  This pattern is
consistent with Remark~\ref{rem:union-weight}: when both blocks have positive
prior probability, their posterior probabilities are updated using the tokens
within each document.

\begin{table}[htbp]
\centering
\scriptsize
\setlength{\tabcolsep}{3pt}%
\caption{Sensitivity of the union tail prior to its mixing weight.  Entries are area under the curve at 100 tokens, with prompt-cluster bootstrap standard errors in parentheses (2000 replicates, prompts resampled jointly across both arms; a binomial standard error does not apply to an AUC and DeLong's does not cover the prompt pairing) on the temperature-matched arms, 2500 documents per arm.  The weight $w$ of \eqref{eq:union-tail} assigns mass $w$ to the Dirichlet shape block at full width and $1-w$ to the tail-width block at $\alpha=\infty$; $w=0$ and $w=1$ define the two endpoint priors.  The equal-tail baseline shares the deficit across tokens without tail-shape or width averaging.  Boldface marks the largest area under the curve in each column of each block, ties in the displayed values included; where two displayed values tie, the larger weight is the unrounded maximizer.  In the reference-score block, boldface denotes within-block column maxima.}
\label{tab:union-weight}
\begin{tabular}{@{}lrrrrrrrr@{}}
\toprule
& \multicolumn{4}{c}{OPT-1.3B} & \multicolumn{4}{c}{Sheared-LLaMA-2.7B} \\
\cmidrule(lr){2-5}\cmidrule(lr){6-9}
$w$ & $T{=}.2$ & $.3$ & $.4$ & $.5$ & $T{=}.2$ & $.3$ & $.4$ & $.5$ \\
\midrule
$0$ (width only) & \mcse{.8378}{.0054} & \mcse{.9382}{.0034} & \mcse{.9797}{.0021} & \mcse{.9962}{.0008} & \mcse{.8325}{.0054} & \mcse{.9321}{.0038} & \mcse{.9797}{.0019} & \mcse{.9927}{.0013} \\
$.1$ & \mcse{.8379}{.0054} & \mcse{.9383}{.0034} & \mcse{.9797}{.0021} & \mcse{.9962}{.0008} & \mcse{.8327}{.0053} & \mcse{.9322}{.0038} & \mcse{.9797}{.0019} & \mcse{.9927}{.0013} \\
$.2$ & \mcse{.8380}{.0054} & \mcse{.9383}{.0034} & \mcse{.9797}{.0021} & \mcse{.9962}{.0008} & \mcse{.8328}{.0053} & \mcse{.9323}{.0038} & \mcse{.9797}{.0019} & \mcse{.9927}{.0013} \\
$.3$ & \mcse{.8381}{.0054} & \mcse{.9384}{.0034} & \mcse{.9798}{.0021} & \mcse{.9963}{.0008} & \mcse{.8330}{.0053} & \mcse{.9324}{.0038} & \mcse{.9797}{.0019} & \mcse{.9927}{.0013} \\
$.4$ & \mcse{.8382}{.0054} & \mcse{.9384}{.0034} & \mcse{.9798}{.0021} & \mcse{.9963}{.0008} & \mcse{.8332}{.0053} & \mcse{.9325}{.0038} & \mcse{.9798}{.0019} & \mcse{.9928}{.0013} \\
$.5$ & \mcse{.8383}{.0054} & \mcse{.9385}{.0034} & \mcse{.9799}{.0021} & \mcse{.9963}{.0008} & \mcse{.8334}{.0053} & \mcse{.9327}{.0038} & \mcse{.9798}{.0019} & \mcse{.9928}{.0013} \\
$.6$ & \mcse{.8385}{.0054} & \mcse{.9386}{.0034} & \mcse{.9800}{.0021} & \mcse{.9963}{.0008} & \mcse{.8337}{.0053} & \mcse{.9329}{.0038} & \mcse{.9798}{.0019} & \mcse{.9928}{.0013} \\
$.7$ & \mcse{.8388}{.0054} & \bestmcse{.9387}{.0033} & \mcse{.9801}{.0021} & \mcse{.9964}{.0008} & \bestmcse{.8339}{.0053} & \mcse{.9332}{.0038} & \mcse{.9799}{.0018} & \mcse{.9929}{.0012} \\
$.8$ & \mcse{.8391}{.0054} & \mcse{.9386}{.0033} & \mcse{.9802}{.0020} & \bestmcse{.9965}{.0007} & \bestmcse{.8339}{.0053} & \mcse{.9334}{.0038} & \mcse{.9800}{.0018} & \mcse{.9930}{.0012} \\
$.9$ & \bestmcse{.8393}{.0054} & \mcse{.9380}{.0034} & \bestmcse{.9805}{.0020} & \bestmcse{.9965}{.0007} & \mcse{.8330}{.0053} & \bestmcse{.9336}{.0038} & \bestmcse{.9801}{.0018} & \bestmcse{.9932}{.0012} \\
$1$ (shape only) & \mcse{.7903}{.0062} & \mcse{.8913}{.0045} & \mcse{.9599}{.0028} & \mcse{.9871}{.0015} & \mcse{.7875}{.0059} & \mcse{.8951}{.0046} & \mcse{.9560}{.0028} & \mcse{.9825}{.0018} \\
\addlinespace
equal-tail baseline & \mcse{.7828}{.0063} & \mcse{.8906}{.0045} & \mcse{.9594}{.0028} & \mcse{.9871}{.0014} & \mcse{.7814}{.0058} & \mcse{.8903}{.0046} & \mcse{.9553}{.0027} & \mcse{.9824}{.0017} \\
\addlinespace
\multicolumn{9}{@{}l}{\textit{Reference scores}}\\
$h_{\mathrm{ars}}$ & \bestmcse{.8499}{.0051} & \bestmcse{.9465}{.0031} & \bestmcse{.9824}{.0020} & \bestmcse{.9963}{.0009} & \bestmcse{.8467}{.0051} & \bestmcse{.9373}{.0036} & \bestmcse{.9829}{.0016} & \bestmcse{.9931}{.0012} \\
$h_{\log}$ & \mcse{.7825}{.0063} & \mcse{.8904}{.0045} & \mcse{.9585}{.0028} & \mcse{.9871}{.0014} & \mcse{.7807}{.0059} & \mcse{.8900}{.0046} & \mcse{.9544}{.0028} & \mcse{.9821}{.0018} \\
$h_{\mathrm{ind},1/e}$ & \mcse{.7531}{.0066} & \mcse{.8614}{.0052} & \mcse{.9320}{.0037} & \mcse{.9783}{.0019} & \mcse{.7573}{.0063} & \mcse{.8616}{.0052} & \mcse{.9364}{.0032} & \mcse{.9691}{.0023} \\
$h^\star_{\mathrm{gum},.1}$ & \mcse{.8459}{.0053} & \mcse{.9440}{.0032} & \mcse{.9819}{.0020} & \mcse{.9961}{.0008} & \mcse{.8409}{.0052} & \mcse{.9331}{.0037} & \mcse{.9810}{.0017} & \mcse{.9922}{.0012} \\
$h^\star_{\mathrm{gum},.01}$ & \mcse{.7836}{.0058} & \mcse{.8859}{.0046} & \mcse{.9563}{.0029} & \mcse{.9873}{.0015} & \mcse{.7649}{.0061} & \mcse{.8744}{.0048} & \mcse{.9486}{.0029} & \mcse{.9825}{.0017} \\
$h^\star_{\mathrm{gum},.005}$ & \mcse{.7577}{.0061} & \mcse{.8576}{.0051} & \mcse{.9378}{.0033} & \mcse{.9785}{.0018} & \mcse{.7376}{.0062} & \mcse{.8470}{.0052} & \mcse{.9278}{.0035} & \mcse{.9734}{.0020} \\
\bottomrule
\end{tabular}
\end{table}

\FloatBarrier
\subsection{Deficit regimes and prior-support sensitivity}
\label{sec:deficit-support-analysis}

\subsubsection{Empirical deficit regimes}
\label{sec:deficit-regime}

The deficit is determined jointly by the model, its context, and the decoding
rule.  Sampling temperature is the variable swept here, and the two ends of
this sweep are qualitatively different regimes.
Table~\ref{tab:deficit-regime} records the empirical distribution of
$\Delta_t=1-\max_w p_t(w)$ over the generated tokens.

\begin{table}[t]
\centering
\small
\caption{Empirical deficit $\Delta=1-\max_w p(w)$ by sampling temperature, over all generated watermarked tokens at every sampled temperature; 2500 documents per cell in $[.2,.5]$ and 500 elsewhere.  The primary deficit prior is $\Unif(.001,.5)$; the last three columns give the fraction of tokens below, inside, and above its support.  Entries summarize the recorded float32 deficits rather than estimates of an unknown parameter, so Monte Carlo standard errors are not reported.  Binomial intervals for the three proportions would understate their sampling uncertainty because tokens within a document share a context and are not independent draws.}
\label{tab:deficit-regime}
\begin{tabular}{llrrrrrr}
\toprule
Model & $T$ & Median & Mean & $q_{.9}$ & $\Delta<.001$ & In support & $\Delta>.5$ \\
\midrule
OPT-1.3B & $.1$ & $.0000$ & $.0226$ & $.0285$ & $.831$ & $.163$ & $.006$ \\
 & $.2$ & $.0000$ & $.0491$ & $.1972$ & $.704$ & $.276$ & $.020$ \\
 & $.3$ & $.0000$ & $.0799$ & $.3513$ & $.601$ & $.354$ & $.045$ \\
 & $.4$ & $.0546$ & $.1785$ & $.5396$ & $.298$ & $.574$ & $.127$ \\
 & $.5$ & $.1433$ & $.2372$ & $.6224$ & $.221$ & $.580$ & $.199$ \\
 & $.6$ & $.2401$ & $.2922$ & $.6910$ & $.172$ & $.553$ & $.275$ \\
 & $.7$ & $.3596$ & $.3591$ & $.7491$ & $.116$ & $.517$ & $.367$ \\
 & $1$ & $.6259$ & $.5422$ & $.8828$ & $.032$ & $.345$ & $.624$ \\
\addlinespace
Sheared-LLaMA-2.7B & $.1$ & $.0000$ & $.0245$ & $.0367$ & $.819$ & $.175$ & $.006$ \\
 & $.2$ & $.0000$ & $.0497$ & $.1997$ & $.690$ & $.290$ & $.020$ \\
 & $.3$ & $.0001$ & $.0810$ & $.3520$ & $.580$ & $.376$ & $.044$ \\
 & $.4$ & $.0452$ & $.1714$ & $.5299$ & $.308$ & $.572$ & $.119$ \\
 & $.5$ & $.1219$ & $.2259$ & $.6107$ & $.235$ & $.579$ & $.186$ \\
 & $.6$ & $.2210$ & $.2814$ & $.6787$ & $.178$ & $.562$ & $.260$ \\
 & $.7$ & $.3163$ & $.3346$ & $.7344$ & $.140$ & $.519$ & $.341$ \\
 & $1$ & $.5868$ & $.5082$ & $.8731$ & $.043$ & $.382$ & $.575$ \\
\bottomrule
\end{tabular}
\end{table}

For OPT-1.3B, support coverage is nonmonotone in temperature.  The share of tokens inside the
prior's support increases from $.163$ at temperature $.1$ to a maximum of $.580$
at $.5$ and decreases to $.345$ at $1$.  At $.1$, more than four fifths of tokens
fall below the lower support limit; at $1$, close to three fifths lie above the
upper limit.  Between temperatures $.3$ and $.4$, the median deficit increases
from $.0000$ to $.0546$ and the share above one half increases from $.045$ to
$.127$; by temperature $.7$, that share is $.367$ and the median exceeds $.35$.
The reanalysis of the archived benchmark data in Supplementary Section~\ref{sec:released-width} is therefore
concentrated in the small-deficit regime, whereas full-softmax sampling near
temperature $1$ produces larger deficits for these two models.  Nucleus sampling
renormalizes probabilities after truncation and can reduce $\Delta$ for a fixed
pre-truncation distribution, by an amount not measured in these experiments.
Changes in model or context can also alter the deficit distribution, and their
direction is not established here.  These observations concern two open models
under the specified decoder; they neither estimate typical deployment conditions
nor bound the deficit distribution for other models, decoders, or temperatures.

\FloatBarrier
\subsubsection{Deficits above one half}
\label{sec:deficit-support}

The primary shared-deficit rules in the temperature-matched analysis use
$\Delta\sim\Unif(.001,.5)$.  The restriction $\Delta\le1/2$ preserves the
top-probability-deficit interpretation for all tail widths considered.
Under the tail-width component, each nonzero residual coordinate has mass
$\Delta/J$, so the designated coordinate with mass $1-\Delta$ is largest only if
$\Delta\le J/(1+J)$.  For $J=1$, the profile estimate reported for
Sheared-LLaMA-2.7B, this threshold is exactly one half; above it, the residual
coordinate has greater mass than the designated coordinate.  Consequently, the vector leaves the $\Delta$-regular class
$\{\bP:\max_w p_w\le1-\Delta\}$ on which the least-favorable score of
\citet{li2025framework} is built, and $\Delta$ ceases to be a top-probability
deficit, becoming one minus the mass of a designated coordinate that need not be
the maximum.  This change in parameter interpretation does not affect null
validity: the component densities integrate to one
for every $\Delta\in(0,1)$, since the leading term contributes $1-\Delta$ and the
tail term $\Delta$ at any width, and the exact null does not involve $\Delta$ at
all, so Theorem~\ref{thm:martingale} continues to apply.  At $J=1$ the relabeling
also leaves the likelihood unchanged: $J/\Delta-1=(1-\Delta)/\Delta$, so
\eqref{eq:gumbel-tailwidth} reads
$f^{\mathrm{gum}}_{\Delta,1}(r)=r^{\Delta/(1-\Delta)}+r^{(1-\Delta)/\Delta}$, which
is symmetric in $\Delta$ and $1-\Delta$.  Therefore any performance difference
from the widened support must arise through components with $J>1$.  The profile
estimates are $J=1$ for Sheared-LLaMA-2.7B and $J=2$ for OPT-1.3B.

Table~\ref{tab:deficit-regime} reports a median deficit above one half at
temperature $1$, outside the support of the primary prior.
Table~\ref{tab:deficit-support} therefore repeats three rules under
$\Unif(.001,.999)$.

\begin{table}[t]
\centering
\scriptsize
\setlength{\tabcolsep}{2pt}%
\caption{Widening the deficit prior past one half.  Entries are area under the curve at 100 tokens on the temperature-matched arms, 2500 documents per arm.  The narrow prior is $\Unif(.001,.5)$ and the wide one $\Unif(.001,.999)$; the wider support admits configurations in which the designated leading coordinate is not the largest.  Prompt-cluster bootstrap standard errors are in parentheses, from the same joint resample as Table~\ref{tab:union-weight}.  The wide-against-narrow differences have paired bootstrap standard errors recorded in the artifact but are excluded from the Holm family of Section~\ref{sec:temperature-matched}: they constitute a prior-sensitivity analysis rather than a comparison included in that testing family, so no adjusted test is reported.  The reference scores have no deficit prior, so their row is em-dashed in that column and appears once.  In the reference-score block, boldface denotes within-block column maxima.}
\label{tab:deficit-support}
\begin{tabular}{@{}llrrrrrrrr@{}}
\toprule
& & \multicolumn{4}{c}{OPT-1.3B} & \multicolumn{4}{c}{Sheared-LLaMA-2.7B} \\
\cmidrule(lr){3-6}\cmidrule(lr){7-10}
Rule & $\Delta$ & $T{=}.2$ & $.3$ & $.4$ & $.5$ & $T{=}.2$ & $.3$ & $.4$ & $.5$ \\
\midrule
Bayes, shared $\Delta$ (equal tail) & narrow & \mcse{.7828}{.0063} & \mcse{.8906}{.0045} & \mcse{.9594}{.0028} & \mcse{.9871}{.0014} & \mcse{.7814}{.0058} & \mcse{.8903}{.0046} & \mcse{.9553}{.0027} & \mcse{.9824}{.0017} \\
 & wide & \mcse{.7828}{.0063} & \mcse{.8906}{.0045} & \mcse{.9594}{.0028} & \mcse{.9871}{.0014} & \mcse{.7814}{.0058} & \mcse{.8903}{.0046} & \mcse{.9553}{.0027} & \mcse{.9824}{.0017} \\
\addlinespace
\quad $+$ tail width only & narrow & \mcse{.8378}{.0054} & \mcse{.9382}{.0034} & \mcse{.9797}{.0021} & \mcse{.9962}{.0008} & \mcse{.8325}{.0054} & \mcse{.9321}{.0038} & \mcse{.9797}{.0019} & \mcse{.9927}{.0013} \\
 & wide & \mcse{.8400}{.0054} & \mcse{.9393}{.0034} & \mcse{.9799}{.0021} & \mcse{.9962}{.0008} & \mcse{.8347}{.0053} & \mcse{.9333}{.0038} & \mcse{.9802}{.0018} & \mcse{.9928}{.0013} \\
\addlinespace
\quad $+$ union tail, $w=.5$ & narrow & \mcse{.8383}{.0054} & \mcse{.9385}{.0034} & \mcse{.9799}{.0021} & \mcse{.9963}{.0008} & \mcse{.8334}{.0053} & \mcse{.9327}{.0038} & \mcse{.9798}{.0019} & \mcse{.9928}{.0013} \\
 & wide & \mcse{.8403}{.0054} & \mcse{.9395}{.0033} & \mcse{.9801}{.0021} & \mcse{.9963}{.0008} & \mcse{.8354}{.0053} & \mcse{.9337}{.0038} & \mcse{.9803}{.0018} & \mcse{.9928}{.0013} \\
\addlinespace
\multicolumn{10}{@{}l}{\textit{Reference scores}}\\
$h_{\mathrm{ars}}$ & --- & \bestmcse{.8499}{.0051} & \bestmcse{.9465}{.0031} & \bestmcse{.9824}{.0020} & \bestmcse{.9963}{.0009} & \bestmcse{.8467}{.0051} & \bestmcse{.9373}{.0036} & \bestmcse{.9829}{.0016} & \bestmcse{.9931}{.0012} \\
$h_{\log}$ & --- & \mcse{.7825}{.0063} & \mcse{.8904}{.0045} & \mcse{.9585}{.0028} & \mcse{.9871}{.0014} & \mcse{.7807}{.0059} & \mcse{.8900}{.0046} & \mcse{.9544}{.0028} & \mcse{.9821}{.0018} \\
$h_{\mathrm{ind},1/e}$ & --- & \mcse{.7531}{.0066} & \mcse{.8614}{.0052} & \mcse{.9320}{.0037} & \mcse{.9783}{.0019} & \mcse{.7573}{.0063} & \mcse{.8616}{.0052} & \mcse{.9364}{.0032} & \mcse{.9691}{.0023} \\
$h^\star_{\mathrm{gum},.1}$ & --- & \mcse{.8459}{.0053} & \mcse{.9440}{.0032} & \mcse{.9819}{.0020} & \mcse{.9961}{.0008} & \mcse{.8409}{.0052} & \mcse{.9331}{.0037} & \mcse{.9810}{.0017} & \mcse{.9922}{.0012} \\
$h^\star_{\mathrm{gum},.01}$ & --- & \mcse{.7836}{.0058} & \mcse{.8859}{.0046} & \mcse{.9563}{.0029} & \mcse{.9873}{.0015} & \mcse{.7649}{.0061} & \mcse{.8744}{.0048} & \mcse{.9486}{.0029} & \mcse{.9825}{.0017} \\
$h^\star_{\mathrm{gum},.005}$ & --- & \mcse{.7577}{.0061} & \mcse{.8576}{.0051} & \mcse{.9378}{.0033} & \mcse{.9785}{.0018} & \mcse{.7376}{.0062} & \mcse{.8470}{.0052} & \mcse{.9278}{.0035} & \mcse{.9734}{.0020} \\
\bottomrule
\end{tabular}
\end{table}

The sensitivity results differ by component family.  For the equal-tail rule, area
under the curve and Type~II error agree to every recorded digit under the narrow
and widened priors in all eight cells.  The score difference is $-\log2$ with standard
deviation zero to the precision recorded here, which is
$\log\{(.5-.001)/(.999-.001)\}$, the ratio of prior densities.  That shift is
numerical rather than analytic: the equal-tail density is strictly positive for
every $0<r<1$ at every $\Delta$, including $\Delta>1/2$, so the newly admitted
region contributes positive marginal likelihood and the exact shift is data dependent.  It is
negligible here because this large-deficit, full-width component assigns little
likelihood to the observed pivots, so on these paths widening the support divides the marginal
likelihood by the added prior mass to within the recorded digits.  A
constant shift preserves document rankings, so the calibrated decisions are unchanged.

For the width-based rules, the score difference is not constant: its standard
deviation is $.49$, document rankings change, and area under the curve increases
by up to $.0022$, with the largest increase at temperature $.2$.  The widened
union has a larger area under the curve in seven of eight cells.  These cellwise
differences are descriptive and have paired standard errors from the same
prompt-cluster resample as the matched AUCs, recorded in the artifact.  They are
excluded from the Holm family of Section~\ref{sec:temperature-matched} because the
prior-support sensitivity analysis was not among the comparisons for which that
family was defined.  No adjusted $p$-value is claimed.  The $J=1$ component is
invariant under $\Delta\mapsto1-\Delta$, since
$f^{\mathrm{gum}}_{\Delta,1}(r)=r^{\Delta/(1-\Delta)}+r^{(1-\Delta)/\Delta}$ is
symmetric in the two exponents, and $\Unif(.001,.999)$ is symmetric about one
half, so folding it gives exactly $\Unif(.001,.5)$: the two priors induce the
same $J=1$ predictive, and the discretized versions agree to $3.7\times10^{-9}$
in log density, which is quadrature error.  A $\Delta$ above one half at $J=1$
is a relabeling of one below it rather than a distinct configuration.  Components
with larger $J$ are not symmetric: the same comparison gives $.69$, $.74$, $1.22$,
and $1.77$ nats at $J=4,16,64$, and $256$.  The widened prior admits deficits above
one half distributed over several coordinates, which moderate-width components
can represent but the equal-tail component, distributed over all $M-1$
coordinates, cannot.  It therefore defines a different model rather than merely
relaxing a parameter bound, and the observed improvement is confined to the
width-based family.  The ranking against the reference scores is unchanged:
$h_{\mathrm{ars}}$ has higher AUC than the widened union in seven of eight cells.

\subsection{Component calculations and stable computation}
\label{sec:stable-computation}

\subsubsection{Dirichlet tail calculations}
\label{sec:dirichlet-details}

For the reported settings, convergence to the spike limit is numerically slow.  In a finite-grid comparison at $M=1000$ and $\Delta=0.2$, using direct non-tabulated quadrature at $20{,}000$ equally spaced logits on $[-30,30]$, the maximum absolute log-density differences from $\alpha=\infty$ are $0.634$, $0.0842$, and $0.00875$ for $\alpha=10,100,1000$, respectively.  The atoms at $100$ and $1000$ represent intermediate near-equal tails.  Renormalized i.i.d.\ uniforms, as used in the accompanying simulation snapshot \citep{li2024watermarkframework}, do not follow a Dirichlet distribution.  The scaled tail marginal $Kq$ converges to $\Unif(0,2)$, whose variance is $1/3$; under the symmetric Dirichlet model, its variance is $(K-1)/(K\alpha+1)$.  The variance-matched member is therefore $\alpha\approx3$.

Two distinctions affect the comparison in Section~\ref{sec:tails}.  The endpoint $2$ is a limiting support boundary rather than a finite-$K$ bound: the normalizing sum fluctuates at $O(K^{-1/2})$, so at $K=999$ roughly $0.7\%$ of the scaled coordinates $Kq_k$ exceed $2$.  In addition, matching the variance of $q$ does not match the size-biased law observed after token selection.  Selection maps $\Unif(0,2)$ to the triangular law on $(0,2)$ with variance $2/9$ and maps $\operatorname{Gamma}(3,3)$ to $\operatorname{Gamma}(4,3)$ with variance $4/9$.  The size-biased variances therefore differ by a factor of two even though the unselected variances agree.  The simulation-snapshot tail is an out-of-family generator for the robustness assessment rather than a special case of the Dirichlet layer.

One-token simulation requires only a size-biased tail coordinate.  The selected tail probability is $\Delta Q_t^{\mathrm{sb}}$ with $Q_t^{\mathrm{sb}}\sim\operatorname{Beta}(\alpha+1,(K-1)\alpha)$, independently over tokens conditional on the applicable $\alpha$; the pivot is $R_t=U_t^{1-\Delta}$ with probability $1-\Delta$ and $R_t=U_t^{\Delta Q_t^{\mathrm{sb}}}$ otherwise.  The same size-biasing identity, $\E_{Q^{\mathrm{sb}}}[(Q^{\mathrm{sb}})^{-1}g(Q^{\mathrm{sb}})]=K\,\E_q[g(q)]$, gives an independent proof of \eqref{eq:gumbel-dirichlet}.

\subsubsection{Transform evaluation}

Writing $L=-\log r$ and substituting $u=1/q$ gives the univariate Laplace transform
\begin{equation}
  \psi_\alpha(c)=\E_q\big[e^{-c/q}\big],
  \qquad
  f^{\mathrm{gum}}_{\Delta,\alpha}(r)
  =r^{\Delta/(1-\Delta)}+\frac{K}{r}\,\psi_\alpha\!\left(\frac{L}{\Delta}\right),
  \label{eq:psi}
\end{equation}
so $\psi_\alpha$ can be tabulated once for each $\alpha$ and used for every $(\Delta,r)$ pair.  Each component lookup remains $O(1)$.  With $N_\Delta$ and $N_\alpha$ grid points, the clean joint grid has $G=N_\Delta N_\alpha$ components and costs $O(G)$ per token once the tables are available; only when the other grid dimensions are fixed is this cost linear in $N_\alpha$.

Discretizing the one-dimensional expectation preserves normalization if the nodes are rescaled so that $K$ times their weighted mean equals one; interpolation of the resulting table is a further numerical approximation described below and evaluated in Section~\ref{sec:numerical-checks}.

\subsubsection{Likelihood-ratio computation}

Let grid component $g$ have $\vartheta_g=(S_g,\eta_g)$ with $\eta_g=(\Delta_g,\alpha_g,J_g)$ and prior weight $w_g$, and write $f_{\vartheta_g}$ for the branch density \eqref{eq:branch-density} it selects.  The clean rules have no contamination coordinate; the contamination-aware rules of Supplementary Section~\ref{sec:contamination-supplement} include an additional component value $\rho_g$, and the clean case is $\rho_g=0$, at which \eqref{eq:contam-lr} reduces to the ordinary likelihood ratio.  Its logarithm is used below.  At $\alpha_g=\infty$ and $J_g=K$, that density is \eqref{eq:gumbel-spike}.  Define
\begin{equation}
  z_{g,t}=\log\left[\rho_g+(1-\rho_g)
  \frac{f_{\vartheta_g}(Y_t)}{f_0(Y_t)}\right].
  \label{eq:component-log-ratio}
\end{equation}
Algorithm~\ref{alg:bf} performs all averaging in log space.  Here $\LSE(a_1,\ldots,a_G)=\log\sum_g e^{a_g}$.

\begin{algorithm}[H]
\caption{Sequential grid Bayes factor for shared, tokenwise, or tokenwise-with-document-level-state components}
\label{alg:bf}
\begin{algorithmic}[1]
\Require pivots $Y_{1:n}$; grid $(\vartheta_g,w_g)_{g=1}^G$ written $\vartheta_g=(S_g,\eta_g)$ with $S_g$ the tail state, so that $\pi_S(s)=\sum_{g:S_g=s}w_g$ is the state's prior mass and, for any state with $\pi_S(s)>0$, $w_{g\mid s}=w_g/\pi_S(s)$ the weights within it (at $w\in\{0,1\}$ one state is empty and is dropped, so no $0/0$ arises); the grid weights $w_g$ are not the union-tail mixing weight $w$, which is recovered as $\pi_S(\mathrm{shape})$; class prior $q_H=\Prb(H_1)$; hierarchy $\mathcal H\in\{\mathrm{shared},\mathrm{tokenwise},\mathrm{tokenwise}^{S}\}$; optional level $\alpha_{\mathrm{test}}$
\State $L\gets0$; $a_g\gets\log w_g$ for every $g$; $b_s\gets0$ for every state $s$
\For{$t=1,\ldots,n$}
  \State compute all $z_{g,t}$ from \eqref{eq:component-log-ratio}
  \Statex
  \If{$\mathcal H=\mathrm{shared}$}
    \State $c_t\gets\LSE_g(a_g+z_{g,t})$
    \If{$c_t=-\infty$}
      \State \Return $L=-\infty$, $\Prb(H_1\mid Y_{1:t})=0$, and undefined shared-component weights
    \EndIf
    \State $a_g\gets a_g+z_{g,t}-c_t$ for every $g$ \Comment{component posterior}
  \ElsIf{$\mathcal H=\mathrm{tokenwise}$}
    \State $c_t\gets\LSE_g(\log w_g+z_{g,t})$
  \Else
    \Statex \hskip\algorithmicindent Because $S$ is document shared, each branch log marginal likelihood $b_s$ accumulates across tokens, whereas the conditional component weights $w_{g\mid s}$ remain fixed at their prior values at each token.
    \State $b_s\gets b_s+\LSE_{g:S_g=s}(\log w_{g\mid s}+z_{g,t})$ for each state $s$
    \State $c_t\gets\LSE_s(\log\pi_S(s)+b_s)-L$
    \If{$c_t=-\infty$}
      \State \Return $L=-\infty$ and $\Prb(H_1\mid Y_{1:t})=0$
    \EndIf
  \EndIf
  \State $L\gets L+c_t$ \Comment{$L=\log B_t$}
  \If{$\alpha_{\mathrm{test}}$ is supplied and $L\ge\log(1/\alpha_{\mathrm{test}})$}
    \State stop and reject $H_0$
  \EndIf
\EndFor
\State \Return $L$, the posterior from \eqref{eq:posterior}, and shared-component weights when applicable
\end{algorithmic}
\end{algorithm}

The mixture update costs $O(G)$ time per token and $O(G)$ memory once component likelihood ratios are available, writing $G$ for the number of grid components (distinct from the tail width $J$).  Generic Gumbel evaluation may cost $O(MG_\Delta)$, where $G_\Delta$ is the number of distinct deficit atoms in the grid, $96$ here, while the spike and least-favorable families admit the closed forms above.  The Dirichlet layer costs one tabulated $\psi_\alpha$ lookup per $(\Delta,\alpha)$ pair; adding six $\alpha$ atoms enlarges the component count from 96 to 576 in the clean experiments while preserving the $O(G)$ per-token structure.  The Gumbel components \eqref{eq:gumbel-spike} and \eqref{eq:gumbel-dirichlet} are strictly positive on $(0,1)$, so the one-step mixture numerator is positive at every observable pivot and the zero-support case does not arise for this scheme.

The implementation differs from Algorithm~\ref{alg:bf} in four respects.  Both detectors integrate the deficit over the same $96$-node Gauss--Legendre grid; the tokenwise path tabulates that integral as a function of the pivot and interpolates it, whereas the shared path accumulates component-specific log likelihoods across tokens and marginalizes over components afterward.  Second, the tokenwise \emph{spike Gumbel} mixture linearly interpolates $\log\int f_\Delta^{\mathrm{gum},\mathrm{sp}}\,\Pi(d\Delta)$ on an $80{,}001$-point $\logit(r)$ grid over $[-30,30]$; the largest measured midpoint error in log density is $5\times10^{-7}$.  Numerical quadrature estimates its total mass as $1-9.1\times10^{-9}$; the shared spike path evaluates \eqref{eq:gumbel-spike} directly.  Third, both Dirichlet paths evaluate the inner tabulated transform $\psi_\alpha$ of \eqref{eq:psi} on a $200{,}001$-node log-$c$ grid.  In a comparison with direct, non-tabulated quadrature --- $20{,}000$ equally spaced logits on $[-30,30]$, paired cyclically with the 96 deficit nodes, for all six configured $\alpha$ values --- the largest inner component log-density discrepancy is $9.5\times10^{-8}$.  Node rescaling normalizes the pre-interpolation discretized components, while numerical quadrature estimates the smallest inner-interpolated component mass as $1-7.3\times10^{-9}$.  The tokenwise Dirichlet rule applies a separate outer interpolation to the prior-averaged density on an $80{,}001$-node logit grid over $[-30,30]$.  Across all $80{,}000$ outer-grid midpoints, the maximum additional log-ratio discrepancy from direct evaluation of the prespecified mixture is $2.1\times10^{-6}$ near the upper grid boundary; numerical quadrature estimates the outer-interpolated numerator mass as $1-7.7\times10^{-9}$.  Fourth, the tokenwise union-tail rule tabulates the within-branch per-token mixture of Algorithm~\ref{alg:bf}'s tokenwise$^S$ arm on a separate $80{,}001$-node logit grid for each branch, normalizing within each, rather than recomputing it component-wise at every token; its shape branch therefore uses the same tabulated $\psi_\alpha$ transform as the two Dirichlet paths.  These finite-grid comparisons do not bound interpolation error uniformly, and the numerical mass calculations are not certified one-sided bounds.  Consequently, the exact martingale result applies only to analytic or exactly normalized densities.  An interpolated numerator retains the Ville bound if its total mass is certified not to exceed one or if it is conservatively renormalized using a certified upper mass bound; the resulting process is generally a test supermartingale rather than a Bayes-factor martingale.  On the same validation grid, a $20{,}001$-node inner table has a largest per-token discrepancy of $9.5\times10^{-6}$, which can accumulate to about $.0067$ over 700 tokens and change a decision when the log Bayes factor lies within that distance of a cutoff.  The reported calculations therefore use the $200{,}001$-node inner table.

The computational-materials index in Section~\ref{sec:computational-materials} identifies the lookup-grid, quadrature, outer-interpolation, and component-level tail diagnostics.

\subsection{Numerical checks and interpretation}
\label{sec:numerical-checks}

The standard errors in Section~\ref{sec:experiments} condition on the calibrated
cutoffs.  For fractional miss contributions that average over boundary
randomization, the binomial plug-in formula uses the Bernoulli variance upper
bound rather than estimating their exact conditional variance.  Maximum-regret
bootstrap samples resample documents independently within generating regimes,
preserve pairing across rules and horizons, and recompute the best-rule envelope
and maximum on each draw.  These calculations exclude uncertainty from
calibration, numerical approximation, and the choice of candidate rules and
generating families; they are not simultaneous confidence intervals.

Across the two clean experiments, 27 method--experiment combinations are evaluated at each of 700 prefixes, giving 18{,}900 prefix cells: twelve in the tokenwise-specification experiment and fifteen in the shared-deficit experiment, the shared-hierarchy rules occurring only in the latter.  Over the 69 non-diagnostic cells at the three tabulated horizons, realized evaluation-sample Type~I error ranges from $0.0442$ to $0.0564$; including the two diagnostic scores of Section~\ref{sec:family} gives 81 cells and widens the range to $[0.0424,0.0564]$.

Using the 96-node design in Table~\ref{tab:design}, the recorded paired clean-benchmark sweep at 48, 96 and 192 Gauss--Legendre nodes covers the shared and tokenwise equal-tail and Dirichlet rules.  The 36 equal-tail and Dirichlet Gumbel cells have identical Type~II errors across node counts.  This original diagnostic excludes the union-tail rules and does not bound quadrature error outside the sampled paths.

Over 300 paired Gumbel paths, the 96-node contamination-aware equal-tail log Bayes factors differ from 192-node values by at most $8.6\times10^{-8}$, attained at $\rho_\star=.4,n=700$; the corresponding value at $\rho_\star=0,n=700$ is $4.4\times10^{-8}$.  At 48 nodes the maximum rises to $2.0\times10^{-3}$.  The contamination-aware tail-shape rule's largest 96-versus-192 discrepancy over the same paths is $2.9\times10^{-7}$, slightly the larger of the two; this reflects the fixed $200{,}001$-node $\psi_\alpha$ table rather than a property of the layer.  The original contamination diagnostic also excludes the union-tail rule.

An additional decision-level check uses 300 calibration, 300 evaluation-null and 300 evaluation-alternative documents per experiment, retaining the 700-token horizon and production lookup tables. All 48 Gumbel cells, including the twelve shared or tokenwise union-tail cells, have identical Type~II errors at 48, 96 and 192 nodes. This smaller-sample diagnostic is separate from the original full-sample check. The extended contamination check retains all 300 original paired paths: the union-tail rule has a maximum 96-versus-192 log-Bayes-factor discrepancy of $2.91\times10^{-7}$, and every original equal-tail and Dirichlet cell is reproduced. Neither check certifies a one-sided interpolation-mass bound.

At $\alpha=\infty$, the maximum absolute discrepancy from the closed-form spike log density is $9.6\times10^{-8}$ on the validation grid, and the maximum discrepancy in the 700-token shared log Bayes factor is $9.0\times10^{-8}$ on the checked paths.  The $\alpha=\infty$ rule in Table~\ref{tab:tails} therefore implements the equal-tail detector to the reported numerical precision; the equal-tail generator T1 reproduces the shared-$\Delta$ simulator exactly.

With the $\alpha$ prior restricted to $\{\infty\}$, Type~I and Type~II decisions match those of the equal-tail rules at all 700 prefixes under both hierarchies and both decision rules.  The comparison uses identical pivot arrays, and the cutoffs differ by $O(10^{-13})$.  A unit test also confirms equality of the detector and tail-sweep implementations.

The 604 automated tests cover density normalization, null calibration, analytic identities, marginalization, paired simulation, model-family containment, archived-benchmark replay and prefix curves, generation-resume safeguards, manuscript--artifact consistency, and agreement with reference implementations.  These checks do not establish the one-sided mass bound required for an e-process guarantee.  When $b=c=0$, the directional McNemar effect is not estimable and no nontrivial conditional test is defined.  For Holm adjustment, the implementation records $p=1$ and retains the comparison in the prespecified family; the recorded value is not a test result.  None of the 18 shared-Bayes-versus-reference comparisons has zero discordances, so the convention does not affect the reported adjusted $p$-values.  Numerical summaries, prefix-curve tables, document-level indicators, and analysis code are organized in a separate companion repository; Section~\ref{sec:computational-materials} identifies their contents and the limits of their reproducibility claims. Vector figures are included with the manuscript source.

The regime sweeps compare matched and mismatched persistence assumptions; the tail-law sweep separates deficit-grid mismatch from tail-family mismatch.  In the tail-law sweep at $n=100$, the maximum regrets of the Dirichlet tail-shape mixture and $h^{\mathrm{sp}}_{.01}$ are $.0014$ and $.0270$, respectively.  These values are conditional on the fifteen rules and six tail laws evaluated and do not characterize the full least-favorable class or heterogeneous deployments.

\FloatBarrier
\subsubsection{Anytime monitoring}
\label{sec:anytime-monitoring}

Fixed-horizon calibration does not control Type~I error under optional stopping.  For the exact analytic densities, Theorem~\ref{thm:martingale} justifies the rule $\sup_{t\le700}B_t\ge20$ with Type~I error at most .05.  Table~\ref{tab:anytime} applies the same threshold to the numerical implementations.  The shared-spike grid row evaluates the analytic mixture directly, subject to floating-point error; the tokenwise Gumbel lookup, both Dirichlet rows, and the union-tail row, whose shape branch keeps the Dirichlet component live at $w=.5$, additionally use interpolation.  Their normalization errors are numerically small, but the observed below-one quadrature values are not certified upper bounds; those rows are approximate implementations of the analytic anytime rule.  Theorem~\ref{thm:martingale} applies to the analytic densities, not automatically to their finite-precision implementations.  Under shared $\Delta$, the shared hierarchy has lower observed Type~II error than the tokenwise hierarchy.

\begin{table}[htbp]
\centering
\caption{Anytime-threshold experiment at $n=700$ in the shared-$\Delta$ setting.  Reject on the first crossing of $B_t\ge20$.  Parentheses give Monte Carlo standard errors.  The exact analytic construction has the .05 guarantee; the interpolation-based rows are numerical approximations described in Supplementary Section~\ref{sec:stable-computation}.  The rows differ in hierarchy and component family.  These entries are not directly comparable to fixed-horizon 5\% cutoffs.}
\label{tab:anytime}
\begin{tabular}{@{}lrr@{}}
\toprule
Bayesian model & Type~I & Type~II \\
\midrule
tokenwise, spike & \mcse{.0074}{.0012} & \mcse{.0522}{.0031} \\
tokenwise $+$ tail shape & \mcse{.0078}{.0012} & \mcse{.0504}{.0031} \\
shared $\Delta$, spike & \mcse{.0074}{.0012} & \mcse{.0036}{.0008} \\
shared $\Delta$ $+$ tail shape & \mcse{.0086}{.0013} & \mcse{.0036}{.0008} \\
shared $\Delta$ $+$ union tail & \mcse{.0154}{.0017} & \mcse{.0036}{.0008} \\
\bottomrule
\end{tabular}
\end{table}

\FloatBarrier
\subsection{Archived benchmark data: reanalysis and replay}
\label{sec:released-analysis}

\subsubsection{Benchmark data of Li et al. and calibration}
\label{sec:released-calibration}

This supplementary analysis reanalyzes the open-model benchmark experiment of \citet{li2025framework} using the archived samples in the authors' \emph{WatermarkFramework} GitHub repository \citep{li2024watermarkframework}.  These are the pre-existing outputs introduced in Section~\ref{sec:released-output}; the repository reference identifies the exact commit used for replay.  For each of OPT-1.3B \citep{zhang2022opt} and Sheared-LLaMA-2.7B \citep{xia2023sheared}, the benchmark archive contains 500 Gumbel-max-watermarked and 500 unwatermarked continuations sharing the same prompts.  The prompts are 50-token windows from the first 500 eligible records in the \texttt{train} split of C4's \texttt{realnewslike} configuration \citep{raffel2020exploring}.  Each window immediately precedes the final 200 tokens of its record after truncation to at most 2028 tokens, as specified in their reference generation code \citep{li2024watermarkframework}; the first 200 continuation tokens are scored.  Watermarked generation uses temperature $.1$, key 15485863, and the skipgram pseudorandom function with context width four.  The unwatermarked generation function does not apply the command-line temperature and therefore samples from unscaled logits, corresponding to temperature one \citep{li2024watermarkframework}.  The temperature mismatch confounds the Type~II comparison.  The watermarked arm generated at temperature $.1$ contains substantial phrase repetition.  The reference implementation's \texttt{skipgram} function does not hash the four-token window jointly: it addresses the keyed draw with the oldest token of that window alone, so the pivot at position $t$ is the $W_t$ coordinate of the vector drawn from the seed $h(W_{t-4})$, and it repeats whenever the \emph{pair} $(W_{t-4},W_t)$ repeats.  Repetition of the context token alone therefore reuses a seed but does not guarantee repetition of the pivot; phrase repetition can reproduce both coordinates of the pair.  Grouping the archived watermarked positions confirms this mechanism: the pivot is constant within every one of the $38{,}104$ and $42{,}984$ groups that share a pair, and is not constant within $7{,}477$ of $24{,}182$ and $8{,}699$ of $26{,}243$ of the groups that share only the context token.  Watermarked documents contain $76$ and $86$ distinct pivot values on average among 200 positions, compared with $197$ for unwatermarked documents.  A statistic that ignores the pivot \emph{values} and uses only their number of distinct values separates the archived watermarked and unwatermarked samples with area under the curve $.997$--$.998$ and Type~II error $.004$--$.012$, lower than every rule in Table~\ref{tab:released-output}.  Consequently, error rates on these samples conflate watermark sensitivity with sensitivity to low-temperature repetition.  Section~\ref{sec:temperature-matched} removes this confound by regenerating both arms at a common temperature.  Table~\ref{tab:released-output} is therefore reported as an exploratory reproduction of the comparison supported by these benchmark data; it does not estimate a matched-temperature ranking of detection power.

The archived watermarked tensors contain exactly 200 scored positions.  The additional 20 tokens stored for unwatermarked generation do not provide a matched watermarked-pivot sequence, so the power curves for these archived data cannot extend beyond $n=200$.

No text is regenerated in this supplementary reanalysis.  The benchmark files in WatermarkFramework store the watermarked pivots and the unwatermarked token sequences.  The latter are passed through the reference implementation's key and hash functions on the CPU to reconstruct null pivots; this step uses no model weights, tokenizer, C4 download, or GPU.  For each model, one independent sample of 10,000 exact-null pivot paths, generated with base seed 240401253, supplies method-specific randomized $.05$ cutoffs at every prefix $n=1,\ldots,200$.  Each cutoff is applied unchanged to the archived unwatermarked and watermarked benchmark samples.  The boundary-integrated calibration rate is exactly $.05$; the archived unwatermarked benchmark outputs are reserved for empirical Type~I evaluation.  The displayed reference-score menu matches the plotting specification in their reference implementation.

All Bayesian rows in this subsection use the same $\Delta\sim\Unif(.001,.5)$ prior as the synthetic experiments.  Ninety-six Gauss--Legendre nodes in $\Delta$ approximate the continuous prior.  The Dirichlet tail-shape rows use the uniform prior on $\alpha\in\{0.1,1,10,100,1000,\infty\}$, independently of $\Delta$.  The union-tail row assigns weight $w=.5$ to this full-width shape prior and the remaining weight to the equal-tail width ladder.

Figure~\ref{fig:released-common-prefix} reports every available prefix under this common calibration.  The table that follows gives the $n=200$ endpoint with Monte Carlo standard errors.

\begin{figure}[p]
\centering
\includegraphics[width=\textwidth,height=.80\textheight,keepaspectratio]{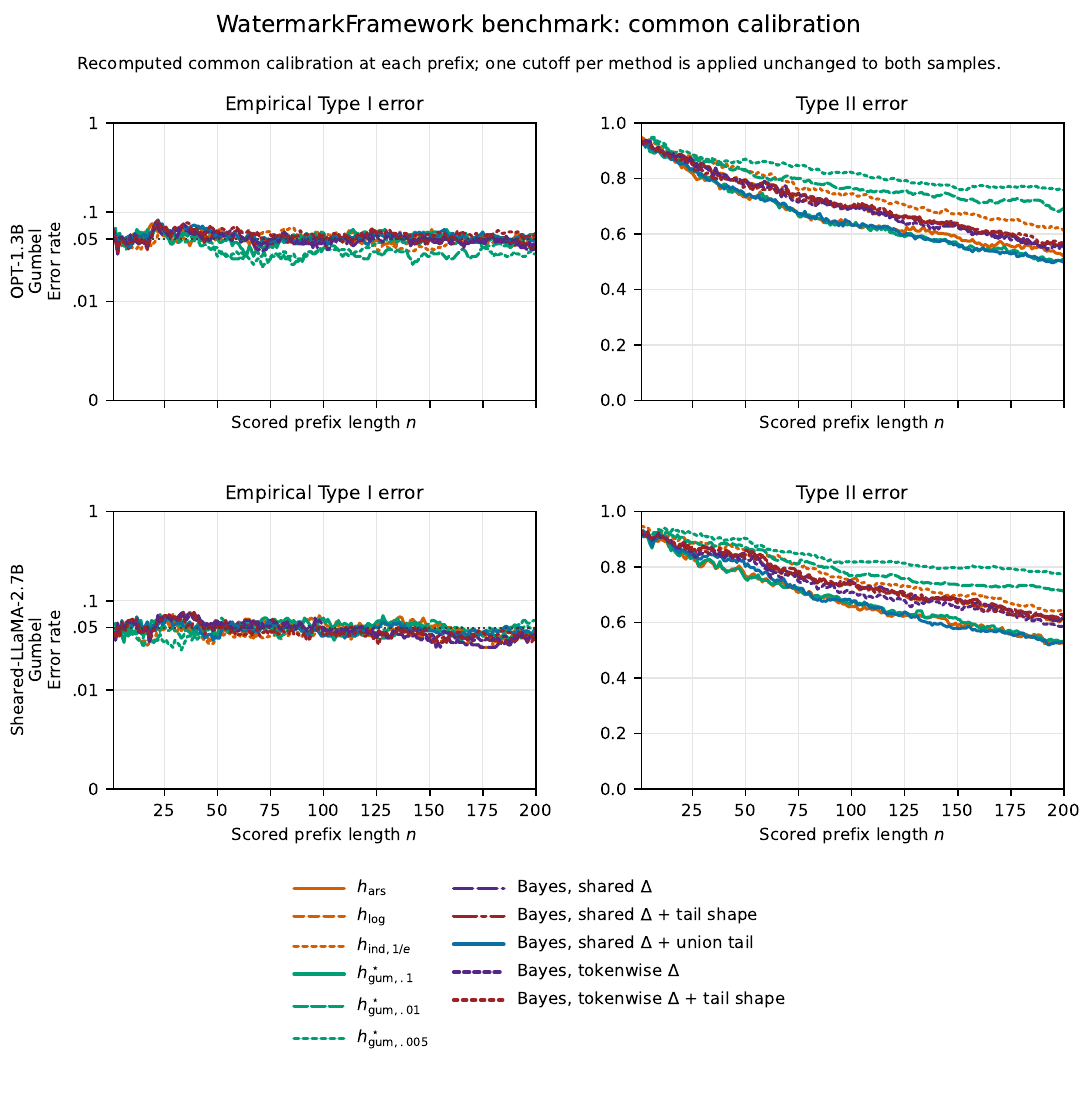}
\caption{Full-prefix empirical errors on the archived benchmark data of \citet{li2025framework} under the independent exact-null calibration used in Table~\ref{tab:released-output}.  Rows distinguish the two models; columns show Type~I and Type~II error.  Each curve uses 500 documents per arm.  At every prefix, one seeded sample of 10,000 exact-null paths supplies method-specific randomized 5\% cutoffs applied unchanged to both arms.  Colours group method families, and line patterns distinguish members; Bayesian labels follow Figure~\ref{fig:shared}, with $\Delta\sim\Unif(.001,.5)$ and union-tail weight $w=.5$.  Rates integrate cutoff-boundary randomization.  The Type~I axes are linear below $.01$ and logarithmic above it; the dotted line marks $.05$.  Type~II axes are linear.  The archived benchmark arms use different generation temperatures, so these curves do not estimate matched-temperature power.}
\label{fig:released-common-prefix}
\end{figure}

\begin{table}[htbp]
\centering
\scriptsize
\setlength{\tabcolsep}{3pt}
\caption{Endpoint performance at $n=200$ on the archived benchmark data of \citet{li2025framework}.  All Bayesian rows use $\Delta\sim\Unif(.001,.5)$.  Entries are empirical Type~I errors on 500 unwatermarked continuations and Type~II errors on 500 separately watermarked continuations; rates integrate the auxiliary cutoff-boundary randomization.  Parentheses give plug-in across-document Monte Carlo standard errors that treat the 500 conditional rejection contributions as independent.  They do not include cutoff-calibration, corpus-selection, key-to-key, or repeated-generation uncertainty.  Boldface marks the lowest observed Type~II error in each model column.  The tail-shape rows use the Dirichlet prior, while the union-tail row mixes tail shape and width with $w=.5$.  The archived benchmark arms use different temperatures, so these entries do not estimate matched-temperature power.}
\label{tab:released-output}
\begin{tabular}{@{}lrrrr@{}}
\toprule
& \multicolumn{2}{c}{OPT-1.3B} & \multicolumn{2}{c}{Sheared-LLaMA-2.7B} \\
\cmidrule(lr){2-3}\cmidrule(lr){4-5}
Method & Type~I & Type~II & Type~I & Type~II \\
\midrule
$h_{\mathrm{ars}}$ & \mcse{.0520}{.0099} & \mcse{.5200}{.0224} & \mcse{.0400}{.0088} & \bestmcse{.5240}{.0224} \\
$h_{\log}$ & \mcse{.0420}{.0090} & \mcse{.5680}{.0222} & \mcse{.0500}{.0098} & \mcse{.6160}{.0218} \\
$h_{\mathrm{ind},1/e}$ & \mcse{.0519}{.0099} & \mcse{.6125}{.0218} & \mcse{.0423}{.0088} & \mcse{.6485}{.0213} \\
$h^\star_{\mathrm{gum},.1}$ & \mcse{.0580}{.0105} & \mcse{.5100}{.0224} & \mcse{.0400}{.0088} & \mcse{.5320}{.0223} \\
$h^\star_{\mathrm{gum},.01}$ & \mcse{.0420}{.0090} & \mcse{.6920}{.0207} & \mcse{.0600}{.0106} & \mcse{.7120}{.0203} \\
$h^\star_{\mathrm{gum},.005}$ & \mcse{.0340}{.0081} & \mcse{.7580}{.0192} & \mcse{.0480}{.0096} & \mcse{.7740}{.0187} \\
Bayes, tokenwise $\Delta$ & \mcse{.0460}{.0094} & \mcse{.5520}{.0223} & \mcse{.0420}{.0090} & \mcse{.5900}{.0220} \\
Bayes, shared $\Delta$ & \mcse{.0420}{.0090} & \mcse{.5660}{.0222} & \mcse{.0480}{.0096} & \mcse{.6160}{.0218} \\
Bayes, tokenwise $\Delta$ $+$ tail shape & \mcse{.0600}{.0106} & \mcse{.5740}{.0221} & \mcse{.0440}{.0092} & \mcse{.6220}{.0217} \\
Bayes, shared $\Delta$ $+$ tail shape & \mcse{.0440}{.0092} & \mcse{.5680}{.0222} & \mcse{.0360}{.0083} & \mcse{.6240}{.0217} \\
Bayes, shared $\Delta$ $+$ union tail & \mcse{.0480}{.0096} & \bestmcse{.4980}{.0224} & \mcse{.0420}{.0090} & \mcse{.5280}{.0223} \\
\addlinespace
\bottomrule
\end{tabular}
\end{table}

Table~\ref{tab:released-output} gives the endpoint summary.  At $n=200$, the best reference-score Gumbel Type~II errors are $.510$ for OPT-1.3B and $.524$ for Sheared-LLaMA-2.7B.  Every Bayesian rule that fixes the tail at the full vocabulary width has higher error, with minima $.552$ and $.590$; the union-tail rule instead attains $.498$ and $.528$, the lowest displayed error on OPT-1.3B and the second-lowest, after $h_{\mathrm{ars}}$, on Sheared-LLaMA-2.7B.  Neither margin is statistically resolved: against $h^\star_{\mathrm{gum},.1}$ on OPT-1.3B the paired discordance is 29 versus 35 (exact two-sided McNemar $p=.53$), and against $h_{\mathrm{ars}}$ on Sheared-LLaMA-2.7B it is 35 versus 33 ($p=.90$).  Within the Bayesian family, the union-tail rule improves on the shared Dirichlet tail-shape rule by 79 versus 44 discordances on OPT-1.3B ($p=.0020$) and 94 versus 46 on Sheared-LLaMA-2.7B ($p=6.1\times10^{-5}$).  These rankings use common simulated-null calibration but are not matched on empirical Type~I error in the archived unwatermarked benchmark outputs.  They describe these particular archived samples and do not establish a population ordering.

\FloatBarrier
\subsubsection{Effective tail width and model fit}
\label{sec:released-width}

Our reanalysis of the archived benchmark data also assesses effective tail width against the full-width
assumption of the equal-tail Bayesian rules.  Because the recorded top probability
fixes $\Delta_t$, the exact density \eqref{eq:gumbel-tailwidth} has $J$ as its
only free parameter, so its likelihood can be profiled over $J$.  The archived token positions
are not independent: the skipgram function seeds from the token four back, so
positions sharing that address share a pseudorandom vector.  The estimate
therefore uses the first occurrence of each address, selected before the deficit
filter, because a later occurrence is conditioned on a decoding path that includes
the earlier keyed draw.
This leaves $1{,}083$ and $989$ informative first-use positions for OPT-1.3B and
Sheared-LLaMA-2.7B, respectively, with $\Delta_t\in[.05,.5]$ and each position
retaining its own deficit.  The profile maximum is $J=2$ for OPT-1.3B and $J=1$
for Sheared-LLaMA-2.7B; the log likelihood falls by $20.0$ and $35.0$ nats,
respectively, by $J=4$.  OPT-1.3B's advantage for $J=2$ over $J=1$ is only
$2.2$ nats.

First-use selection removes repeated pseudorandom-vector reuse within the
retained sample, but not dependence within a continuation, whose positions
share a prompt and a decoding path.  Each bootstrap replicate therefore
resamples 385 OPT-1.3B documents and 368 Sheared-LLaMA-2.7B documents with
replacement and retains every selected position of each.  Across 2000
replicates, $J=1$ is selected in $94\%$ of Sheared-LLaMA-2.7B replicates and
$J=2$ in $66\%$ of OPT-1.3B replicates; these frequencies
describe resampling stability, not calibrated significance tests.
Deduplication by the rounded pivot instead selects
$J=1$ for both.  That reduction is not valid here: the likelihood is
$f_{\Delta,J}(r)$, and dropping other members of a repeated-pivot group discards
their deficits, which differ in all $631$ and $776$ such groups.  The data favor
an effective width of one or two coordinates, not a single common value.

The fitted width reduces the lack of fit relative to the full-width assumption,
but residual lack of fit remains.  On the same first-use sample, the probability-integral
transform gives Kolmogorov--Smirnov $D=.0339$ at $p=.16$ for OPT-1.3B but
$D=.0445$ at $p=.038$ for Sheared-LLaMA-2.7B, nominally below $.05$, whereas the
assumed full width $J=M-1$ gives $D=.163$ and $D=.152$, at
$p=1\times10^{-25}$ and $p=3\times10^{-20}$.  Both fitted-width $p$-values are
descriptive rather than calibrated: $J$ is selected on the observations used
for the test, and the fixed-distribution Kolmogorov--Smirnov reference law does
not account for that selection or the remaining within-document dependence.
A calibration would need to repeat the profiling step and account for that
dependence.  The full-width tests fix $J$ and avoid the selection issue but
share the dependence, so their $p$-values are nominal too.  They summarize
substantial full-width misfit, not a calibrated significance level.

Independent simulated pivots assess width recovery at the two observed sample
sizes.  Over 200 replicates per width at $n=1{,}083$ and $n=989$, the truth
$J=1$ is recovered in $100\%$ of replicates and $J=2$ in $98.5\%$ and $96\%$,
respectively.  Recovery decreases to $89.5\%$ at $J=4$ and $93.0\%$ at $J=16$
in the lower-performing of the two sample sizes; recovery of $J=256$ is
$100\%$.  An earlier check used $8{,}000$ simulated positions, roughly eight
times the size of either model sample, and recovered the generating width in every replicate at all five widths.
That larger-sample result does not establish recovery in every replicate at the observed
sample sizes.  These simulations assess recovery under the generating family,
not the adequacy of that family for deployed outputs.  Here $J$ is an effective
width: a deployed tail need not be exactly equal, so $J=1$ means the residual
mass behaves as though carried by one coordinate, not that only one is nonzero.
The complete width-profile and bootstrap summaries are indexed in Section~\ref{sec:computational-materials}.

The stored watermarked NTP summaries remain poorly resolved near the lower endpoint: across the two models' archived Gumbel samples, $81.8\%$--$83.1\%$ of the recorded float32 top-probability deficits $1-\max_w p_{t,w}$ are below $.001$, and $66.9\%$--$69.9\%$ are recorded as zero.  These finite-precision zeros need not equal the underlying model probabilities, and the summaries do not identify the cause of each power difference.  Across the eleven Gumbel methods tabulated here and the three evaluated horizons, empirical Type~I error on the archived unwatermarked benchmark outputs ranges from $.032$ to $.064$ under the common calibration, the lower endpoint at $h^\star_{\mathrm{gum},.005}$ on OPT-1.3B at $n=50$; the Tr-GoF statistic, scored but not tabulated here, reaches $.074$.  Because the skipgram construction can reuse a seed when the token four positions back repeats, the fresh-pivot condition is not verified.  The reanalysis of the archived benchmark data is therefore restricted to fixed-horizon empirical errors; no anytime-valid claim is made for these sequences.

\FloatBarrier
Numerical checks and cross-experiment interpretation are reported in Supplementary Section~\ref{sec:numerical-checks}.

\subsubsection{Replay checks and regeneration provenance}
\label{sec:released-replay}

All six WatermarkFramework input files used by the replay loader are checked against commit-pinned SHA-256 values before deserialization.  CPU replay of the reference implementation's fixed-table hash and keyed pseudorandom draws is checked against fixed full-array hashes for both models.  For this replay, PyTorch handles verified deserialization and keyed draws; the separate model-generation workflow also uses it.  The common-calibration curves and the authors' archived error-rate records are kept separate.  The former evaluate each score with one fixed cutoff at each prefix.  The stored arrays contain aggregate rates but not the authors' document-level decisions or realized cutoffs, so neither a paired standard error for their difference nor the cutoff variability can be recovered.  Across the six reference scores that have an archived error-rate record, the largest difference between the common-calibration and the authors' archived Type~II rates is $.060$, at $h^\star_{\mathrm{gum},.005}$ on OPT-1.3B at $n=50$, where the common calibration is the more conservative of the two: size $.032$ against $.072$ and Type~II error $.870$ against $.810$.  This joint discrepancy in realized size and Type~II error cannot be attributed solely to binomial variation in the 500 evaluation documents.  The supplementary commit-pinned prefix-curve display reports these machine-readable records without digitizing the historical figure.

Among 100,000 replayed unwatermarked positions, the OPT-1.3B and
Sheared-LLaMA-2.7B samples use 15,475 and 10,862 distinct context tokens,
respectively, to address the pseudorandom function.  The reference implementation of Li et al.
seeds from the oldest token of the four-token window rather than from the
window as a whole.  The seed alphabet is therefore the vocabulary, and $\xi_t$
is a deterministic function of one earlier token instead of a draw that is
fresh given $\F_{t-1}$, as \eqref{eq:pivot-null} assumes.  Seed reuse does not
imply identical pivots, because different realized tokens may select different
keyed coordinates; the marginal pivot distribution may nevertheless appear
uniform.  However, these data do not establish the conditional-null property
required for the sequential results.  Accordingly, both archived-benchmark replay
and temperature-matched regeneration are restricted to fixed-horizon claims.

The archived benchmark tensors are fixed by the cited commit, but the model and C4
revisions used for their original generation were not recorded in that archive.
The supplied \emph{pinned regeneration recipe} cannot recover the unrecorded
historical model and C4 revisions.  It names model checkpoints and tokenizers by commit, identifies the C4
slice by repository revision and SHA-256, and requires the WatermarkFramework checkout
to be clean at the expected commit.  The included prompt-construction script
fetches that pinned slice by digest and fails unless its first 500 rows
reproduce the archived benchmark prompt tensor exactly.  The generator validates custom
prompt sidecars, token values, requested row counts, and both table and selected-row
digests before loading a model.  It seeds the unwatermarked arm and records
model and tokenizer revisions, prompt digests, seeds, batch size, device,
software versions, code hashes, and the verified framework commit in each new
array file. Raw-arm streams are indexed by global prompt number and prompt-row
digest, preventing a restart across separately generated blocks. Batch size
remains part of the run identity because model arithmetic can depend on it.  A resume must match the existing identity and all completed
arm arrays; it may append temperatures or methods, preserving the original
record and extending the completed-arm manifest.  Checkpoints are replaced
atomically.  Legacy or incompatible files are rejected without mutation and
require a new output path. Downstream analysis verifies the selected prompt
rows, array hashes, run compatibility, nonoverlapping offsets and paired arms.
Explicit legacy opt-in accepts only the eight SHA-256-allowlisted historical
archives.

The eight stored array files for this study's temperature-matched generations
predate this run-record format and have not been regenerated or retroactively
assigned new metadata.
Their recorded provenance is consequently weaker than that of a new run under
the revised recipe; the accompanying snapshot manifest fingerprints the
stored files but does not verify their historical generation settings.
An earlier regeneration recipe restarted the raw-arm random stream for the
base and extension prompt blocks.  Whether the stored runs used that recipe
cannot be established from their arrays.  Their prompt-cluster uncertainty
calculations therefore remain conditional on independent raw-generation
streams across prompts, an assumption not certified by the historical records.
The reported analyses can be replayed from those stored arrays.  Because the
revised generation safeguards were tested without a complete model-generation
rerun, they validate the procedure but do not establish end-to-end reproducibility
or bitwise equality across hardware.
Figure~\ref{fig:released-upstream-prefix} plots the reference-score curves archived by \citet{li2025framework},
retaining their original cutoff construction.
The companion benchmark-data summaries and decision arrays contain file and replay hashes, cutoffs, error rates, and document-level decisions; Section~\ref{sec:computational-materials} identifies these files.

\begin{figure}[htbp]
\centering
\includegraphics[width=.85\textwidth]{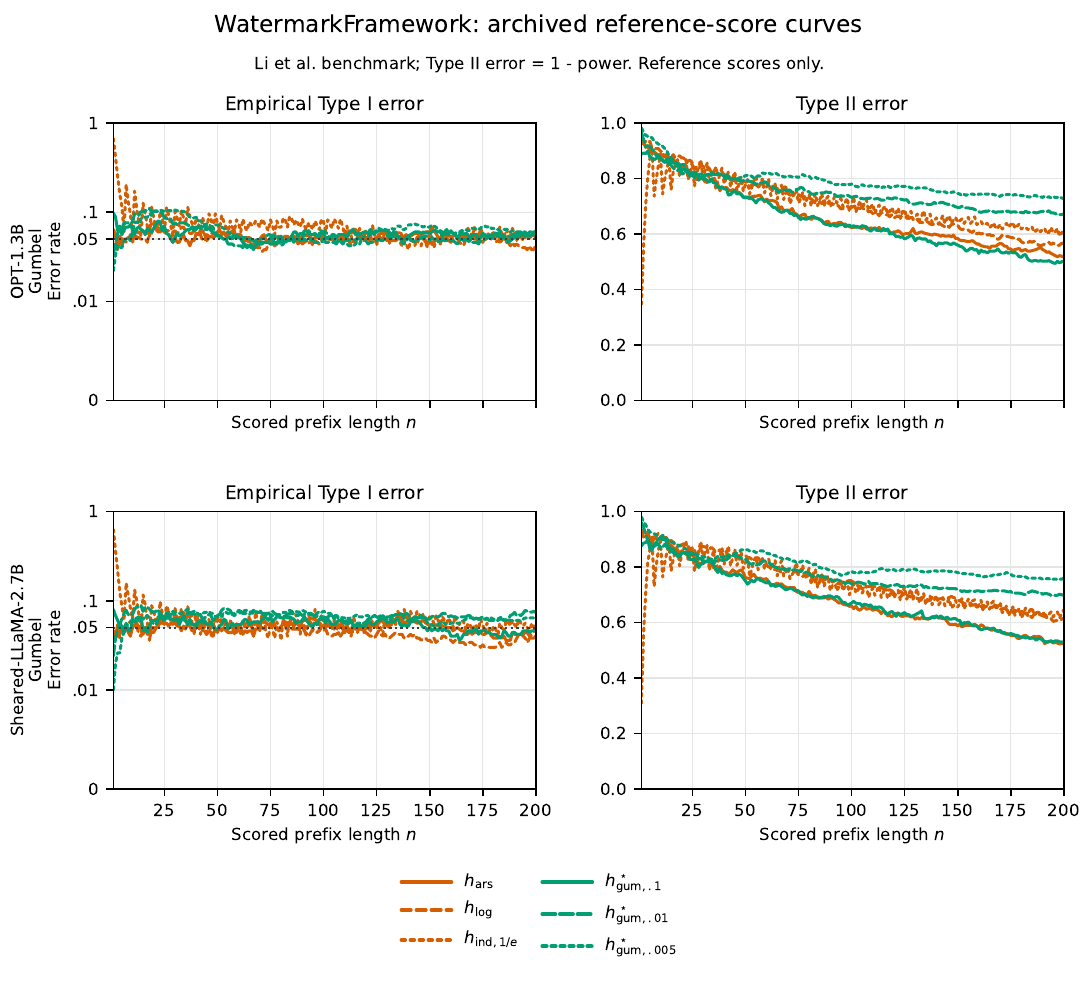}
\caption{Reference-score error curves reconstructed from the error-rate arrays archived by \citet{li2025framework} in WatermarkFramework.  The four Gumbel 200-entry \texttt{*-null.json} and \texttt{*-result.json} arrays supply Type~I error and power, respectively; Type~II error is one minus power.  No curve is digitized from a figure.  Only reference scores are recorded in that archive, so no Bayesian rule appears.  Models, method labels, axis scales, and the $.05$ Type~I reference line follow Figure~\ref{fig:released-common-prefix}.  These curves retain the cutoff construction used by Li et al. and are not used in the common-calibration endpoint table.}
\label{fig:released-upstream-prefix}
\end{figure}

\FloatBarrier
\subsection{Additional synthetic experiments}
\label{sec:synthetic-details}

\subsubsection{Numerical settings and boundary randomization}
\label{sec:synthetic-design-details}

\begin{table}[htbp]
\centering
\caption{Common synthetic numerical design.  The generating deficit range agrees with the Bayes prior, whereas the latent structure agrees only with the corresponding hierarchy.  The Gumbel spike component equals the generating family; the Dirichlet rule averages over the listed tail shapes, including the equal-tail atom.  Entries are fixed design settings rather than Monte Carlo estimates.}
\label{tab:design}
\begin{tabular}{@{}ll@{}}
\toprule
Quantity & Setting \\
\midrule
Vocabulary & $M=1000$ \\
Clean horizons & every prefix $n=1,\ldots,700$ \\
Robustness horizons & $n\in\{100,300,700\}$ \\
Fixed-horizon level & $\alpha_{\mathrm{test}}=0.05$, calibrated independently for each $n$ \\
Monte Carlo samples & 10,000 calibration; 5,000 null; 5,000 alternative \\
Deficit range / prior & $\Delta\sim\Unif(0.001,0.5)$ \\
Tail-concentration prior (Dirichlet rules) & $\alpha\sim\Unif\{0.1,1,10,100,1000,\infty\}$ \\
Quadrature & 96 Gauss--Legendre nodes in $\Delta$; 576-component \\
 & $(\Delta,\alpha)$ grid and a $200{,}001$-node $\psi_\alpha$ table \\
 & for the Dirichlet rules \\
Random seeds & clean 240401245; contamination 240401246; \\
 & contamination quadrature 240401247; \\
 & deficit sweep and $\Delta$-learning diagnostic 240401248; \\
 & tail sweep 240401250; contamination-asymmetry diagnostic \\
 & 240401251; \\
 & max-regret bootstrap salts 91730041 (deficit), 91730042 (tail) \\
Contamination settings & Supplementary Section~\ref{sec:contamination-supplement} \\
\bottomrule
\end{tabular}
\end{table}

At an atom of the score distribution, the rejection probability $\gamma$ at the boundary is chosen to attain 5\% size in the calibration sample.  Aggregate error rates use the Rao--Blackwellized expectation over this auxiliary randomization.  Paired comparisons use one realized auxiliary uniform, common to every method for the same document and evaluation cell.  Across the 81 evaluation cells, 274 documents fall on a score atom, all for the discrete $h_{\mathrm{ind},1/e}$; continuous scores are unaffected.

\FloatBarrier
\subsubsection{Shared-deficit endpoint results}
\label{sec:shared-endpoints}

Table~\ref{tab:shared} gives selected horizons from the shared-deficit
experiment in Section~\ref{sec:shared-sensitivity} and Figure~\ref{fig:shared}.

\begin{table}[htbp]
\centering
\small
\caption{Fixed-horizon Type~II error in the shared-$\Delta$ sensitivity experiment.  Each method is independently calibrated at nominal 5\% Type~I error.  Parentheses give Monte Carlo standard errors.  Bold identifies the lowest displayed error at each horizon, and all entries attaining it when there is a tie.  The two Dirichlet rows replace the equal-tail Gumbel component by the Dirichlet tail-shape mixture \eqref{eq:gumbel-dirichlet}.}
\label{tab:shared}
\begin{tabular}{@{}llrrr@{}}
\toprule
Scheme & Method & $n=100$ & $n=300$ & $n=700$ \\
\midrule
Gumbel & $h_{\mathrm{ars}}$ & \mcse{.0426}{.0029} & \mcse{.0192}{.0019} & \mcse{.0094}{.0014} \\
 & $h_{\log}$ & \mcse{.1698}{.0053} & \mcse{.0984}{.0042} & \mcse{.0654}{.0035} \\
 & $h_{\mathrm{ind},1/e}$ & \mcse{.2192}{.0057} & \mcse{.1309}{.0048} & \mcse{.0847}{.0039} \\
 & $h^\star_{\mathrm{gum},.1}$ & \mcse{.0934}{.0041} & \mcse{.0556}{.0032} & \mcse{.0342}{.0026} \\
 & $h^\star_{\mathrm{gum},.01}$ & \mcse{.0420}{.0028} & \mcse{.0178}{.0019} & \mcse{.0098}{.0014} \\
 & $h^\star_{\mathrm{gum},.005}$ & \mcse{.0328}{.0025} & \mcse{.0124}{.0016} & \mcse{.0062}{.0011} \\
 & Bayes, tokenwise $\Delta$ & \mcse{.0246}{.0022} & \mcse{.0096}{.0014} & \mcse{.0042}{.0009} \\
 & Bayes, tokenwise $\Delta$ $+$ tail shape & \mcse{.0274}{.0023} & \mcse{.0098}{.0014} & \mcse{.0042}{.0009} \\
 & Bayes, shared $\Delta$ & \mcse{.0194}{.0020} & \bestmcse{.0056}{.0011} & \bestmcse{.0014}{.0005} \\
 & Bayes, shared $\Delta$ $+$ tail shape & \mcse{.0194}{.0020} & \mcse{.0058}{.0011} & \bestmcse{.0014}{.0005} \\
 & Bayes, shared $\Delta$ $+$ union tail & \bestmcse{.0192}{.0019} & \mcse{.0058}{.0011} & \bestmcse{.0014}{.0005} \\
\bottomrule
\end{tabular}
\end{table}

For $h^\star_{\mathrm{gum},.005}$, the equal-tail resimulation gives Type~II errors $0.0328$, $0.0124$, and $0.0062$ at $n=100,300,700$, respectively.  It differs from the accompanying simulation snapshot by replacing token-varying normalized-uniform tails with \eqref{eq:spike-ntp}.

\FloatBarrier
\subsubsection{Regret across generating regimes}
\label{sec:regimes}

To assess sensitivity to the generating distribution of $\Delta$, each rule is calibrated once under the exact null and applied unchanged to six regimes.  Regimes A and B use $\Delta\sim\Unif(.001,.5)$ and $\Unif(.001,.05)$; the remaining regimes fix $\Delta\in\{.002,.005,.0075,.02\}$.  Of these point values, only $.005$ equals a tested $\Delta_0$; it is also the tuning value of $h^\star_{\mathrm{gum},.005}$.  The value $.002$ lies below the tested point grid, while $.0075$ and $.02$ lie within the two smallest intervals between its grid points.  The Bayes rules retain their prespecified deficit and tail priors throughout.  The equal-tail generator makes the tail-shape dimension unnecessary in these regimes, so the effect of averaging over it is measured empirically.

Table~\ref{tab:regret} reports the Type~II error of every eligible competitor in each regime at each horizon, alongside the resulting maximum regret, where regret in a regime is the excess Type~II error relative to the best displayed rule there.

\begin{table}[p]
\centering
\scriptsize
\renewcommand{\arraystretch}{0.85}
\setlength{\tabcolsep}{3pt}
\setlength{\aboverulesep}{0pt}
\setlength{\belowrulesep}{0pt}
\setlength{\extrarowheight}{0pt}

\caption{Deficit-regime sweep for the shared-$\Delta$ Gumbel generator.  Entries are Type~II errors at nominal 5\% size; maximum regret is the largest excess over the best rule scored in the sweep, across the six regimes.  Parentheses give binomial Monte Carlo standard errors for error rates and paired-bootstrap standard errors for maximum regret.  Regimes A and B use $\Delta\sim\Unif(.001,.5)$ and $\Unif(.001,.05)$; the remaining columns fix $\Delta$.  Bold marks observed column minima, ties included.  Tail shape is the Dirichlet layer \eqref{eq:gumbel-dirichlet} at concentration $\alpha$; the union tail \eqref{eq:union-tail} mixes it with the width ladder \eqref{eq:gumbel-tailwidth}.  Fixed-deficit equal-tail scores are treated as diagnostics and excluded from the regret benchmark; they need not match the generating deficit or tail family.  Table~\ref{tab:family} gives a clean-benchmark illustration; the computational-materials index in Supplementary Section~\ref{sec:computational-materials} identifies the full diagnostics for this sweep.  The $h^\star_{\mathrm{gum},\Delta_0}$ rows are the least-favorable score of \citet{li2025framework} at its three published tunings.}
\label{tab:regret}
\begin{tabular}{@{}lrrrrrrcr@{}}
\toprule
& \multicolumn{2}{c}{Uniform} & \multicolumn{4}{c}{Point deficit} & & Max \\
\cmidrule(lr){2-3}\cmidrule(lr){4-7}\cmidrule(lr){9-9}
Rule & A & B & $.002$ & $.005$ & $.0075$ & $.02$ & & regret \\
\midrule
\multicolumn{9}{@{}l}{\textit{Horizon $n=100$}}\\
$h_{\mathrm{ars}}$ & \mcse{.0386}{.0027} & \mcse{.3752}{.0068} & \mcse{.8792}{.0046} & \mcse{.8034}{.0056} & \mcse{.7340}{.0062} & \mcse{.4034}{.0069} & & \mcse{.2902}{.0059} \\
$h_{\log}$ & \mcse{.1620}{.0052} & \mcse{.8686}{.0048} & \mcse{.9438}{.0033} & \mcse{.9442}{.0032} & \mcse{.9384}{.0034} & \mcse{.8920}{.0044} & & \mcse{.7730}{.0061} \\
$h_{\mathrm{ind},1/e}$ & \mcse{.2183}{.0058} & \mcse{.9017}{.0042} & \mcse{.9470}{.0032} & \mcse{.9428}{.0033} & \mcse{.9475}{.0032} & \mcse{.9125}{.0040} & & \mcse{.7935}{.0056} \\
$h^\star_{\mathrm{gum},.1}$ & \mcse{.0878}{.0040} & \mcse{.7126}{.0064} & \mcse{.9346}{.0035} & \mcse{.9238}{.0038} & \mcse{.9110}{.0040} & \mcse{.7946}{.0057} & & \mcse{.6756}{.0068} \\
$h^\star_{\mathrm{gum},.01}$ & \mcse{.0396}{.0028} & \mcse{.3696}{.0068} & \mcse{.8986}{.0043} & \mcse{.8244}{.0054} & \mcse{.7600}{.0060} & \mcse{.3966}{.0069} & & \mcse{.3162}{.0071} \\
$h^\star_{\mathrm{gum},.005}$ & \mcse{.0306}{.0024} & \mcse{.3002}{.0065} & \mcse{.8754}{.0047} & \mcse{.7606}{.0060} & \mcse{.6760}{.0066} & \mcse{.2900}{.0064} & & \mcse{.2322}{.0063} \\
\addlinespace[1pt]
Bayes, tokenwise $\Delta$ & \mcse{.0226}{.0021} & \mcse{.2330}{.0060} & \mcse{.8196}{.0054} & \mcse{.6648}{.0067} & \mcse{.5606}{.0070} & \mcse{.1926}{.0056} & & \mcse{.1168}{.0048} \\
Bayes, shared $\Delta$ & \bestmcse{.0160}{.0018} & \mcse{.1722}{.0053} & \mcse{.7744}{.0059} & \mcse{.5664}{.0070} & \mcse{.4458}{.0070} & \bestmcse{.1190}{.0046} & & \mcse{.0036}{.0018} \\
Bayes, shared $\Delta$ $+$ tail shape & \mcse{.0166}{.0018} & \bestmcse{.1716}{.0053} & \bestmcse{.7718}{.0059} & \mcse{.5636}{.0070} & \bestmcse{.4438}{.0070} & \bestmcse{.1190}{.0046} & & \bestmcse{.0008}{.0016} \\
Bayes, shared $\Delta$ $+$ union tail & \mcse{.0164}{.0018} & \mcse{.1724}{.0053} & \mcse{.7726}{.0059} & \bestmcse{.5628}{.0070} & \mcse{.4454}{.0070} & \mcse{.1200}{.0046} & & \mcse{.0016}{.0016} \\
\addlinespace[3pt]
\multicolumn{9}{@{}l}{\textit{Horizon $n=300$}}\\
$h_{\mathrm{ars}}$ & \mcse{.0172}{.0018} & \mcse{.1840}{.0055} & \mcse{.8586}{.0049} & \mcse{.6932}{.0065} & \mcse{.5582}{.0070} & \mcse{.1074}{.0044} & & \mcse{.4858}{.0071} \\
$h_{\log}$ & \mcse{.0932}{.0041} & \mcse{.7626}{.0060} & \mcse{.9456}{.0032} & \mcse{.9382}{.0034} & \mcse{.9190}{.0039} & \mcse{.8310}{.0053} & & \mcse{.8296}{.0049} \\
$h_{\mathrm{ind},1/e}$ & \mcse{.1267}{.0047} & \mcse{.8312}{.0053} & \mcse{.9454}{.0032} & \mcse{.9367}{.0034} & \mcse{.9315}{.0036} & \mcse{.8749}{.0047} & & \mcse{.8735}{.0048} \\
$h^\star_{\mathrm{gum},.1}$ & \mcse{.0506}{.0031} & \mcse{.4882}{.0071} & \mcse{.9352}{.0035} & \mcse{.9056}{.0041} & \mcse{.8730}{.0047} & \mcse{.6100}{.0069} & & \mcse{.7760}{.0061} \\
$h^\star_{\mathrm{gum},.01}$ & \mcse{.0174}{.0018} & \mcse{.1802}{.0054} & \mcse{.8732}{.0047} & \mcse{.7014}{.0065} & \mcse{.5686}{.0070} & \mcse{.0804}{.0038} & & \mcse{.4940}{.0071} \\
$h^\star_{\mathrm{gum},.005}$ & \mcse{.0142}{.0017} & \mcse{.1366}{.0049} & \mcse{.8352}{.0052} & \mcse{.5996}{.0069} & \mcse{.4420}{.0070} & \mcse{.0352}{.0026} & & \mcse{.3922}{.0071} \\
\addlinespace[1pt]
Bayes, tokenwise $\Delta$ & \mcse{.0094}{.0014} & \mcse{.1054}{.0043} & \mcse{.7614}{.0060} & \mcse{.4808}{.0071} & \mcse{.3048}{.0065} & \mcse{.0140}{.0017} & & \mcse{.2734}{.0065} \\
Bayes, shared $\Delta$ & \bestmcse{.0040}{.0009} & \mcse{.0492}{.0031} & \bestmcse{.5120}{.0071} & \mcse{.2078}{.0057} & \bestmcse{.0970}{.0042} & \bestmcse{.0014}{.0005} & & \bestmcse{.0004}{.0013} \\
Bayes, shared $\Delta$ $+$ tail shape & \mcse{.0048}{.0010} & \mcse{.0500}{.0031} & \mcse{.5136}{.0071} & \mcse{.2076}{.0057} & \mcse{.0976}{.0042} & \bestmcse{.0014}{.0005} & & \mcse{.0016}{.0016} \\
Bayes, shared $\Delta$ $+$ union tail & \mcse{.0048}{.0010} & \bestmcse{.0490}{.0031} & \mcse{.5122}{.0071} & \bestmcse{.2074}{.0057} & \mcse{.0986}{.0042} & \mcse{.0026}{.0007} & & \mcse{.0016}{.0012} \\
\addlinespace[3pt]
\multicolumn{9}{@{}l}{\textit{Horizon $n=700$}}\\
$h_{\mathrm{ars}}$ & \mcse{.0080}{.0013} & \mcse{.1022}{.0043} & \mcse{.7970}{.0057} & \mcse{.5034}{.0071} & \mcse{.3042}{.0065} & \mcse{.0052}{.0010} & & \mcse{.5700}{.0073} \\
$h_{\log}$ & \mcse{.0610}{.0034} & \mcse{.5834}{.0070} & \mcse{.9376}{.0034} & \mcse{.9220}{.0038} & \mcse{.8940}{.0044} & \mcse{.7072}{.0064} & & \mcse{.8904}{.0036} \\
$h_{\mathrm{ind},1/e}$ & \mcse{.0803}{.0038} & \mcse{.7033}{.0065} & \mcse{.9432}{.0033} & \mcse{.9259}{.0037} & \mcse{.9139}{.0040} & \mcse{.7930}{.0057} & & \mcse{.9103}{.0040} \\
$h^\star_{\mathrm{gum},.1}$ & \mcse{.0340}{.0026} & \mcse{.3236}{.0066} & \mcse{.9246}{.0037} & \mcse{.8680}{.0048} & \mcse{.7988}{.0057} & \mcse{.3548}{.0068} & & \mcse{.8362}{.0052} \\
$h^\star_{\mathrm{gum},.01}$ & \mcse{.0088}{.0013} & \mcse{.1052}{.0043} & \mcse{.8270}{.0053} & \mcse{.5288}{.0071} & \mcse{.3062}{.0065} & \mcse{.0038}{.0009} & & \mcse{.6000}{.0073} \\
$h^\star_{\mathrm{gum},.005}$ & \mcse{.0068}{.0012} & \mcse{.0766}{.0038} & \mcse{.7582}{.0061} & \mcse{.3696}{.0068} & \mcse{.1660}{.0053} & \mcse{.0002}{.0002} & & \mcse{.5312}{.0073} \\
\addlinespace[1pt]
Bayes, tokenwise $\Delta$ & \mcse{.0040}{.0009} & \mcse{.0544}{.0032} & \mcse{.6670}{.0067} & \mcse{.2506}{.0061} & \mcse{.0876}{.0040} & \bestmcse{.0000}{.0000} & & \mcse{.4400}{.0074} \\
Bayes, shared $\Delta$ & \bestmcse{.0012}{.0005} & \mcse{.0146}{.0017} & \mcse{.2292}{.0059} & \mcse{.0324}{.0025} & \bestmcse{.0036}{.0008} & \bestmcse{.0000}{.0000} & & \mcse{.0022}{.0013} \\
Bayes, shared $\Delta$ $+$ tail shape & \mcse{.0014}{.0005} & \bestmcse{.0142}{.0017} & \bestmcse{.2270}{.0059} & \bestmcse{.0318}{.0025} & \bestmcse{.0036}{.0008} & \bestmcse{.0000}{.0000} & & \bestmcse{.0002}{.0004} \\
Bayes, shared $\Delta$ $+$ union tail & \bestmcse{.0012}{.0005} & \mcse{.0158}{.0018} & \mcse{.2292}{.0059} & \mcse{.0378}{.0027} & \mcse{.0064}{.0011} & \bestmcse{.0000}{.0000} & & \mcse{.0060}{.0012} \\
\bottomrule
\end{tabular}
\end{table}

When the generating value is $\Delta=.005$, $h^{\mathrm{sp}}_{.005}$ has the lowest observed error at $n=300$, $.2066$ versus $.2078$ for the shared mixture; it does not attain the minimum at the three unmatched point deficits.  Across $n=100,300,700$, the maximum regrets are $.0036,.0004,.0022$ for the equal-tail mixture, $.0008,.0016,.0002$ for the tail-shape mixture, and $.0016,.0016,.0060$ for the union-tail mixture.  The tail-shape mixture attains the smallest maximum regret of the ten displayed rules at $n=100$ and $n=700$, and the equal-tail mixture at $n=300$; the union-tail mixture is smallest at none of the three.  The diagnostic is excluded from the regret benchmark, so its maximum regrets $.0258,.0028,.0016$ do not define the regime-specific minima.  At $n=100$, the three published tunings of $h^\star_{\mathrm{gum}}$ have maximum regrets $.2322$, $.3162$ and $.6756$ for $\Delta_0=.005$, $.01$ and $.1$, compared with $.2902$ for $h_{\mathrm{ars}}$.  Thus, variation across the three tuning constants exceeds the difference between the best of these tunings and $h_{\mathrm{ars}}$ in this comparison.  Because these regimes are equal tailed by construction, the tail-width block of \eqref{eq:union-tail} does not represent variation in the generating law; averaging over that block reduces prior mass on the matched equal-tail component.  In the temperature-matched experiment of Section~\ref{sec:temperature-matched}, by contrast, the estimated tails are sparse and the tail-width block accounts for the improvement within the Bayesian family.  No tested fixed-$\Delta_0$ rule has maximum regret no greater than the shared spike mixture at all three horizons, and none does so for either enlarged mixture.  The comparison does not cover the continuum of fixed deficits.

Because regret is measured against the best displayed rule, adding the tail-shape rule changes the reference minimum in cells where it has the lowest observed error.  On the present grid this change affects the shared spike rule only at $n=700$, where its maximum regret is $.0022$ with the tail-shape rule included and exactly zero without it.  When the tail-shape rule is excluded, the shared equal-tail rule attains the lowest observed error, including ties, in all six displayed regimes at $n=700$, so its entire $.0022$ regret is the tail-shape rule's margin over it.  At $n=100$ and $n=300$, the tail-shape rule does not attain the minimum in the cell that determines the maximum, so the two comparator sets agree at $.0036$ and $.0004$.

These regret rankings are exploratory.  They combine estimated cutoffs from 10,000 calibration paths with errors from 5,000 evaluation paths, take maxima over six regimes, and select the best observed rule in each cell.  Differences of $.0002$--$.0010$ correspond to one to five evaluation documents.  Because the rules are paired and the statistic includes cellwise selection and maximization, uncertainty cannot be assessed from unpaired binomial standard errors; no simultaneous interval for maximum regret is reported.  The table reports seed-specific regret estimates and does not establish an ordering of population regrets among close entries.  The companion deficit-sweep summaries include three further generating regimes at higher deficits in which no rule misses a document at $n=100$, $300$ or $700$; see Section~\ref{sec:computational-materials}.

The least-favorable minimax score of \citet{li2025framework} has among the largest empirical regrets on this grid because it targets the $\Delta$-regular class rather than the spike generating family.  Its realized Type~I errors, $.0538$, $.0488$ and $.0510$, are near the nominal level, so the difference occurs in power rather than size.  At the point regime $\Delta=.005$, both likelihood ratios use the correct deficit; their difference isolates the assumed NTP profile within this experiment.  The least-favorable score has Type~II errors $.7606$, $.5996$ and $.3696$, against $.5682$, $.2066$ and $.0322$ for $h^{\mathrm{sp}}_{.005}$.  This comparison does not show that profile mismatch dominates arbitrary deficit misspecification; severely mismatched point deficits produce larger differences in some cells.  Under the common generator of Table~\ref{tab:shared}, the profile-mismatch increments at $\Delta_0=.01$ are $.0222$, $.0122$ and $.0084$, while changing $\Delta_0$ from $.005$ to $.01$ within the least-favorable family produces increments $.0092$, $.0054$ and $.0036$.  The former is about $2.3$ times the latter at each horizon.  The tokenwise mixture is misspecified under the persistent document-level deficit generator and records regret between $.1168$ and $.4400$.

\FloatBarrier
\subsubsection{Tail shape: regret across tail laws}
\label{sec:tails}

The tail-law experiment holds the deficit distribution fixed and varies the residual NTP distribution.  Although $\Delta$ constrains the largest NTP coordinate, it does not determine the remaining $K=M-1$ probabilities.  Equal-tail rules fix these probabilities; the Dirichlet layer models them through a symmetric concentration family.  The finite sweep includes generating shapes within the Dirichlet family and one outside it.

The design follows Section~\ref{sec:regimes}: each document has one $\Delta\sim\Unif(.001,.5)$, each rule is calibrated once under the exact Gumbel null, and 5,000 documents are generated per cell.  Because the null is $\Unif(0,1)$ for every tail law, the same calibration sample applies to all regimes.  No rule is retuned.  Only the generating tail law changes:
\begin{enumerate}
  \setlength{\itemsep}{0pt}
  \setlength{\parsep}{0pt}
  \item[T1] equal tail, $\alpha=\infty$ --- the equal-tail specification stated by \citet{li2025framework} and the generator used in the other Gumbel experiments in this study;
  \item[T2] $\operatorname{Dirichlet}(10)$, an atom of the prior grid;
  \item[T3] $\operatorname{Dirichlet}(3)$, in family, not an atom, and variance-matched to T6 as explained in Section~\ref{sec:dirichlet-details};
  \item[T4] $\operatorname{Dirichlet}(0.5)$, in family, not an atom, and further from every atom than T3;
  \item[T5] $\operatorname{Dirichlet}(0.1)$, a strongly sparse tail;
  \item[T6] the normalized-uniform tail used by the accompanying simulation snapshot \citep{li2024watermarkframework}: draw $X_{t,k}\stackrel{\mathrm{iid}}{\sim}\Unif(0,1)$ and set $q_{t,k}=X_{t,k}/\sum_{\ell=1}^{K}X_{t,\ell}$; this law is outside the Dirichlet family at every $\alpha$.
\end{enumerate}
The competitors are the layer with a single assumed tail shape, $\alpha_0\in\{0.1,1,10,100,1000\}$, one for every finite atom of the prior; the layer at $\alpha=\infty$, which is the \emph{spike} shared Gumbel Bayes rule of Table~\ref{tab:shared}; the union tail of \eqref{eq:union-tail}; and the six reference scores $h_{\mathrm{ars}}$, $h_{\log}$, $h_{\mathrm{ind},1/e}$ and $h^\star_{\mathrm{gum},\Delta_0}$ at the three tunings of \citet{li2025framework}.  The diagnostic $h^{\mathrm{sp}}_{.01}$ of Supplementary Section~\ref{sec:family} is also scored but is excluded from the regret benchmark because it fixes one deficit within the generating equal-tail family rather than averaging over a prespecified deficit distribution.  The mixture rule uses the prespecified product prior $\Unif(.001,.5)\otimes\Unif\{0.1,1,10,100,1000,\infty\}$ on a 576-component grid.  This is the clean shared-$\Delta$ Bayes rule with the Dirichlet tail prior used in the other experiments.  This sweep uses separate calibration and evaluation samples, so the $\alpha=\infty$ column need not reproduce Table~\ref{tab:shared} exactly.

At $\alpha=\infty$, the layer agrees with the equal-tail detector to the reported numerical precision.  Supplementary Section~\ref{sec:numerical-checks} gives the normalization, interpolation, pathwise-containment, and implementation-agreement checks.  Realized Type~I error over the 45 rule-by-horizon cells lies in $[.0454,.0608]$.

\begin{table}[p]
\centering
\scriptsize
\renewcommand{\arraystretch}{0.85}
\setlength{\tabcolsep}{3pt}
\setlength{\aboverulesep}{0pt}
\setlength{\belowrulesep}{0pt}
\setlength{\extrarowheight}{0pt}

\caption{Exploratory tail-shape sweep for the shared-$\Delta$ Gumbel generator.  Entries are Type~II errors at nominal 5\% size; maximum regret is the largest excess over the best rule scored in the sweep, across the six tail laws.  Parentheses give binomial Monte Carlo standard errors for error rates and paired-bootstrap standard errors for maximum regret.  Bold marks observed column minima, ties included.  Tail shape is the Dirichlet layer \eqref{eq:gumbel-dirichlet} at concentration $\alpha$; the union tail \eqref{eq:union-tail} mixes it with the width ladder \eqref{eq:gumbel-tailwidth}.  Fixed-deficit equal-tail scores are treated as diagnostics and excluded from the regret benchmark; they need not match the generating deficit or tail family.  Table~\ref{tab:family} gives a clean-benchmark illustration; the computational-materials index in Supplementary Section~\ref{sec:computational-materials} identifies the full diagnostics for this sweep.  The $h^\star_{\mathrm{gum},\Delta_0}$ rows are the least-favorable score of \citet{li2025framework} at its three published tunings.}
\label{tab:tails}
\begin{tabular}{@{}lrrrrrrcr@{}}
\toprule
& \multicolumn{6}{c}{Type~II error by generating tail law} & & Max \\
\cmidrule(lr){2-7}\cmidrule(lr){9-9}
Rule & T1 & T2 & T3 & T4 & T5 & T6 & & regret \\
\midrule
\multicolumn{9}{@{}l}{\textit{Horizon $n=100$}}\\
Tail shape, $\alpha=\infty$ (Table~\ref{tab:shared} rule) & \bestmcse{.0162}{.0018} & \bestmcse{.0160}{.0018} & \mcse{.0184}{.0019} & \mcse{.0192}{.0019} & \mcse{.0210}{.0020} & \mcse{.0174}{.0018} & & \mcse{.0046}{.0009} \\
Tail shape, $\alpha_0=1000$ & \bestmcse{.0162}{.0018} & \bestmcse{.0160}{.0018} & \mcse{.0184}{.0019} & \mcse{.0192}{.0019} & \mcse{.0210}{.0020} & \mcse{.0174}{.0018} & & \mcse{.0046}{.0009} \\
Tail shape, $\alpha_0=100$ & \bestmcse{.0162}{.0018} & \bestmcse{.0160}{.0018} & \mcse{.0184}{.0019} & \mcse{.0192}{.0019} & \mcse{.0208}{.0020} & \mcse{.0174}{.0018} & & \mcse{.0044}{.0009} \\
Tail shape, $\alpha_0=10$ & \mcse{.0164}{.0018} & \bestmcse{.0160}{.0018} & \mcse{.0184}{.0019} & \mcse{.0188}{.0019} & \mcse{.0202}{.0020} & \mcse{.0176}{.0019} & & \mcse{.0038}{.0009} \\
Tail shape, $\alpha_0=1$ & \mcse{.0168}{.0018} & \bestmcse{.0160}{.0018} & \bestmcse{.0178}{.0019} & \mcse{.0186}{.0019} & \mcse{.0172}{.0018} & \mcse{.0180}{.0019} & & \bestmcse{.0014}{.0005} \\
Tail shape, $\alpha_0=0.1$ & \mcse{.0170}{.0018} & \mcse{.0162}{.0018} & \mcse{.0180}{.0019} & \mcse{.0186}{.0019} & \mcse{.0166}{.0018} & \mcse{.0180}{.0019} & & \bestmcse{.0014}{.0005} \\
Bayes, shared $\Delta$ $+$ tail shape & \mcse{.0168}{.0018} & \mcse{.0162}{.0018} & \mcse{.0180}{.0019} & \bestmcse{.0182}{.0019} & \bestmcse{.0164}{.0018} & \mcse{.0180}{.0019} & & \bestmcse{.0014}{.0005} \\
Bayes, shared $\Delta$ $+$ union tail & \mcse{.0172}{.0018} & \mcse{.0164}{.0018} & \mcse{.0192}{.0019} & \mcse{.0204}{.0020} & \mcse{.0200}{.0020} & \bestmcse{.0166}{.0018} & & \mcse{.0036}{.0008} \\
\addlinespace[1pt]
$h_{\mathrm{ars}}$ & \mcse{.0386}{.0027} & \mcse{.0418}{.0028} & \mcse{.0400}{.0028} & \mcse{.0442}{.0029} & \mcse{.0388}{.0027} & \mcse{.0382}{.0027} & & \mcse{.0260}{.0018} \\
$h_{\log}$ & \mcse{.1664}{.0053} & \mcse{.1678}{.0053} & \mcse{.1684}{.0053} & \mcse{.1664}{.0053} & \mcse{.1662}{.0053} & \mcse{.1668}{.0053} & & \mcse{.1518}{.0034} \\
$h_{\mathrm{ind},1/e}$ & \mcse{.2170}{.0058} & \mcse{.2234}{.0059} & \mcse{.2185}{.0058} & \mcse{.2193}{.0059} & \mcse{.2185}{.0058} & \mcse{.2248}{.0059} & & \mcse{.2082}{.0042} \\
$h^\star_{\mathrm{gum},.1}$ & \mcse{.0924}{.0041} & \mcse{.0952}{.0042} & \mcse{.0960}{.0042} & \mcse{.0896}{.0040} & \mcse{.0870}{.0040} & \mcse{.0892}{.0040} & & \mcse{.0792}{.0029} \\
$h^\star_{\mathrm{gum},.01}$ & \mcse{.0368}{.0027} & \mcse{.0416}{.0028} & \mcse{.0360}{.0026} & \mcse{.0418}{.0028} & \mcse{.0340}{.0026} & \mcse{.0344}{.0026} & & \mcse{.0256}{.0020} \\
$h^\star_{\mathrm{gum},.005}$ & \mcse{.0298}{.0024} & \mcse{.0328}{.0025} & \mcse{.0296}{.0024} & \mcse{.0332}{.0025} & \mcse{.0260}{.0023} & \mcse{.0272}{.0023} & & \mcse{.0168}{.0016} \\
\addlinespace[1pt]
\multicolumn{9}{@{}l}{\textit{Horizon $n=300$}}\\
Tail shape, $\alpha=\infty$ (Table~\ref{tab:shared} rule) & \bestmcse{.0040}{.0009} & \mcse{.0060}{.0011} & \mcse{.0046}{.0010} & \mcse{.0062}{.0011} & \mcse{.0066}{.0011} & \mcse{.0058}{.0011} & & \mcse{.0020}{.0007} \\
Tail shape, $\alpha_0=1000$ & \bestmcse{.0040}{.0009} & \mcse{.0060}{.0011} & \mcse{.0046}{.0010} & \mcse{.0062}{.0011} & \mcse{.0066}{.0011} & \mcse{.0058}{.0011} & & \mcse{.0020}{.0007} \\
Tail shape, $\alpha_0=100$ & \bestmcse{.0040}{.0009} & \mcse{.0060}{.0011} & \mcse{.0046}{.0010} & \mcse{.0060}{.0011} & \mcse{.0066}{.0011} & \mcse{.0058}{.0011} & & \mcse{.0020}{.0007} \\
Tail shape, $\alpha_0=10$ & \mcse{.0042}{.0009} & \mcse{.0060}{.0011} & \bestmcse{.0044}{.0009} & \mcse{.0060}{.0011} & \mcse{.0062}{.0011} & \mcse{.0058}{.0011} & & \mcse{.0016}{.0006} \\
Tail shape, $\alpha_0=1$ & \mcse{.0042}{.0009} & \bestmcse{.0058}{.0011} & \mcse{.0046}{.0010} & \mcse{.0056}{.0011} & \bestmcse{.0046}{.0010} & \mcse{.0058}{.0011} & & \bestmcse{.0002}{.0002} \\
Tail shape, $\alpha_0=0.1$ & \mcse{.0042}{.0009} & \bestmcse{.0058}{.0011} & \mcse{.0046}{.0010} & \bestmcse{.0054}{.0010} & \mcse{.0048}{.0010} & \mcse{.0058}{.0011} & & \bestmcse{.0002}{.0002} \\
Bayes, shared $\Delta$ $+$ tail shape & \mcse{.0042}{.0009} & \bestmcse{.0058}{.0011} & \mcse{.0046}{.0010} & \bestmcse{.0054}{.0010} & \bestmcse{.0046}{.0010} & \mcse{.0058}{.0011} & & \bestmcse{.0002}{.0002} \\
Bayes, shared $\Delta$ $+$ union tail & \mcse{.0044}{.0009} & \bestmcse{.0058}{.0011} & \mcse{.0052}{.0010} & \mcse{.0060}{.0011} & \mcse{.0060}{.0011} & \bestmcse{.0056}{.0011} & & \mcse{.0014}{.0005} \\
\addlinespace[1pt]
$h_{\mathrm{ars}}$ & \mcse{.0184}{.0019} & \mcse{.0196}{.0020} & \mcse{.0190}{.0019} & \mcse{.0200}{.0020} & \mcse{.0224}{.0021} & \mcse{.0186}{.0019} & & \mcse{.0178}{.0016} \\
$h_{\log}$ & \mcse{.0966}{.0042} & \mcse{.1022}{.0043} & \mcse{.1012}{.0043} & \mcse{.0966}{.0042} & \mcse{.0934}{.0041} & \mcse{.0986}{.0042} & & \mcse{.0968}{.0030} \\
$h_{\mathrm{ind},1/e}$ & \mcse{.1316}{.0048} & \mcse{.1322}{.0048} & \mcse{.1348}{.0048} & \mcse{.1282}{.0047} & \mcse{.1204}{.0046} & \mcse{.1282}{.0047} & & \mcse{.1304}{.0036} \\
$h^\star_{\mathrm{gum},.1}$ & \mcse{.0506}{.0031} & \mcse{.0540}{.0032} & \mcse{.0524}{.0032} & \mcse{.0526}{.0032} & \mcse{.0496}{.0031} & \mcse{.0544}{.0032} & & \mcse{.0488}{.0021} \\
$h^\star_{\mathrm{gum},.01}$ & \mcse{.0206}{.0020} & \mcse{.0202}{.0020} & \mcse{.0170}{.0018} & \mcse{.0184}{.0019} & \mcse{.0162}{.0018} & \mcse{.0176}{.0019} & & \mcse{.0166}{.0016} \\
$h^\star_{\mathrm{gum},.005}$ & \mcse{.0146}{.0017} & \mcse{.0148}{.0017} & \mcse{.0128}{.0016} & \mcse{.0142}{.0017} & \mcse{.0120}{.0015} & \mcse{.0122}{.0016} & & \mcse{.0106}{.0011} \\
\addlinespace[1pt]
\multicolumn{9}{@{}l}{\textit{Horizon $n=700$}}\\
Tail shape, $\alpha=\infty$ (Table~\ref{tab:shared} rule) & \bestmcse{.0008}{.0004} & \mcse{.0026}{.0007} & \mcse{.0016}{.0006} & \mcse{.0020}{.0006} & \mcse{.0022}{.0007} & \bestmcse{.0018}{.0006} & & \mcse{.0004}{.0002} \\
Tail shape, $\alpha_0=1000$ & \bestmcse{.0008}{.0004} & \mcse{.0026}{.0007} & \mcse{.0016}{.0006} & \mcse{.0020}{.0006} & \mcse{.0022}{.0007} & \bestmcse{.0018}{.0006} & & \mcse{.0004}{.0002} \\
Tail shape, $\alpha_0=100$ & \bestmcse{.0008}{.0004} & \mcse{.0026}{.0007} & \mcse{.0016}{.0006} & \mcse{.0020}{.0006} & \mcse{.0022}{.0007} & \bestmcse{.0018}{.0006} & & \mcse{.0004}{.0002} \\
Tail shape, $\alpha_0=10$ & \bestmcse{.0008}{.0004} & \mcse{.0026}{.0007} & \bestmcse{.0014}{.0005} & \bestmcse{.0018}{.0006} & \mcse{.0020}{.0006} & \bestmcse{.0018}{.0006} & & \mcse{.0002}{.0002} \\
Tail shape, $\alpha_0=1$ & \bestmcse{.0008}{.0004} & \bestmcse{.0024}{.0007} & \bestmcse{.0014}{.0005} & \bestmcse{.0018}{.0006} & \bestmcse{.0018}{.0006} & \bestmcse{.0018}{.0006} & & \bestmcse{.0000}{.0002} \\
Tail shape, $\alpha_0=0.1$ & \bestmcse{.0008}{.0004} & \bestmcse{.0024}{.0007} & \bestmcse{.0014}{.0005} & \bestmcse{.0018}{.0006} & \bestmcse{.0018}{.0006} & \bestmcse{.0018}{.0006} & & \bestmcse{.0000}{.0002} \\
Bayes, shared $\Delta$ $+$ tail shape & \bestmcse{.0008}{.0004} & \bestmcse{.0024}{.0007} & \bestmcse{.0014}{.0005} & \bestmcse{.0018}{.0006} & \bestmcse{.0018}{.0006} & \bestmcse{.0018}{.0006} & & \bestmcse{.0000}{.0002} \\
Bayes, shared $\Delta$ $+$ union tail & \bestmcse{.0008}{.0004} & \bestmcse{.0024}{.0007} & \mcse{.0016}{.0006} & \bestmcse{.0018}{.0006} & \mcse{.0020}{.0006} & \bestmcse{.0018}{.0006} & & \mcse{.0002}{.0002} \\
\addlinespace[1pt]
$h_{\mathrm{ars}}$ & \mcse{.0108}{.0015} & \mcse{.0120}{.0015} & \mcse{.0118}{.0015} & \mcse{.0102}{.0014} & \mcse{.0122}{.0016} & \mcse{.0118}{.0015} & & \mcse{.0104}{.0010} \\
$h_{\log}$ & \mcse{.0618}{.0034} & \mcse{.0666}{.0035} & \mcse{.0672}{.0035} & \mcse{.0640}{.0035} & \mcse{.0594}{.0033} & \mcse{.0620}{.0034} & & \mcse{.0658}{.0026} \\
$h_{\mathrm{ind},1/e}$ & \mcse{.0777}{.0038} & \mcse{.0859}{.0040} & \mcse{.0850}{.0039} & \mcse{.0854}{.0040} & \mcse{.0795}{.0038} & \mcse{.0844}{.0039} & & \mcse{.0836}{.0027} \\
$h^\star_{\mathrm{gum},.1}$ & \mcse{.0322}{.0025} & \mcse{.0324}{.0025} & \mcse{.0320}{.0025} & \mcse{.0336}{.0025} & \mcse{.0322}{.0025} & \mcse{.0342}{.0026} & & \mcse{.0324}{.0017} \\
$h^\star_{\mathrm{gum},.01}$ & \mcse{.0110}{.0015} & \mcse{.0122}{.0016} & \mcse{.0102}{.0014} & \mcse{.0090}{.0013} & \mcse{.0096}{.0014} & \mcse{.0106}{.0014} & & \mcse{.0102}{.0011} \\
$h^\star_{\mathrm{gum},.005}$ & \mcse{.0066}{.0011} & \mcse{.0098}{.0014} & \mcse{.0074}{.0012} & \mcse{.0060}{.0011} & \mcse{.0074}{.0012} & \mcse{.0088}{.0013} & & \mcse{.0074}{.0009} \\
\bottomrule
\end{tabular}
\end{table}

The near agreement between $h^{\mathrm{sp}}_{.01}$ and the shared spike Bayes rule under the equal-tail generator does not persist under T5.  Under this sparse law, their $n=100$ errors are $.0434$ and $.0210$ and their empirical maximum regrets are $.0270$ and $.0046$; the Dirichlet tail-shape mixture has error $.0164$ and maximum regret $.0014$.  The tail exponent $1/(\Delta q)-1$ depends directly on the tail coordinate, so the diagnostic is sensitive to the generating tail law.

The $\alpha_0=1000$ row is identical to the $\alpha=\infty$ row in all 18 displayed cells; the $\alpha_0=100$ row differs in two cells, each by one document.  Because the experiment does not remove the concentrated atoms from the mixture prior, it does not identify their individual contribution.  The largest observed separation from the equal-tail rule occurs at the sparse end of the grid: under T5 at $n=100$, the layer at $\alpha_0=0.1$ has error $.0166$ and the mixture $.0164$, against $.0210$ for $\alpha=\infty$.

Among the fourteen displayed rules, the $\alpha$ mixture attains the lowest observed maximum regret at all three horizons, tying the $\alpha_0=1$ and $\alpha_0=0.1$ point rules at each.  T3 and T4 are the two in-family laws that are not atoms of the prior.  At T4, the mixture error is .0182, compared with .0186 for the lowest-error displayed point rule; at T3, the $\alpha_0=1$ point rule has error .0178, compared with .0180 for the mixture.  At T6, which is outside the family, the mixture has three more misses than the equal-tail rule.  These one-to-three-document differences are not statistically resolved.  The maximum-regret statistic selects extrema over six calibrated cells, and no simultaneous interval is supplied.

Every rule has its own null cutoff, so calibrated errors are compared directly.  All five finite atoms of the prior and the equal-tail limit are included as point-rule comparators.  The comparison remains restricted to this grid and does not optimize over $\alpha_0$.

At $n=100$, the best-performing published least-favorable tuning in each tail-law cell has error between $.0260$ and $.0332$, compared with $.0160$ to $.0210$ for the equal-tail layer.  Every published least-favorable tuning has higher observed Type~II error than every layer rule in each tested cell.  All tested laws distribute residual mass over multiple coordinates, whereas the least-favorable profile concentrates it on one.  None of the tested laws approaches the one-coordinate least-favorable profile.

For T6, the size-biased tail-coordinate variance is $2/9$, compared with $4/9$ under $\operatorname{Dirichlet}(3)$.  At $n=700$, the T5 error range across layer rules is $.0004$, or two documents, and every displayed layer rule has T6 error $.0018$.

The tail-shape sweep has two scope limitations.  By $n=300$ the mixture and the two sparse atoms $\alpha_0\in\{0.1,1\}$ are within $.0002$ of the empirical minimum and every displayed layer rule within $.0020$, and the equal-tail rule's maximum regret decreases from $.0046$ to $.0020$ to $.0004$ over the three horizons.  The sweep is also Gumbel-only and fixed-horizon.  The companion tail-sweep summaries include full tables, per-regime tail moments, the prespecified-prior fingerprint, and normalization diagnostics; see Section~\ref{sec:computational-materials}.

\FloatBarrier
\subsubsection{Tokenwise-deficit equal-tail experiment}
\label{sec:tokenwise-supplement}

Figure~\ref{fig:tokenwise} follows the written specification $\Delta_t\stackrel{\mathrm{iid}}{\sim}\Unif(0.001,0.5)$ of \citet{li2025framework}.  At $n=100,300,700$, observed Type~II error is lower under the tokenwise-deficit design for every rule common to the two experiments.  For Gumbel, the tokenwise Bayes mixture, its Dirichlet-layer counterpart, its union-tail counterpart, and several reference scores have no observed misses by $n=100$ among 5,000 alternatives; the exact one-sided 95\% upper bound is $0.000599$.  The three tokenwise mixtures reject the same documents at all three recorded horizons.  At $n=100$, $h_{\log}$ has 4 misses and the indicator has Type~II error $0.0198$, which is an expectation over the randomized boundary rather than an integer miss count.  The comparison therefore depends on whether the generating deficit is shared or redrawn tokenwise.

\begin{figure}[H]
\centering
\includegraphics[width=\textwidth]{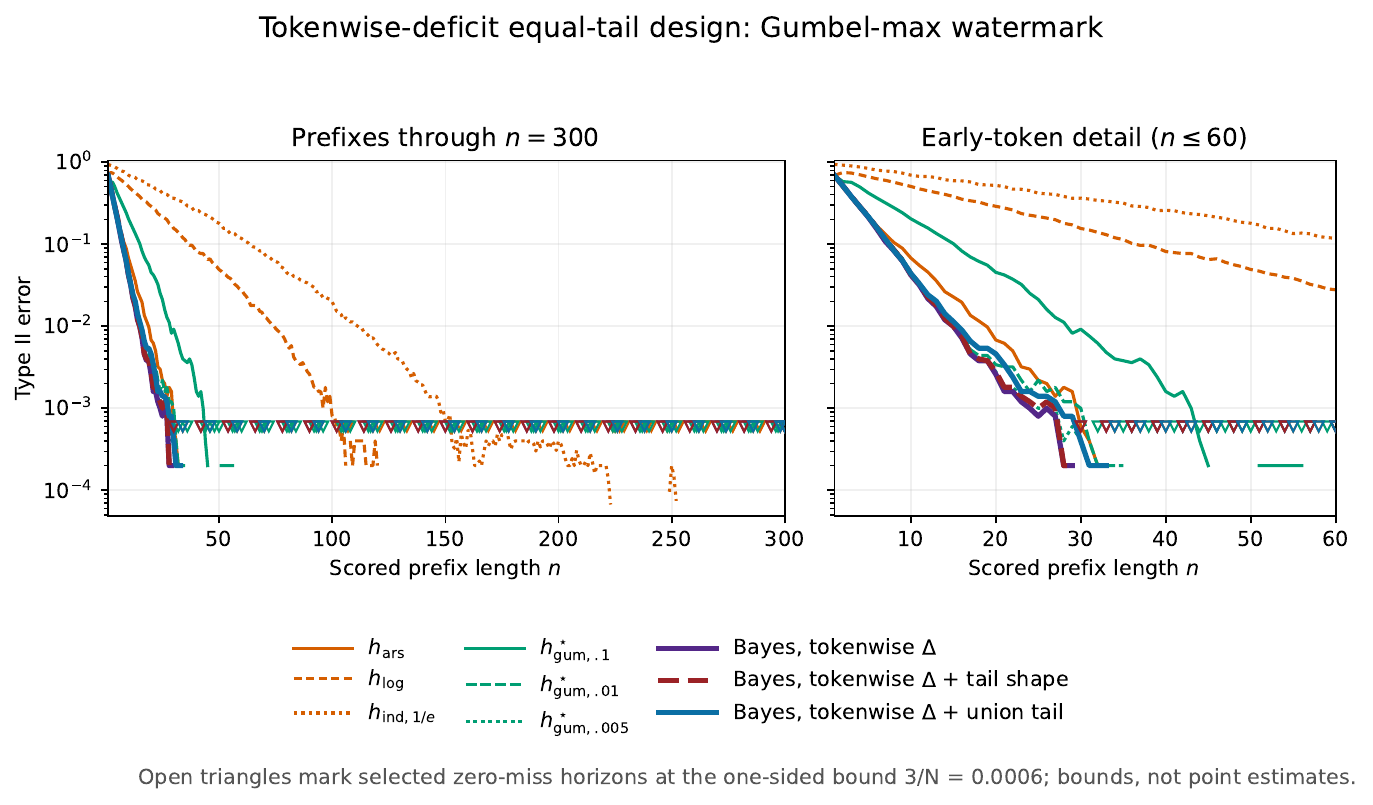}
\caption{Fixed-horizon Type~II error under the tokenwise-deficit equal-tail specification of \citet{li2025framework}: $\Delta_t$ is redrawn independently at every token.  Each rule uses its independently calibrated 5\% null cutoff.  The three Bayesian curves use the tokenwise hierarchy, with equal-tail, Dirichlet tail-shape, and union-tail ($w=.5$) priors; method labels and family colours follow Figure~\ref{fig:shared}.  The left panel shows the first 300 tokens; the right enlarges the first 60.  Curves retain all positive estimates, including rates below $3/N$; open triangles mark selected zero-miss horizons at the one-sided $3/N=0.0006$ upper bound, not point estimates, and are not connected to the curves.}
\label{fig:tokenwise}
\end{figure}

\FloatBarrier
\subsection{Additional temperature-matched analyses}
\label{sec:matched-details}

\subsubsection{Generation and calibration details}
\label{sec:matched-design-details}

A temperature-matched design regenerates both arms at a common temperature.  The initial prompts are the archived benchmark prompt tensors of Li et al., and the key, sampler, and pivot functions are imported unchanged from \citet{li2024watermarkframework}; the unwatermarked sampling law is changed to use the same temperature as the watermarked generator.  Temperature is varied over $\{.1,.2,.3,.4,.5,.6,.7,1\}$ for both models, with 200 scored tokens at temperature $.3$ and below and 100 above, where detection saturates at shorter horizons.  The informative window $[.2,.5]$ uses 2500 documents per arm and the remaining temperatures 500.  Because their benchmark archive supplies only 500 prompts, the prompt set is extended by streaming the same C4 subset under the same filter used by \citet{li2024watermarkframework}: records are read in file order from a slice pinned by digest, tokenized with truncation at 2028 tokens, dropped unless they carry both a prompt and a full scored continuation, and reduced to the 50-token window immediately preceding that continuation.  All 500 reconstructed prompts are identical token for token to their archived tensor, which confirms the construction, and the 2000 prompts at zero-based indices 500--2499 extend the same population.  Null pivots are obtained by replaying the reference implementation's key and hash functions over the regenerated unwatermarked tokens, as described in Supplementary Section~\ref{sec:released-calibration}.

Split-half calibration produces a variable realized size.  With 500 documents at temperature $.1$, the induced Type~II standard deviation across resplits ranges from $.014$ to $.040$ over the rules, exceeding the differences among the leading rules; rankings from a single split are therefore unstable.  This quantity is recorded per cell as \texttt{type2\_split\_sd}.
The main analysis therefore estimates each prefix-specific cutoff from the
full unwatermarked arm and assesses Type~I error by five-fold cross-fitting,
as described in Section~\ref{sec:temperature-matched}.

\subsubsection{Bootstrap reporting and sample-size comparisons}
\label{sec:matched-inference-details}

The reported $p$-values use normal tails computed from the joint prompt-cluster
bootstrap standard errors; empirical bootstrap-tail probabilities are stored
separately.  The floor of $1/2000$ imposed by $2{,}000$ replicates applies to those percentile probabilities, not to the reported normal-tail values: among the thirty-five Holm-significant contrasts, thirty equal this floor, three equal $.001$, and two equal $.002$.

The stability of the observed ordering depends on the sample size.  With 500 documents, the size of the original benchmark prompt set, the estimated union-tail AUC exceeds that of the shared-$\Delta$ rule in all eight cells; the estimated $h_{\mathrm{ars}}$ AUC exceeds that of $h^\star_{\mathrm{gum},.1}$ in only five of eight cells.  These 500-document comparisons are descriptive: the stored cluster bootstrap covers only the 2500-document analysis, so no adjusted $p$-value is reported and inference is limited to the signs of the eight observed contrasts.  The 500-document sample therefore does not establish an ordering of the two leading reference scores.  The tail-width model improves on the equal-tail model that it generalizes at both sample sizes, but does not generally exceed the best reference-score AUC at either size.

\subsubsection{Prefix curves and repetition diagnostics}
\label{sec:temperature-prefix-details}

The repetition-only statistic that attains Type~II error at worst $.012$ on the archived watermarked/unwatermarked benchmark pair has errors $.948$ on OPT-1.3B and $.932$ on Sheared-LLaMA-2.7B in the matched design at temperature $.1$, close to the null-discrimination benchmark of $.95$.  At temperature one, its error is $.69$ on both models.  Because keyed sampling is deterministic conditional on the context, the watermarked continuations retain greater repetition than the unwatermarked continuations; this residual is a property of the scheme rather than the temperature mismatch.

Prefix power at low temperature is nonmonotone in the token count.  For a heuristic characterization, suppose distinct repeated-pivot groups contribute independent scores with equal null variance, multiplicities $k_i$, and a common per-position mean shift.  A token sum over $n$ positions then has null standard deviation proportional to $(\sum_i k_i^2)^{1/2}$ and mean shift proportional to $n$, giving standardized separation proportional to $n/(\sum_i k_i^2)^{1/2}$.  This quantity is less than $\sqrt n$ when a pivot or score repeats; a repeated context alone need not repeat the pivot, as Supplementary Section~\ref{sec:released-calibration} explains.  The calculation quantifies a loss relative to independent-score scaling, but does not itself imply nonmonotone prefix power and is not a variance formula for the shared-mixture Bayes factor.  Dividing both a token-sum statistic and its calibrated cutoff by $n$ changes no decision.  The distinct fraction at 100 tokens decreases from $.79$ and $.81$ at temperature $.5$ to $.67$ and $.70$ at temperature $.2$.  Over a common 100-token horizon on OPT-1.3B, the union-tail Type~II error increases from its within-curve minimum by $.016$ and $.014$ at temperatures $.2$ and $.3$, compared with $.001$ and $.000$ at $.4$ and $.5$; $h_{\mathrm{ars}}$ has the same pattern.  The unequal maximum horizons in Figures~\ref{fig:temperature-matched} and~\ref{fig:temperature-matched-sheared} therefore affect visual comparisons of the temperature-specific endpoints.  They account for only a small part of the overall temperature association: at a common 100-token horizon, the union-tail Type~II error decreases from $.6176$ at temperature $.2$ to $.0088$ at temperature $.5$ on OPT-1.3B, and from $.6268$ to $.0168$ on Sheared-LLaMA-2.7B.

\begin{figure}[htbp]
\centering
\includegraphics[width=\textwidth]{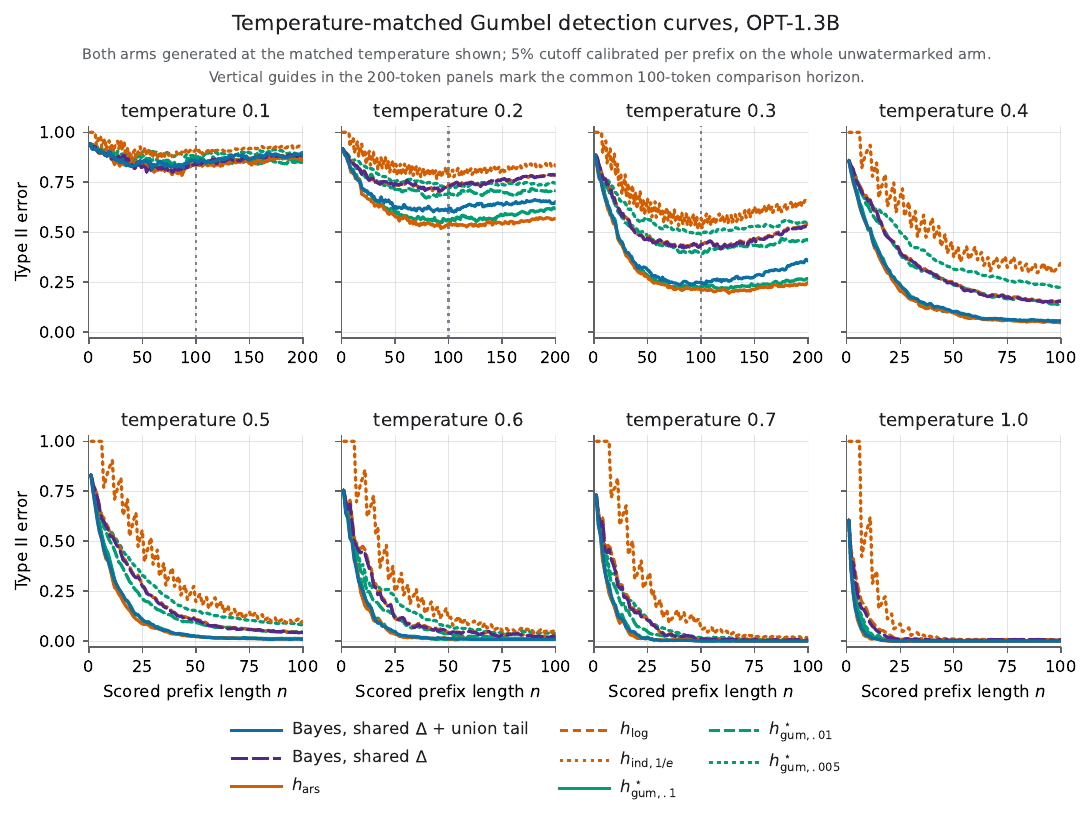}
\caption{Temperature-matched Type~II error for OPT-1.3B under the empirical-null calibration of Section~\ref{sec:temperature-matched}; Figure~\ref{fig:temperature-matched-sheared} uses the same display for Sheared-LLaMA-2.7B.  At each prefix, method-specific 5\% cutoffs are calibrated on the full unwatermarked arm.  Each arm has 2500 documents at temperatures $.2$--$.5$ and 500 elsewhere.  Eight of the twenty scored rules are drawn: shared-$\Delta$ Bayes with equal tails or the union-tail prior ($w=.5$), and six reference scores.  Method labels and colours follow Figure~\ref{fig:shared}, with dashes distinguishing members within each family.  The first three panels extend to 200 tokens; vertical guides mark the common 100-token comparison horizon.  Other panels stop at 100.  All scored rules are retained in the companion prefix-curve table indexed in Section~\ref{sec:computational-materials}.}
\label{fig:temperature-matched}
\end{figure}

\begin{figure}[htbp]
\centering
\includegraphics[width=\textwidth]{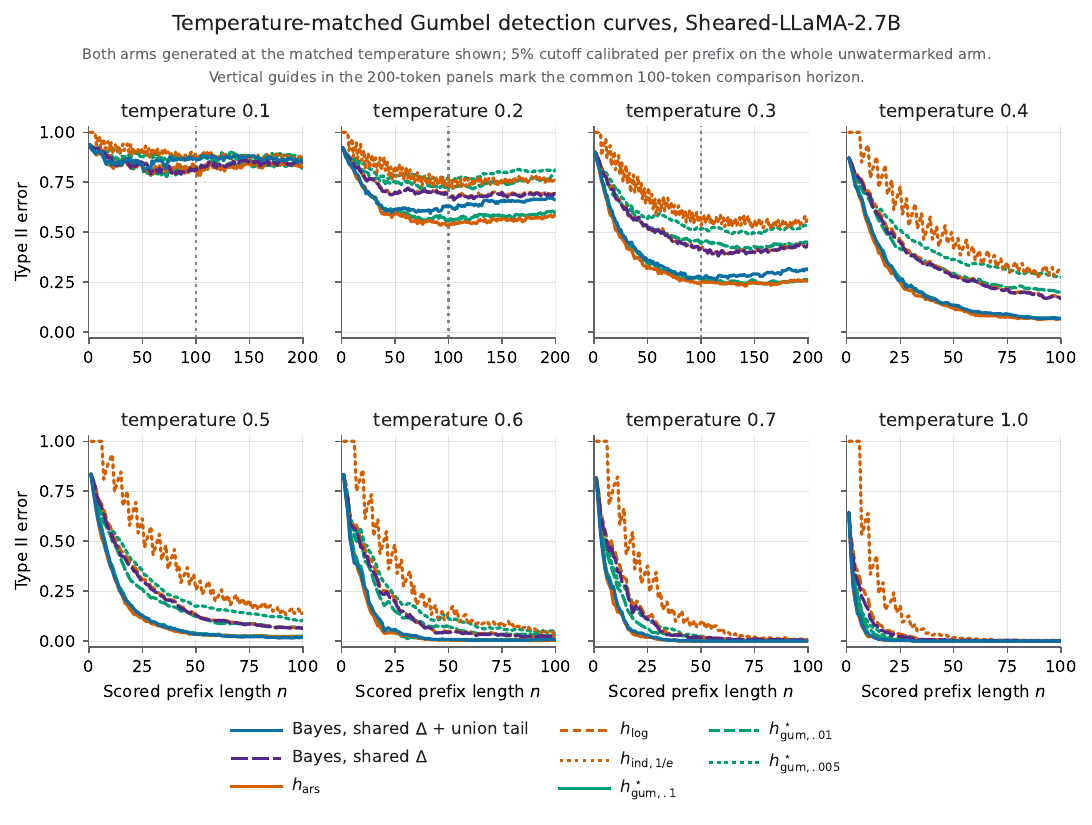}
\caption{Temperature-matched Type~II error for Sheared-LLaMA-2.7B, using the same axes, method labels, sample sizes, and empirical-null calibration protocol as Figure~\ref{fig:temperature-matched}.  Nonmonotonicity in the token count is also present for OPT-1.3B, with different magnitudes.}
\label{fig:temperature-matched-sheared}
\end{figure}

\FloatBarrier

\subsection{Hierarchical extensions}
\label{sec:hierarchical-extensions}

This section gives the model specifications, hyperpriors, persistence
experiments, and temperature-matched comparisons summarized in
Section~\ref{sec:hierarchical-summary}.  The two extensions modify the
conditional distributions of token-specific deficits or tail widths while
retaining the likelihood construction of Section~\ref{sec:bayes-factors}.

\subsubsection{Hierarchical model for token-specific deficits}
\label{sec:deficit-pooling}

The models in \eqref{eq:hier-shared} and \eqref{eq:hier-tokenwise} impose two
distinct assumptions on within-document variation: a common deficit $\Delta$ or
independent token-specific deficits.  We extend these models by specifying the
conditional distribution of $\Delta_t$ through a document-level hyperparameter
$\psi=(\mu,\kappa)$:
\begin{equation}
  X_t\mid\psi\ \stackrel{\mathrm{iid}}{\sim}\ \operatorname{Beta}(\kappa\mu,\kappa(1-\mu)),\qquad
  \Delta_t=\Delta_{\min}+(\Delta_{\max}-\Delta_{\min})X_t,\qquad \psi\sim\pi_\psi,
  \label{eq:deficit-pooling}
\end{equation}
Conditional on $\psi$, the deficits are independent and identically distributed;
integrating over $\psi$ induces marginal dependence and partial pooling across
token positions.  The document marginal likelihood therefore integrates the
product of the conditional token likelihoods, as represented in
Figure~\ref{fig:pooling-model}(a).  The hyperprior includes both earlier models as
special cases.  At the atom $(\mu,\kappa)=(1/2,2)$, the conditional distribution is
$\operatorname{Beta}(1,1)$, giving independent uniform deficits and hence the
tokenwise model.  We also include a degenerate component, denoted
$\kappa=\infty$, with $X_t=\mu$ at every position; a uniform prior on $\mu$
recovers the shared-deficit model analytically.
The hyperprior $\pi_\psi$ assigns equal mass to
$\kappa\in\{2,8,32,128,\infty\}$.  At $\kappa=2$, $\mu=1/2$; at each other
concentration, $\mu\mid\kappa\sim\Unif(0,1)$.
The reported experiments use 12 Gauss--Legendre nodes for $\mu$ and 64
Gauss--Jacobi nodes for each nondegenerate conditional Beta distribution, giving
49 hyperparameter components; the simulation seed is $20260902$.
Separate endpoint checks compare cumulative log Bayes factors on 120 independent
30-token null paths.  The maximum discrepancies are below $5\times10^{-6}$ for
the tokenwise endpoint and $10^{-7}$ for the shared endpoint; the latter check
uses matched 96-node quadrature for $\mu$, not the 12-node production grid.
These are numerical checks on the tested paths, not uniform error bounds.

The evaluated hierarchical deficit model
uses the equal-tail submodel $\alpha=\infty$ and $J=K$.  The $S,\alpha,J$ nodes in
Figure~\ref{fig:pooling-model}(a) describe the general model family; in this
experiment, $\alpha$ and $J$ are fixed, $S$ is redundant, and $\bm q_t$ is deterministic.

\begin{figure}[t]
\centering
\begin{minipage}[t]{0.48\textwidth}
\centering
\textbf{(a) Hierarchical deficit}\par
{\footnotesize Evaluated restriction: $\alpha=\infty,\ J=K$}\par\smallskip
\begin{tikzpicture}[font=\small]
  \node[priornode] (pis) at (-1.75,4.6) {$\pi_S$};
  \node[priornode] (pia) at (0,4.6) {$\pi_\alpha$};
  \node[priornode] (pij) at (1.4,4.6) {$\pi_J$};
  \node[priornode] (pipsi) at (2.8,4.6) {$\pi_\psi$};
  \node[latentnode] (b) at (-1.75,3.1) {$S$};
  \node[latentnode] (alpha) at (0,3.1) {$\alpha$};
  \node[latentnode] (j) at (1.4,3.1) {$J$};
  \node[latentnode] (psi) at (2.8,3.1) {$\psi$};
  \node[latentnode] (q) at (0,1.5) {$\bm q_t$};
  \node[latentnode] (delta) at (2.8,1.5) {$\Delta_t$};
  \node[detnode] (p) at (1.4,0) {$\bP_t$};
  \node[obsnode] (y) at (2.8,0) {$Y_t$};
  \draw[gmarrow] (pis) -- (b);
  \draw[gmarrow] (pia) -- (alpha);
  \draw[gmarrow] (pij) -- (j);
  \draw[gmarrow] (pipsi) -- (psi);
  \draw[gmarrow] (b) -- (q);
  \draw[gmarrow] (alpha) -- (q);
  \draw[gmarrow] (j) -- (q);
  \draw[gmarrow] (psi) -- (delta);
  \draw[gmarrow] (delta) -- (p);
  \draw[gmarrow] (q) -- (p);
  \draw[gmarrow] (p) -- (y);
  \begin{scope}[on background layer]
    \node[platenode,inner xsep=5mm,inner ysep=4mm,
      fit=(q)(delta)(p)(y)] (platePD) {};
  \end{scope}
  \node[font=\scriptsize,anchor=north east,yshift=-1pt]
    at (platePD.south east) {$t=1{:}n$};
\end{tikzpicture}
\end{minipage}\hfill
\begin{minipage}[t]{0.48\textwidth}
\centering
\textbf{(b) Hierarchical tail width}\par
{\footnotesize Evaluated restriction: $S=\mathrm{width},\ \alpha=\infty$}\par\smallskip
\begin{tikzpicture}[font=\small]
  \node[priornode] (pis) at (-1.75,4.6) {$\pi_S$};
  \node[priornode] (pia) at (0,4.6) {$\pi_\alpha$};
  \node[priornode] (piphi) at (1.4,4.6) {$\pi_\phi$};
  \node[priornode] (pid) at (2.8,4.6) {$\pi_\Delta$};
  \node[latentnode] (b) at (-1.75,3.1) {$S$};
  \node[latentnode] (alpha) at (0,3.1) {$\alpha$};
  \node[latentnode] (phi) at (1.4,3.1) {$\phi$};
  \node[latentnode] (delta) at (2.8,3.1) {$\Delta$};
  \node[latentnode] (q) at (0,1.5) {$\bm q_t$};
  \node[latentnode] (j) at (1.4,1.5) {$J_t$};
  \node[detnode] (p) at (1.4,0) {$\bP_t$};
  \node[obsnode] (y) at (2.8,0) {$Y_t$};
  \draw[gmarrow] (pis) -- (b);
  \draw[gmarrow] (pia) -- (alpha);
  \draw[gmarrow] (piphi) -- (phi);
  \draw[gmarrow] (pid) -- (delta);
  \draw[gmarrow] (b) -- (q);
  \draw[gmarrow] (alpha) -- (q);
  \draw[gmarrow] (phi) -- (j);
  \draw[gmarrow] (j) -- (q);
  \draw[gmarrow] (q) -- (p);
  \draw[gmarrow] (delta) -- (p);
  \draw[gmarrow] (p) -- (y);
  \begin{scope}[on background layer]
    \node[platenode,inner xsep=5mm,inner ysep=4mm,
      fit=(q)(j)(p)(y)] (platePW) {};
  \end{scope}
  \node[font=\scriptsize,anchor=north east,yshift=-1pt]
    at (platePW.south east) {$t=1{:}n$};
\end{tikzpicture}
\end{minipage}
\caption{General hierarchical models for token-specific deficits and tail widths, extending Figure~\ref{fig:hierarchical-model}.  The diagrams retain the general tail hierarchy; panel headings give the restrictions used in the evaluated submodels.  Under these restrictions, $\bm q_t$ is deterministic in both panels and $S$ is redundant in (a).  (a) The document-level hyperparameter $\psi=(\mu,\kappa)$ specifies the conditional distribution of $\Delta_t$.  The atom $(\mu,\kappa)=(1/2,2)$ gives independent uniform deficits; $\kappa=\infty$ gives a common deficit.  (b) The document-level hyperparameter $\phi=(c,\lambda)$ specifies the conditional distribution of $J_t$ in the width branch, with a common deficit $\Delta$.  At $\lambda=\infty$, $J_t=J_c$ at every position; at $\lambda=0$, the widths are independent and uniform on the ladder.  Plates enclose token-specific quantities.  Sections~\ref{sec:deficit-pooling} and~\ref{sec:width-pooling} specify the evaluated submodels.}
\label{fig:pooling-model}
\end{figure}

Varying $\kappa$ in a generative model with $\mu\sim\Unif(0,1)$ changes both the
marginal distribution and within-document dependence, since
$\operatorname{Var}(X_t)=\tfrac1{12}+\tfrac1{6(\kappa+1)}$.  A decrease in $\kappa$
increases marginal dispersion while reducing persistence, so this design does
not identify their separate effects on power.  Table~\ref{tab:deficit-pooling}
instead uses a fixed-marginal
Gaussian copula: $Z_t=\rho W+\sqrt{1-\rho^2}\,\varepsilon_t$, with $W$ drawn once per
document and $W,\varepsilon_1,\ldots,\varepsilon_n$ independent standard normals.
Taking $\Delta_t$ as the $\Unif(.001,.5)$ quantile of $\Phi(Z_t)$ gives exactly the
same marginal law at every persistence level.  The latent pair has correlation
$\rho^2$, and the transformed uniform variables have correlation
$(6/\pi)\arcsin(\rho^2/2)$, the reported intraclass correlation.
P1 is the document-shared design and P6 the tokenwise one; the horizons stop at
$n=60$ because power is near one by $n=100$.

Across the six persistence levels and three horizons, the hierarchical and
shared-deficit models differ in estimated Type~II error by at most $.0012$.
Exact McNemar tests use the paired decisions on the same documents.  After Holm
adjustment across the thirty-six cells of the two contrasts reported here, none
of the hierarchical-versus-shared differences is significant.  This does not
establish equivalence: no equivalence margin was prespecified, and the sample
sizes do not exclude small differences.  Relative to the tokenwise model, the
hierarchical model has a significantly lower Type~II error in one cell, P2 at
$n=60$, with a difference of $.0030$ and Holm $p=.010$.  The tokenwise model also
has higher estimated Type~II error than the shared model by up to $.0036$ under
persistent deficits.  Thus both the shared and hierarchical models improve on
the independent tokenwise specification in some persistent regimes, without a
detected advantage of the additional hyperparameter mixture over the shared model.

The marginal binomial standard errors in Table~\ref{tab:deficit-pooling} do not
describe paired differences.  The companion hierarchical-deficit summaries contain discordant counts and exact McNemar tests, with per-document decisions supplied separately; Section~\ref{sec:computational-materials} identifies both records.  The single 36-cell Holm family
includes every cell of both contrasts, not a separate family for each comparator.
It is comprehensive but post hoc: the Type~II estimates preceded the paired tests,
although no cell was omitted after the estimates were seen.

In both synthetic hierarchical-model experiments, maximum regret is computed at each horizon
against the best scored rule in each regime and then maximized over regimes.
Its standard error uses 2,000 paired document-bootstrap replicates, recomputing
both the best-rule envelope and maximum on each draw.  A maximum of differences
is not a binomial proportion.

An earlier experiment used the Beta generator in \eqref{eq:deficit-pooling},
with $\mu\sim\Unif(0,1)$, intermediate concentrations
$\kappa\in\{.5,2,8,32\}$, and horizons $n\in\{100,300,700\}$.
Its generator and numerical summary are included as a separately labeled historical record in the companion repository indexed in Section~\ref{sec:computational-materials}.  The hierarchical model had lower estimated Type~II error than
both endpoint models in all twelve concentration--horizon cells.  Because this
design changes the marginal deficit distribution together with dependence,
those differences cannot be attributed to persistence alone.  Its horizons
also differ from the fixed-marginal experiment reported here.

\begin{table}[htbp]
\centering
\scriptsize
\renewcommand{\arraystretch}{0.9}
\setlength{\tabcolsep}{3pt}%
\caption{Hierarchical deficit model under the fixed-marginal Gaussian-copula design of Section~\ref{sec:deficit-pooling}; P1 is document-shared and P6 tokenwise.  Entries are Type~II errors at nominal 5\% size with binomial Monte Carlo standard errors; bold marks observed column minima, ties included.  Maximum regret is the largest excess over the best scored rule across P1--P6, with a paired 2,000-replicate document-bootstrap standard error.  The $h^\star_{\mathrm{gum},\Delta_0}$ rows use the three published tunings of \citet{li2025framework}.}
\label{tab:deficit-pooling}
\begin{tabular}{@{}lrrrrrrcr@{}}
\toprule
& \multicolumn{6}{c}{Type~II error by deficit-persistence regime} & & Max \\
\cmidrule(lr){2-7}\cmidrule(lr){9-9}
Rule & P1 & P2 & P3 & P4 & P5 & P6 & & regret \\
Intraclass correlation & 1.000 & .894 & .622 & .346 & .086 & .000 & & \\
\midrule
\multicolumn{9}{@{}l}{\textit{Horizon $n=15$}}\\[1pt]
$h_{\mathrm{ars}}$ & \mcse{.1408}{.0049} & \mcse{.1266}{.0047} & \mcse{.1018}{.0043} & \mcse{.0632}{.0034} & \mcse{.0316}{.0025} & \mcse{.0258}{.0022} & & \mcse{.0322}{.0025} \\
$h_{\log}$ & \mcse{.4140}{.0070} & \mcse{.4022}{.0069} & \mcse{.4048}{.0069} & \mcse{.3954}{.0069} & \mcse{.3802}{.0069} & \mcse{.3848}{.0069} & & \mcse{.3726}{.0061} \\
$h_{\mathrm{ind},1/e}$ & \mcse{.5612}{.0070} & \mcse{.5638}{.0070} & \mcse{.5888}{.0070} & \mcse{.5976}{.0069} & \mcse{.6066}{.0069} & \mcse{.6146}{.0069} & & \mcse{.6024}{.0064} \\
$h^\star_{\mathrm{gum},.1}$ & \mcse{.2636}{.0062} & \mcse{.2444}{.0061} & \mcse{.2172}{.0058} & \mcse{.1676}{.0053} & \mcse{.1214}{.0046} & \mcse{.1090}{.0044} & & \mcse{.1478}{.0040} \\
$h^\star_{\mathrm{gum},.01}$ & \mcse{.1190}{.0046} & \mcse{.1000}{.0042} & \mcse{.0706}{.0036} & \mcse{.0392}{.0027} & \mcse{.0176}{.0019} & \mcse{.0142}{.0017} & & \mcse{.0032}{.0010} \\
$h^\star_{\mathrm{gum},.005}$ & \mcse{.1178}{.0046} & \mcse{.1000}{.0042} & \mcse{.0704}{.0036} & \mcse{.0394}{.0028} & \mcse{.0172}{.0018} & \mcse{.0142}{.0017} & & \mcse{.0020}{.0008} \\
Bayes, shared $\Delta$ & \mcse{.1164}{.0045} & \mcse{.0994}{.0042} & \mcse{.0698}{.0036} & \mcse{.0378}{.0027} & \mcse{.0176}{.0019} & \mcse{.0134}{.0016} & & \mcse{.0014}{.0006} \\
Bayes, tokenwise $\Delta$ & \mcse{.1166}{.0045} & \bestmcse{.0992}{.0042} & \bestmcse{.0696}{.0036} & \bestmcse{.0374}{.0027} & \bestmcse{.0162}{.0018} & \bestmcse{.0122}{.0016} & & \mcse{.0008}{.0006} \\
Bayes, hierarchical $\Delta$ & \bestmcse{.1158}{.0045} & \bestmcse{.0992}{.0042} & \mcse{.0702}{.0036} & \mcse{.0378}{.0027} & \mcse{.0164}{.0018} & \mcse{.0128}{.0016} & & \bestmcse{.0006}{.0006} \\
\addlinespace[4pt]
\multicolumn{9}{@{}l}{\textit{Horizon $n=30$}}\\[1pt]
$h_{\mathrm{ars}}$ & \mcse{.0924}{.0041} & \mcse{.0704}{.0036} & \mcse{.0378}{.0027} & \mcse{.0176}{.0019} & \mcse{.0016}{.0006} & \mcse{.0006}{.0003} & & \mcse{.0318}{.0025} \\
$h_{\log}$ & \mcse{.2924}{.0064} & \mcse{.2710}{.0063} & \mcse{.2596}{.0062} & \mcse{.2180}{.0058} & \mcse{.1598}{.0052} & \mcse{.1478}{.0050} & & \mcse{.2404}{.0054} \\
$h_{\mathrm{ind},1/e}$ & \mcse{.4368}{.0070} & \mcse{.4238}{.0070} & \mcse{.4288}{.0070} & \mcse{.4230}{.0070} & \mcse{.4052}{.0069} & \mcse{.4086}{.0070} & & \mcse{.4170}{.0056} \\
$h^\star_{\mathrm{gum},.1}$ & \mcse{.1778}{.0054} & \mcse{.1492}{.0050} & \mcse{.1148}{.0045} & \mcse{.0676}{.0036} & \mcse{.0190}{.0019} & \mcse{.0088}{.0013} & & \mcse{.1172}{.0045} \\
$h^\star_{\mathrm{gum},.01}$ & \mcse{.0988}{.0042} & \mcse{.0686}{.0036} & \mcse{.0392}{.0027} & \mcse{.0154}{.0017} & \mcse{.0012}{.0005} & \mcse{.0008}{.0004} & & \mcse{.0382}{.0030} \\
$h^\star_{\mathrm{gum},.005}$ & \mcse{.0628}{.0034} & \bestmcse{.0434}{.0029} & \mcse{.0204}{.0020} & \mcse{.0062}{.0011} & \bestmcse{.0008}{.0004} & \mcse{.0002}{.0002} & & \mcse{.0022}{.0009} \\
Bayes, shared $\Delta$ & \mcse{.0608}{.0034} & \mcse{.0442}{.0029} & \bestmcse{.0192}{.0019} & \mcse{.0072}{.0012} & \bestmcse{.0008}{.0004} & \mcse{.0002}{.0002} & & \mcse{.0012}{.0005} \\
Bayes, tokenwise $\Delta$ & \mcse{.0622}{.0034} & \mcse{.0450}{.0029} & \mcse{.0202}{.0020} & \bestmcse{.0060}{.0011} & \bestmcse{.0008}{.0004} & \bestmcse{.0000}{.0000} & & \mcse{.0016}{.0006} \\
Bayes, hierarchical $\Delta$ & \bestmcse{.0606}{.0034} & \mcse{.0444}{.0029} & \mcse{.0196}{.0020} & \mcse{.0064}{.0011} & \bestmcse{.0008}{.0004} & \mcse{.0002}{.0002} & & \bestmcse{.0010}{.0006} \\
\addlinespace[4pt]
\multicolumn{9}{@{}l}{\textit{Horizon $n=60$}}\\[1pt]
$h_{\mathrm{ars}}$ & \mcse{.0574}{.0033} & \mcse{.0404}{.0028} & \mcse{.0164}{.0018} & \mcse{.0020}{.0006} & \bestmcse{.0000}{.0000} & \bestmcse{.0000}{.0000} & & \mcse{.0278}{.0023} \\
$h_{\log}$ & \mcse{.2122}{.0058} & \mcse{.1918}{.0056} & \mcse{.1556}{.0051} & \mcse{.1030}{.0043} & \mcse{.0434}{.0029} & \mcse{.0232}{.0021} & & \mcse{.1826}{.0050} \\
$h_{\mathrm{ind},1/e}$ & \mcse{.3120}{.0066} & \mcse{.2860}{.0064} & \mcse{.2748}{.0063} & \mcse{.2298}{.0059} & \mcse{.1740}{.0054} & \mcse{.1532}{.0051} & & \mcse{.2824}{.0059} \\
$h^\star_{\mathrm{gum},.1}$ & \mcse{.1206}{.0046} & \mcse{.1012}{.0043} & \mcse{.0580}{.0033} & \mcse{.0158}{.0018} & \mcse{.0012}{.0005} & \mcse{.0002}{.0002} & & \mcse{.0910}{.0037} \\
$h^\star_{\mathrm{gum},.01}$ & \mcse{.0528}{.0032} & \mcse{.0364}{.0026} & \mcse{.0132}{.0016} & \mcse{.0012}{.0005} & \bestmcse{.0000}{.0000} & \bestmcse{.0000}{.0000} & & \mcse{.0232}{.0021} \\
$h^\star_{\mathrm{gum},.005}$ & \mcse{.0480}{.0030} & \mcse{.0340}{.0026} & \mcse{.0124}{.0016} & \mcse{.0010}{.0004} & \bestmcse{.0000}{.0000} & \bestmcse{.0000}{.0000} & & \mcse{.0184}{.0017} \\
Bayes, shared $\Delta$ & \bestmcse{.0296}{.0024} & \bestmcse{.0168}{.0018} & \bestmcse{.0054}{.0010} & \bestmcse{.0008}{.0004} & \bestmcse{.0000}{.0000} & \bestmcse{.0000}{.0000} & & \bestmcse{.0000}{.0002} \\
Bayes, tokenwise $\Delta$ & \mcse{.0328}{.0025} & \mcse{.0204}{.0020} & \mcse{.0064}{.0011} & \bestmcse{.0008}{.0004} & \bestmcse{.0000}{.0000} & \bestmcse{.0000}{.0000} & & \mcse{.0036}{.0008} \\
Bayes, hierarchical $\Delta$ & \mcse{.0304}{.0024} & \mcse{.0174}{.0018} & \bestmcse{.0054}{.0010} & \bestmcse{.0008}{.0004} & \bestmcse{.0000}{.0000} & \bestmcse{.0000}{.0000} & & \mcse{.0008}{.0004} \\
\bottomrule
\end{tabular}
\end{table}

On the temperature-matched data, the hierarchical deficit model has higher
estimated AUC than the shared-deficit model in all eight cells, with differences
from $.0004$ to $.0031$ and mean $.0015$ (Table~\ref{tab:pooled-real}).  Unlike the
synthetic persistence experiments, these comparisons evaluate deficits arising
from the language model's context-dependent next-token distributions.  The
differences are one to two orders of magnitude smaller than the union-tail
model's mean AUC improvement of $.0328$ over the same baseline on the same
documents.  The hierarchical-versus-shared contrast was not included in the
Holm family and is therefore reported descriptively, without a claim of
multiplicity-adjusted significance.

\begin{table}[htbp]
\centering
\scriptsize
\setlength{\tabcolsep}{2.6pt}%
\caption{Hierarchical versus shared-deficit models on the temperature-matched data: AUC at 100 tokens for 2500 documents per arm, with prompt-cluster bootstrap standard errors.  The difference row is hierarchical minus shared and is descriptive; this contrast is outside the Holm family of Section~\ref{sec:temperature-matched}.  Bold marks within-block maxima, ties included; reference scores form a separate block.}
\label{tab:pooled-real}
\begin{tabular}{@{}lrrrrrrrr@{}}
\toprule
& \multicolumn{4}{c}{OPT-1.3B} & \multicolumn{4}{c}{Sheared-LLaMA-2.7B} \\
\cmidrule(lr){2-5}\cmidrule(lr){6-9}
Rule & $T{=}.2$ & $.3$ & $.4$ & $.5$ & $T{=}.2$ & $.3$ & $.4$ & $.5$ \\
\midrule
Bayes, shared $\Delta$ & \mcse{.7828}{.0063} & \mcse{.8906}{.0045} & \mcse{.9594}{.0028} & \mcse{.9871}{.0014} & \mcse{.7814}{.0058} & \mcse{.8903}{.0046} & \mcse{.9553}{.0027} & \mcse{.9824}{.0017} \\
Bayes, hierarchical $\Delta$ & \bestmcse{.7837}{.0063} & \bestmcse{.8921}{.0045} & \bestmcse{.9599}{.0028} & \bestmcse{.9876}{.0014} & \bestmcse{.7845}{.0058} & \bestmcse{.8926}{.0046} & \bestmcse{.9579}{.0027} & \bestmcse{.9833}{.0017} \\
\addlinespace
Difference & $+$\mcse{.0009}{.0006} & $+$\mcse{.0015}{.0004} & $+$\mcse{.0005}{.0005} & $+$\mcse{.0004}{.0001} & $+$\mcse{.0031}{.0007} & $+$\mcse{.0022}{.0008} & $+$\mcse{.0025}{.0005} & $+$\mcse{.0009}{.0005} \\
\addlinespace
\multicolumn{9}{@{}l}{\textit{Reference scores}}\\
$h_{\mathrm{ars}}$ & \bestmcse{.8499}{.0051} & \bestmcse{.9465}{.0031} & \bestmcse{.9824}{.0020} & \bestmcse{.9963}{.0009} & \bestmcse{.8467}{.0051} & \bestmcse{.9373}{.0036} & \bestmcse{.9829}{.0016} & \bestmcse{.9931}{.0012} \\
$h_{\log}$ & \mcse{.7825}{.0063} & \mcse{.8904}{.0045} & \mcse{.9585}{.0028} & \mcse{.9871}{.0014} & \mcse{.7807}{.0059} & \mcse{.8900}{.0046} & \mcse{.9544}{.0028} & \mcse{.9821}{.0018} \\
$h_{\mathrm{ind},1/e}$ & \mcse{.7531}{.0066} & \mcse{.8614}{.0052} & \mcse{.9320}{.0037} & \mcse{.9783}{.0019} & \mcse{.7573}{.0063} & \mcse{.8616}{.0052} & \mcse{.9364}{.0032} & \mcse{.9691}{.0023} \\
$h^\star_{\mathrm{gum},.1}$ & \mcse{.8459}{.0053} & \mcse{.9440}{.0032} & \mcse{.9819}{.0020} & \mcse{.9961}{.0008} & \mcse{.8409}{.0052} & \mcse{.9331}{.0037} & \mcse{.9810}{.0017} & \mcse{.9922}{.0012} \\
$h^\star_{\mathrm{gum},.01}$ & \mcse{.7836}{.0058} & \mcse{.8859}{.0046} & \mcse{.9563}{.0029} & \mcse{.9873}{.0015} & \mcse{.7649}{.0061} & \mcse{.8744}{.0048} & \mcse{.9486}{.0029} & \mcse{.9825}{.0017} \\
$h^\star_{\mathrm{gum},.005}$ & \mcse{.7577}{.0061} & \mcse{.8576}{.0051} & \mcse{.9378}{.0033} & \mcse{.9785}{.0018} & \mcse{.7376}{.0062} & \mcse{.8470}{.0052} & \mcse{.9278}{.0035} & \mcse{.9734}{.0020} \\
\bottomrule
\end{tabular}
\end{table}

The AUC standard errors come from the joint prompt-cluster resampling of
Section~\ref{sec:temperature-matched}: a binomial standard error does not apply to
an AUC, and ordinary DeLong inference does not account for the prompt pairing.
The same resamples provide the paired-difference standard errors, which are
several times smaller than the marginal errors because both rules score the
same documents.

\FloatBarrier
\subsubsection{Hierarchical model for token-specific tail widths}
\label{sec:width-pooling}

We next specify a hierarchical model for token-specific tail widths.  The
width models of Sections~\ref{sec:tail-width} and~\ref{sec:hierarchy} use a fixed
uniform prior on the ladder, with either a
common width or independent widths at each token.  Here the conditional
distribution of $J_t$ depends on a document-level hyperparameter
$\phi=(c,\lambda)$:
\begin{equation}
  \Prb(J_t=J_k\mid\phi)\;\propto\;e^{-\lambda|k-c|},\qquad
  J_t\mid\phi\ \text{i.i.d.},\qquad \phi\sim\pi_\phi,
  \label{eq:width-pooling}
\end{equation}
The deficit remains common to the document, so this extension changes only the
width specification; see Figure~\ref{fig:pooling-model}(b).  Integrating over
$\phi$ induces marginal dependence and partial pooling of the token-specific
widths.  The hyperprior $\pi_\phi$ assigns equal mass to the five concentrations
$\lambda\in\{0,.5,1.5,4,\infty\}$ and, conditional on each positive
concentration, equal mass to the five centers.  The $\lambda=0$ component occurs
once because its distribution does not depend on the center.  The resulting
$21$ values of $\phi$, combined with $96$ deficit nodes, give $2{,}016$
components.  The simulation seed is $20260903$.  The degenerate component
$\lambda=\infty$ assigns a common width to the document; integrating over its
centers reproduces the fixed uniform ladder model to $3.6\times10^{-15}$ on the
checked paths.  At $\lambda=0$, widths are independent and uniform on the
ladder, but $\Delta$ remains common.  This differs from the tokenwise union
model in \eqref{eq:token-bf-state}, which also redraws the deficit at each position.

The evaluated hierarchical width model fixes $S=\mathrm{width}$ and
$\alpha=\infty$.  These nodes in Figure~\ref{fig:pooling-model}(b) describe the
general model family, while $\bm q_t$ is deterministic given $J_t$ in the
evaluated submodel.  It is compared with the shared tail-width model, which
draws one width from the same ladder for the whole document.

Table~\ref{tab:width-pooling} evaluates within-document dependence using a
Gaussian copula on the ladder.  The generative family differs from the
conditional model in \eqref{eq:width-pooling}, allowing assessment beyond the
detector's assumed family.  The copula also holds the marginal width
distribution fixed as persistence varies.

Specifically, $Z_t=\rho W+\sqrt{1-\rho^2}\,\varepsilon_t$, with $W$ drawn once per
document and $W,\varepsilon_1,\ldots,\varepsilon_n$ independent standard normals, is mapped through $\Phi(Z_t)$ to
the uniform ladder's quantiles.  The factor loadings $1$, $.95$, $.8$, $.6$, $0$
for Q1--Q5 correspond to latent pair correlations $\rho^2$ of $1$, $.9025$, $.64$,
$.36$, $0$, not correlations of $\rho$.  Only Q1 holds width fixed within a
document; Q2--Q4 allow within-document variation and Q5 specifies independent
token-specific widths.  The hierarchical model includes Q1 and Q5 at
$\lambda=\infty$ and $\lambda=0$; the shared-width model includes Q1 but not Q5.
Realized width persistence,
the between-document share of log-width variance, is $1.000$, $.854$, $.594$,
$.323$, and $.000$, respectively.

Both models use the same five-rung support, so their comparison isolates the
dependence specification.  Exact McNemar tests use paired decisions on the same
documents, with Holm adjustment across fifteen regime--horizon cells.  Two
differences are significant, both under the independent-width regime Q5: the
hierarchical model reduces estimated Type~II error relative to the shared-width
model by $.0038$ at $n=100$ ($p=.001$) and $.0024$ at $n=300$ ($p=.007$).
The common-width assumption is misspecified under Q5, whereas the hierarchical
model includes its width distribution.  Point estimates also favor the
hierarchical model at Q3 and Q4, but the corresponding unadjusted rejections
do not remain significant after Holm adjustment ($p=.057$ and $.112$).
No difference is significant at Q1, where the shared-width model is correctly
specified, or at Q2, where widths vary but remain strongly dependent.
Every absolute difference is at most $.0038$, and none is significant at
$n=700$.  These nonsignificant comparisons do not establish equivalence.

As in the deficit sweep, marginal binomial standard errors do not describe the
paired differences.  The companion hierarchical-width summaries contain discordant counts and exact McNemar tests, with per-document decisions supplied separately; Section~\ref{sec:computational-materials} identifies both records.  The 15-cell Holm family includes
every regime and horizon but was added after the Type~II estimates, so it too
is comprehensive but post hoc.

The additional hierarchical structure produces smaller observed changes in
Type~II error than the residual-tail specification in these experiments.
No hierarchical-versus-shared deficit comparison is significant in
Section~\ref{sec:deficit-pooling}, whereas extending the equal-tail model to a
width ladder in Table~\ref{tab:tail-widths} changes estimated Type~II error by an
order of magnitude more than the width-dependence specification considered here.

\begin{table}[htbp]
\centering
\scriptsize
\renewcommand{\arraystretch}{0.9}
\setlength{\tabcolsep}{3pt}%
\caption{Hierarchical width model under the fixed-marginal Gaussian-copula design of Section~\ref{sec:width-pooling}; Q1 holds one width per document and Q5 redraws width independently by token.  Entries are Type~II errors at nominal 5\% size with binomial Monte Carlo standard errors; bold marks observed column minima, ties included.  Maximum regret is the largest excess over the best scored rule across Q1--Q5, with a paired 2,000-replicate document-bootstrap standard error.  The $h^\star_{\mathrm{gum},\Delta_0}$ rows use the three published tunings of \citet{li2025framework}.}
\label{tab:width-pooling}
\begin{tabular}{@{}lrrrrrcr@{}}
\toprule
& \multicolumn{5}{c}{Type~II error by width-persistence regime} & & Max \\
\cmidrule(lr){2-6}\cmidrule(lr){8-8}
Rule & Q1 & Q2 & Q3 & Q4 & Q5 & & regret \\
\midrule
\multicolumn{8}{@{}l}{\textit{Horizon $n=100$}}\\[1pt]
$h_{\mathrm{ars}}$ & \mcse{.0716}{.0036} & \mcse{.0648}{.0035} & \mcse{.0698}{.0036} & \mcse{.0688}{.0036} & \mcse{.0596}{.0033} & & \mcse{.0204}{.0019} \\
$h_{\log}$ & \mcse{.1530}{.0051} & \mcse{.1704}{.0053} & \mcse{.1702}{.0053} & \mcse{.1704}{.0053} & \mcse{.1648}{.0052} & & \mcse{.1246}{.0038} \\
$h_{\mathrm{ind},1/e}$ & \mcse{.2192}{.0059} & \mcse{.2370}{.0060} & \mcse{.2276}{.0059} & \mcse{.2332}{.0060} & \mcse{.2306}{.0060} & & \mcse{.1904}{.0045} \\
$h^\star_{\mathrm{gum},.1}$ & \mcse{.0968}{.0042} & \mcse{.0918}{.0041} & \mcse{.1004}{.0043} & \mcse{.1000}{.0042} & \mcse{.0954}{.0042} & & \mcse{.0552}{.0029} \\
$h^\star_{\mathrm{gum},.01}$ & \mcse{.1408}{.0049} & \mcse{.1106}{.0044} & \mcse{.0796}{.0038} & \mcse{.0634}{.0034} & \mcse{.0494}{.0031} & & \mcse{.0794}{.0041} \\
$h^\star_{\mathrm{gum},.005}$ & \mcse{.1814}{.0054} & \mcse{.1492}{.0050} & \mcse{.0992}{.0042} & \mcse{.0640}{.0035} & \mcse{.0426}{.0029} & & \mcse{.1200}{.0048} \\
Bayes, shared $\Delta$ (equal tail) & \mcse{.1042}{.0043} & \mcse{.1032}{.0043} & \mcse{.0936}{.0041} & \mcse{.0802}{.0038} & \mcse{.0662}{.0035} & & \mcse{.0506}{.0032} \\
\quad $+$ shared tail width & \bestmcse{.0614}{.0034} & \mcse{.0532}{.0032} & \mcse{.0572}{.0033} & \mcse{.0512}{.0031} & \mcse{.0440}{.0029} & & \mcse{.0038}{.0008} \\
\quad $+$ hierarchical tail width & \mcse{.0616}{.0034} & \bestmcse{.0526}{.0032} & \bestmcse{.0544}{.0032} & \bestmcse{.0484}{.0030} & \bestmcse{.0402}{.0028} & & \bestmcse{.0002}{.0007} \\
\addlinespace[4pt]
\multicolumn{8}{@{}l}{\textit{Horizon $n=300$}}\\[1pt]
$h_{\mathrm{ars}}$ & \mcse{.0340}{.0026} & \mcse{.0284}{.0023} & \mcse{.0360}{.0026} & \mcse{.0336}{.0025} & \mcse{.0292}{.0024} & & \mcse{.0174}{.0017} \\
$h_{\log}$ & \mcse{.0946}{.0041} & \mcse{.0912}{.0041} & \mcse{.0980}{.0042} & \mcse{.1004}{.0043} & \mcse{.0950}{.0041} & & \mcse{.0842}{.0034} \\
$h_{\mathrm{ind},1/e}$ & \mcse{.1256}{.0047} & \mcse{.1274}{.0047} & \mcse{.1274}{.0047} & \mcse{.1364}{.0049} & \mcse{.1250}{.0047} & & \mcse{.1202}{.0044} \\
$h^\star_{\mathrm{gum},.1}$ & \mcse{.0508}{.0031} & \mcse{.0490}{.0031} & \mcse{.0584}{.0033} & \mcse{.0548}{.0032} & \mcse{.0498}{.0031} & & \mcse{.0386}{.0022} \\
$h^\star_{\mathrm{gum},.01}$ & \mcse{.0448}{.0029} & \mcse{.0318}{.0025} & \mcse{.0308}{.0024} & \mcse{.0226}{.0021} & \mcse{.0228}{.0021} & & \mcse{.0226}{.0022} \\
$h^\star_{\mathrm{gum},.005}$ & \mcse{.0890}{.0040} & \mcse{.0582}{.0033} & \mcse{.0374}{.0027} & \mcse{.0214}{.0020} & \mcse{.0192}{.0019} & & \mcse{.0668}{.0037} \\
Bayes, shared $\Delta$ (equal tail) & \mcse{.0572}{.0033} & \mcse{.0514}{.0031} & \mcse{.0498}{.0031} & \mcse{.0386}{.0027} & \mcse{.0270}{.0023} & & \mcse{.0350}{.0023} \\
\quad $+$ shared tail width & \bestmcse{.0222}{.0021} & \mcse{.0178}{.0019} & \mcse{.0220}{.0021} & \mcse{.0172}{.0018} & \mcse{.0172}{.0018} & & \mcse{.0024}{.0006} \\
\quad $+$ hierarchical tail width & \mcse{.0234}{.0021} & \bestmcse{.0176}{.0019} & \bestmcse{.0214}{.0020} & \bestmcse{.0162}{.0018} & \bestmcse{.0148}{.0017} & & \bestmcse{.0012}{.0006} \\
\addlinespace[4pt]
\multicolumn{8}{@{}l}{\textit{Horizon $n=700$}}\\[1pt]
$h_{\mathrm{ars}}$ & \mcse{.0172}{.0018} & \mcse{.0156}{.0018} & \mcse{.0208}{.0020} & \mcse{.0188}{.0019} & \mcse{.0156}{.0018} & & \mcse{.0120}{.0012} \\
$h_{\log}$ & \mcse{.0624}{.0034} & \mcse{.0544}{.0032} & \mcse{.0650}{.0035} & \mcse{.0662}{.0035} & \mcse{.0624}{.0034} & & \mcse{.0594}{.0026} \\
$h_{\mathrm{ind},1/e}$ & \mcse{.0844}{.0039} & \mcse{.0818}{.0039} & \mcse{.0882}{.0040} & \mcse{.0872}{.0040} & \mcse{.0884}{.0040} & & \mcse{.0836}{.0032} \\
$h^\star_{\mathrm{gum},.1}$ & \mcse{.0336}{.0025} & \mcse{.0270}{.0023} & \mcse{.0364}{.0026} & \mcse{.0372}{.0027} & \mcse{.0344}{.0026} & & \mcse{.0304}{.0020} \\
$h^\star_{\mathrm{gum},.01}$ & \mcse{.0152}{.0017} & \mcse{.0112}{.0015} & \mcse{.0162}{.0018} & \mcse{.0134}{.0016} & \mcse{.0116}{.0015} & & \mcse{.0068}{.0009} \\
$h^\star_{\mathrm{gum},.005}$ & \mcse{.0268}{.0023} & \mcse{.0156}{.0018} & \mcse{.0150}{.0017} & \mcse{.0100}{.0014} & \mcse{.0092}{.0014} & & \mcse{.0176}{.0020} \\
Bayes, shared $\Delta$ (equal tail) & \mcse{.0344}{.0026} & \mcse{.0306}{.0024} & \mcse{.0312}{.0025} & \mcse{.0192}{.0019} & \mcse{.0132}{.0016} & & \mcse{.0252}{.0019} \\
\quad $+$ shared tail width & \bestmcse{.0092}{.0014} & \bestmcse{.0078}{.0012} & \mcse{.0102}{.0014} & \mcse{.0072}{.0012} & \mcse{.0054}{.0010} & & \bestmcse{.0006}{.0003} \\
\quad $+$ hierarchical tail width & \mcse{.0098}{.0014} & \bestmcse{.0078}{.0012} & \bestmcse{.0096}{.0014} & \bestmcse{.0068}{.0012} & \bestmcse{.0048}{.0010} & & \bestmcse{.0006}{.0005} \\
\bottomrule
\end{tabular}
\end{table}

\clearpage
\subsection{Computational-materials index}
\label{sec:computational-materials}

\begingroup
\small
\raggedright
The companion repository contains this technical report, its source and
bibliography, and the supporting numerical records and code.  Its README maps each file to the analyses below and explains
its fields; a file index and SHA-256 checksums identify the exact packaged
versions.  The numerical results needed to interpret the conclusions are
reported in the manuscript and supplement.  The machine-readable files provide
additional numerical detail and permit independent checks of the tabulated
results.

Except for the historical record noted below, numerical files retain their
relative paths under \path{results/bayesian_paper_benchmark/} in the companion
repository.  The following index gives basenames; the README gives full paths.
\begin{itemize}
  \item \emph{Synthetic comparisons and posterior summaries}
  (Sections~\ref{sec:shared-sensitivity}, \ref{sec:family},
  \ref{sec:union-weight-sweep}, and~\ref{sec:synthetic-details}):
  \path{paired_comparisons.json}, \path{regime_sweep.json},
  \path{tail_regime_sweep.json}, \path{delta_learning.json}, and
  \path{block_posterior.json}, together with benchmark error-rate tables
  and paired rejection indicators.

  \item \emph{Contamination and numerical validation}
  (Sections~\ref{sec:contamination-supplement}, \ref{sec:stable-computation},
  and~\ref{sec:numerical-checks}): \path{contamination_rho_diagnostic.json},
  \path{benchmark_summary.json}, \path{clean_quadrature_sensitivity.json},
  \path{clean_union_quadrature_sensitivity.json}, and
  \path{contamination_union_quadrature_sensitivity.json}, with the other
  contamination summaries and interpolation checks identified in the README.

  \item \emph{Reanalysis of the archived benchmark data of Li et al.}
  (Section~\ref{sec:released-analysis}): \path{tail_width_estimate.json},
  \path{real_data_summary.json}, \path{real_data_results.csv}, and
  \path{real_data_indicators.npz}, together with the full prefix curves and
  repetition diagnostics.  These files contain this study's reanalysis.
  Li et al.'s original benchmark tensors and reference-score JSON arrays
  remain separately available from their WatermarkFramework repository at
  the commit cited in \citet{li2024watermarkframework}; the README identifies
  the required files and their verified hashes.

  \item \emph{Temperature-matched comparisons}
  (Sections~\ref{sec:temperature-matched} and~\ref{sec:matched-details}):
  \path{delong_tests_bootstrap.json} and \path{delong_tests_holdout.json},
  with the mixing-weight, wider-prior, empirical-deficit, and prefix-curve
  summaries identified in the README.  Despite their historical filenames,
  the reported inference uses the paired prompt-cluster bootstrap described
  in the manuscript, not the independent-class DeLong variance formula.
  The eight stored generation archives and extension-prompt tables are
  included under \path{real_model/temperature_matched/} within the results
  directory; the README lists their exact paths.

  \item \emph{Hierarchical extensions}
  (Section~\ref{sec:hierarchical-extensions}):
  \path{deficit_persistence_sweep.json} and
  \path{width_persistence_sweep.json}, with the corresponding
  \path{deficit_persistence_indicators.npz} and
  \path{width_persistence_indicators.npz}.  The earlier Beta-generator script
  and summary are kept separately under \path{historical/6551ee5/}; they are
  not the fixed-marginal Gaussian-copula experiment reported in the tables.

  \item \emph{Implementation and provenance}: the \path{code/} directory
  contains the analysis scripts, tests, dependency specifications, and the
  prompt-construction script \path{build_prompts.py}.
  The current-source snapshot \path{provenance.json} records file hashes and
  the manifest-writer environment, not a generation-time certificate.
\end{itemize}

The repository README distinguishes stored-result checks from regeneration.
External model checkpoints, tokenizers, C4 records, and the reference
implementation must be obtained from their cited sources for model generation;
they are not redistributed here.  Stored-array analyses also require the
checksum-verified benchmark tensors for prompt alignment.  The historical-generation qualifications in
Section~\ref{sec:released-replay}, including the unverified independence of raw
random streams, continue to apply.  Packaging the records does not strengthen
those provenance claims.  Only local directory names in public-facing metadata
are omitted; numerical values and original generation records are unchanged.
\par
\endgroup

\end{document}